\documentclass{article}

\PassOptionsToPackage{table}{xcolor}

\usepackage{microtype}
\usepackage{graphicx}
\usepackage{subcaption}
\usepackage{booktabs} 

\usepackage{hyperref}

\usepackage[preprint]{icml2026}

\usepackage{amsmath}
\usepackage{amssymb}
\usepackage{mathtools}
\usepackage{amsthm}

\usepackage[capitalize,noabbrev]{cleveref}

\usepackage{etoc}
\usepackage{multicol}
\theoremstyle{plain}
\newtheorem{theorem}{Theorem}[section]
\newtheorem{proposition}[theorem]{Proposition}
\newtheorem{lemma}[theorem]{Lemma}

\theoremstyle{definition}

\theoremstyle{remark}
\newtheorem{remark}[theorem]{Remark}

\usepackage{tikz}
\usepackage{float}
\usepackage{rotating}
\usepackage{afterpage}
\usepackage{stfloats}
\usepackage{listings}
\usepackage[T1]{fontenc}
\usepackage{plex-mono} 
\definecolor{codegreen}{rgb}{0,0.6,0}
\definecolor{codegray}{rgb}{0.5,0.5,0.5}
\definecolor{codepurple}{rgb}{0.58,0,0.82}
\definecolor{backcolour}{rgb}{0.96,0.96,0.96}

\usepackage{amsmath,amsfonts,bm}

\def\eqref#1{equation~\ref{#1}}

\def\1{\bm{1}}

\def\vzero{{\bm{0}}}

\def\va{{\bm{a}}}
\def\vb{{\bm{b}}}

\def\ve{{\bm{e}}}
\def\vf{{\bm{f}}}

\def\vo{{\bm{o}}}

\def\vq{{\bm{q}}}

\def\vs{{\bm{s}}}
\def\vt{{\bm{t}}}

\def\vv{{\bm{v}}}
\def\vw{{\bm{w}}}
\def\vx{{\bm{x}}}

\def\mA{{\bm{A}}}

\def\mK{{\bm{K}}}

\def\mM{{\bm{M}}}

\def\mV{{\bm{V}}}
\def\mW{{\bm{W}}}

\def\mSigma{{\bm{\Sigma}}}

\DeclareMathAlphabet{\mathsfit}{\encodingdefault}{\sfdefault}{m}{sl}
\SetMathAlphabet{\mathsfit}{bold}{\encodingdefault}{\sfdefault}{bx}{n}

\def\gR{{\mathcal{R}}}

\newcommand{\E}{\mathbb{E}}

\newcommand{\R}{\mathbb{R}}

\newcommand{\Var}{\mathrm{Var}}

\newcommand{\Cov}{\mathrm{Cov}}

\DeclareMathOperator{\sign}{sign}

\lstdefinestyle{mystyle}{
    backgroundcolor=\color{backcolour},
    commentstyle=\color{codegreen}\itshape,
    keywordstyle=\color{magenta}\bfseries,
    numberstyle=\tiny\color{codegray},
    stringstyle=\color{codepurple},
    basicstyle=\ttfamily\footnotesize,
    breakatwhitespace=false,
    breaklines=true,
    captionpos=b,
    keepspaces=true,
    numbers=left,
    numbersep=5pt,
    showspaces=false,
    showstringspaces=false,
    showtabs=false,
    tabsize=4,
    frame=lines,
    rulecolor=\color{codegray}
}
\usepackage{multirow}
\usepackage{kotex}
\usepackage{arydshln}

\definecolor{linkaccent}{HTML}{DA7758}
\hypersetup{linkcolor=linkaccent}

\icmltitlerunning{Real-Time Visual Attribution Streaming in Thinking Model}

\begin{document}

\twocolumn[
  \icmltitle{Real-Time Visual Attribution Streaming in Thinking Model}

  \icmlsetsymbol{yonsei}{\textsc{y}}
  \icmlsetsymbol{amazon}{\textsc{a}}

  \begin{icmlauthorlist}
    \icmlauthor{Seil Kang}{yonsei}
    \icmlauthor{Woojung Han}{yonsei}
    \icmlauthor{Junhyeok Kim}{yonsei}
    \icmlauthor{Jinyeong Kim}{yonsei}
    \icmlauthor{Youngeun Kim}{amazon}
    \icmlauthor{Seong Jae Hwang}{yonsei}
  \end{icmlauthorlist}

  \icmlcorrespondingauthor{Seil Kang}{seil@yonsei.ac.kr}

  \icmlkeywords{Machine Learning, ICML}

  \vskip 0.3in
]



\printAffiliationsAndNotice{\textsuperscript{\textsc{y}}Yonsei University. \textsuperscript{\textsc{a}}Amazon.}

\begin{abstract}
We present an amortized framework for real-time \emph{visual attribution streaming} in multimodal thinking models. When these models generate code from a screenshot or solve math problems from images, their long reasoning traces should be grounded in visual evidence. However, verifying this reliance is challenging: faithful causal methods require costly repeated backward passes or perturbations, while raw attention maps offer instant access, they lack causal validity. To resolve this, we introduce an amortized approach that learns to estimate the causal effects of semantic regions directly from the rich signals encoded in attention features. Across five diverse benchmarks and four thinking models, our approach achieves faithfulness comparable to exhaustive causal methods while enabling visual attribution streaming, where users observe grounding evidence as the model reasons, not after. Our results demonstrate that real-time, faithful attribution in multimodal thinking models is achievable through lightweight learning, not brute-force computation.
\end{abstract}
\vspace{-2pt}

\begin{figure}[t!]
\centering
\includegraphics[width=\linewidth]{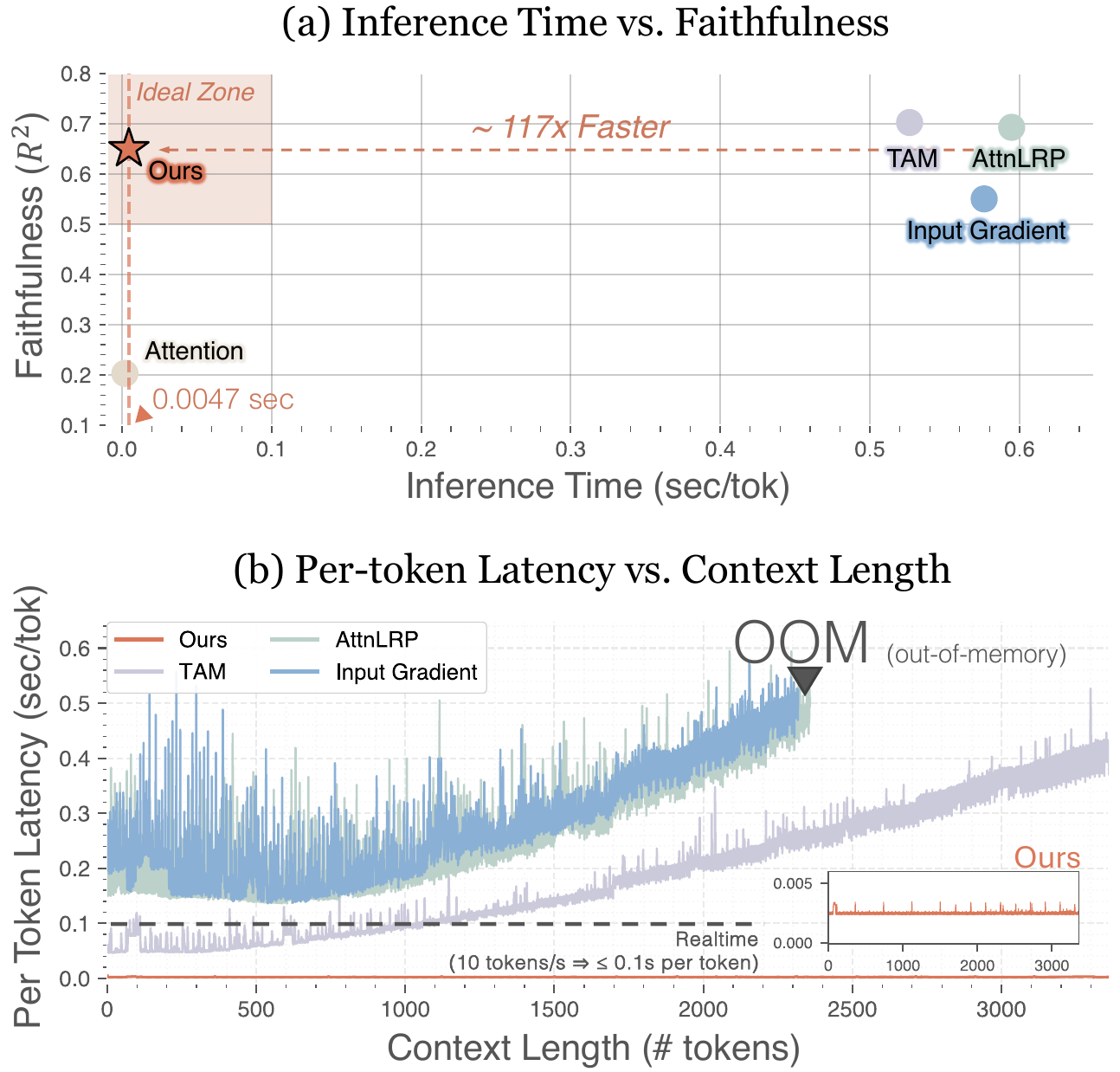}
\caption{\textbf{Faithfulness-Efficiency Trade-off.} (a) Baseline methods compromise either efficiency or faithfulness. Our approach simultaneously achieves both faithfulness and efficiency ($R^2$ of predicted vs. actual logit drops). (b) Latency scaling with context length. Unlike the baseline, which exhibits linear cost growth and OOM errors on long traces, our method operates with constant overhead below the real-time threshold (dashed line).}
\label{fig:fig1}
\end{figure}

\section{Introduction}
\label{sec:intro}

Recent scaling of Vision-Language Models has shifted the focus from single-turn question answering to multi-step reasoning over visual evidence \citep{openai2024o1, openai2025o3, deepseek2025r1, google2024gemini2thinking, qwen2024qvq, yue2023mmmu}. Todays, they can generate code from website screenshots, solve geometry problems from diagrams, and interpret complex charts \citep{lu2023mathvista,yue2023mmmu,Zhang2024MathVerse}.
By decomposing queries into intermediate steps, these models produce extended reasoning traces that theoretically reveal \emph{which} visual elements support \emph{which} conclusions \citep{xu2024llavacot,wang2024qwen2vl,geminiteam2024gemini15}.
However, verifying whether these long traces genuinely rely on visual evidence remains a critical challenge.

In practice, reasoning traces often exhibit \emph{ungrounded hallucinations}, producing plausible narratives without actual visual support \citep{li2023evaluating,favero2024multimodal}.
Models may correctly cite ``the angle at vertex B'' while attending to an irrelevant image region \citep{tong2024eyes}, or hallucinate numerical values absent from the visual input \citep{jing2024faithscore}.
These failures create a diagnostic blind spot, making it unclear whether errors originate in perception or symbolic reasoning \citep{cao2025geopqa}.
As reasoning chains grow longer, models increasingly rely on language priors rather than visual evidence \citep{liu2025more}, yet without reliable tools to verify visual grounding, the interpretability promised by reasoning traces remains illusory.

Attribution methods offer a principled way to verify visual grounding by quantifying which image regions causally influence each generated token.
However, as shown in \cref{fig:fig1}a, existing approaches face a competing demands of faithfulness and efficiency.
Raw attention weights \citep{abnar-zuidema-2020-attention-flow} can be extracted instantly during inference, but they are unreliable as causal explanations because attention distributions can shift substantially without altering model predictions \citep{jain-wallace-2019-attention,pruthi-2020-deceive,wu2024faithfulness}.
Gradient-based \citep{zeiler2014visualizing,chefer-2021-beyond-attention,achtibat-2024-attnlrp} and perturbation-based methods \citep{fong2017meaningful,hooker2019benchmark,li-2025-tam} achieve higher faithfulness by measuring how outputs change when inputs are modified, but at substantial computational cost.
This cost grows with context length (\cref{fig:fig1}b). As reasoning traces extend to thousands of tokens, per-token latency increases dramatically, making real-time analysis infeasible.
For interactive debugging, where users iteratively refine prompts or inspect reasoning step-by-step, such latency is unacceptable.

We resolve this trade-off through \emph{amortized attribution} \citep{jethani2022fastshap,covert2024stochastic}, training a lightweight estimator as a surrogate model to predict attributions rather than recompute them from scratch.
Crucially, while individual attention weights are unreliable as direct explanations, the comprehensive attention patterns distributed across layers and heads serve as \emph{informative features} for predicting causal effects \citep{abnar-zuidema-2020-attention-flow, cohen2024contextcite, cohen2025learning}.
We hypothesize that these patterns contain sufficient signal to recover the counterfactual effect of masking a visual region on token probability.
To ensure perceptual relevance, we operate on semantic units derived from self-supervised features \citep{simeoni2025dinov3} rather than raw pixels.
By training a linear estimator to map attention features to rigorous ablation targets, we bypass the need for repeated inference.

With a minimal parameter footprint, we train our estimator on 2,000 examples in ${\sim}$4.5 hours of runtime on a single GPU, a one-time cost that amortizes over all subsequent inferences.
At test time, our method computes attribution asynchronously within the token generation loop, enabling users to inspect on the fly where the model draws from at each reasoning step.

\begin{figure}[t!]
\centering
\includegraphics[width=\linewidth]{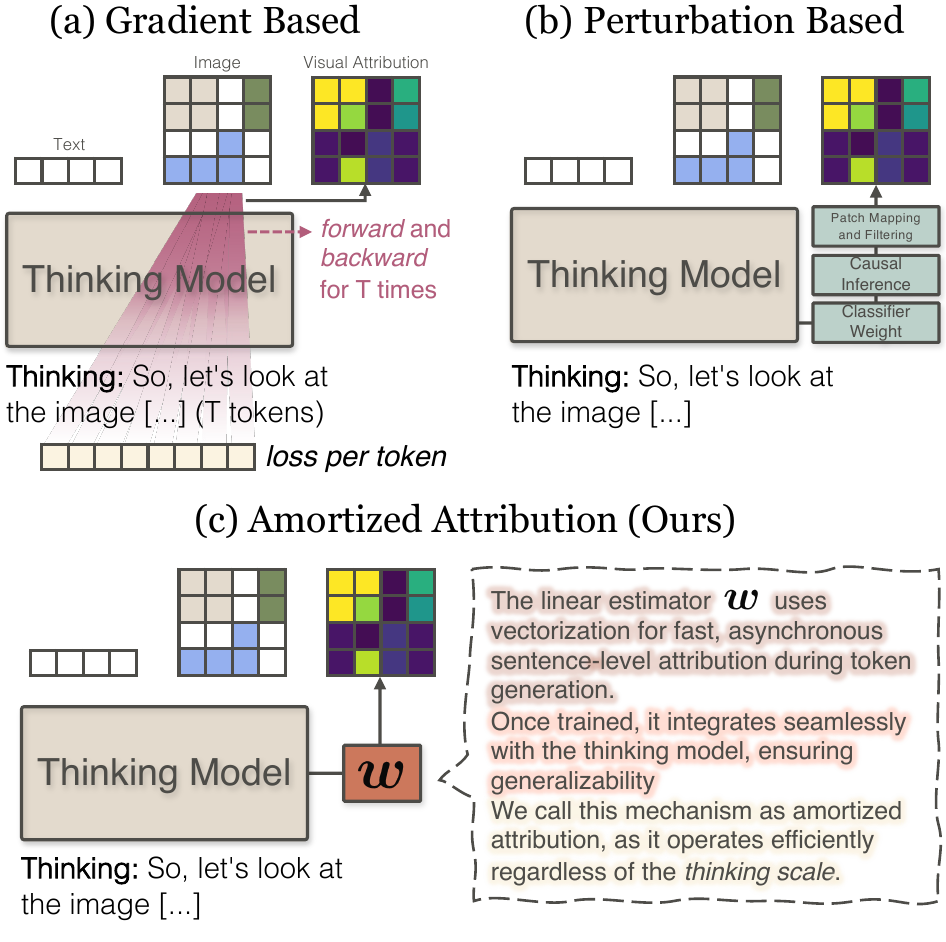}
\caption{\textbf{Comparison of attribution processes} between our method and baselines.}
\label{fig:method_comparison}
\end{figure}
\vspace{-7pt}


Our main contributions are as follows: 
\begin{itemize}
\item We formalize token-level visual attribution for multimodal reasoning as counterfactual region ablation effects, and identify the prohibitive scaling of direct perturbation for long reasoning traces (\cref{sec:problem}).
\item We propose a real-time attribution streaming method that (i) constructs semantically coherent attribution units from self-supervised vision features and (ii) amortizes counterfactual effects from a single forward pass (\cref{sec:unitization,sec:estimator}).
\item Our amortized estimator enables single-pass inference with cost linear in trace length while matching strong gradient- and perturbation-based baselines on faithfulness metrics across four reasoning VLMs and five task families (\cref{sec:estimator,sec:experiments}).
\item We further study models' step-by-step reasoning by providing an inspectable view of the semantic space over long traces and analyzing the reasoning trajectory dynamics enabled by attribution streaming (\cref{sec:trajectory}).
\end{itemize}


\section{Related Work}
\label{sec:related}

\subsection{Multimodal Reasoning Paradigms}
\label{sec:related_reasoning}
Recent vision-language models have shifted toward multi-step reasoning grounded in visual evidence, driven by benchmarks that explicitly require compositional explanations \citep{yue2023mmmu,Zhang2024MathVerse,Fu2024BLINK,lu2022scienceqa,lu2023mathvista,He2024OlympiadBench,yuan2025visualreasoningtracer,man2025argus,zhang2025gcot}.
Correspondingly, models now generate long-form ``thinking'' traces \citep{xu2024llavacot,wang2024qwen2vl,geminiteam2024gemini15, google2024gemini2thinking, openai2024o1, openai2025o3} and employ architectures designed for broader reasoning capabilities \citep{li2024llavaonevision,li2024llavanextinterleave,hong2024cogvlm2,luo2024monointernvl}.
While these traces offer a debugging interface, they can sound plausible even when ungrounded, masking whether errors are perceptual or symbolic.
Current reliability methods largely focus on final answer correctness rather than the supporting visual evidence for each model's thinking step \citep{prabhu2024trustverify,geigle2024objectgroundinghallucination,yan2025vigor,fu2025tldr,liu2026visionlanguageintrospection}.

\subsection{Faithful Attribution and Scalability}
\label{sec:related_attribution}
Attribution methods quantify feature relevance and are typically evaluated via counterfactual faithfulness \citep{hooker2019benchmark}, often using deletion or masking perturbations \citep{petsiuk2018rise,fong2017meaningful,Yu2024MMVet,Zhang2024MMERealWorld}.
While text-to-image cross-attention is a common proxy for focus, it is often unfaithful \citep{jain-wallace-2019-attention,serrano-smith-2019-attention,bibal-2022-attention-debate} and manipulable without affecting predictions \citep{pruthi-2020-deceive}.
Although gradient or attention-flow methods \citep{abnar-zuidema-2020-attention-flow,chefer-2021-beyond-attention,achtibat-2024-attnlrp} and recent multimodal tools \citep{li-2025-tam,stan2024lvlminterpret,shen2025glimpse,liang2025intramodaltokeninteractions} improve fidelity, they scale poorly.
Reasoning models producing thousands of tokens \citep{xu2024llavacot,geminiteam2024gemini15} make interactive per-token attribution intractable.
Prior amortized attribution methods train surrogates over SHAP-style value functions \citep{jethani2022fastshap,covert2024stochastic}, and concurrent work AT2 \citep{cohen2025learning} uses attention as features for text attribution.
We extend this line to visual tokens under streaming generation, replacing Monte Carlo sampling with a linear estimator over cached cross-attention features that adds negligible overhead to the decoding loop.

\begin{figure*}[!t]
\centering
\includegraphics[width=.95\linewidth]{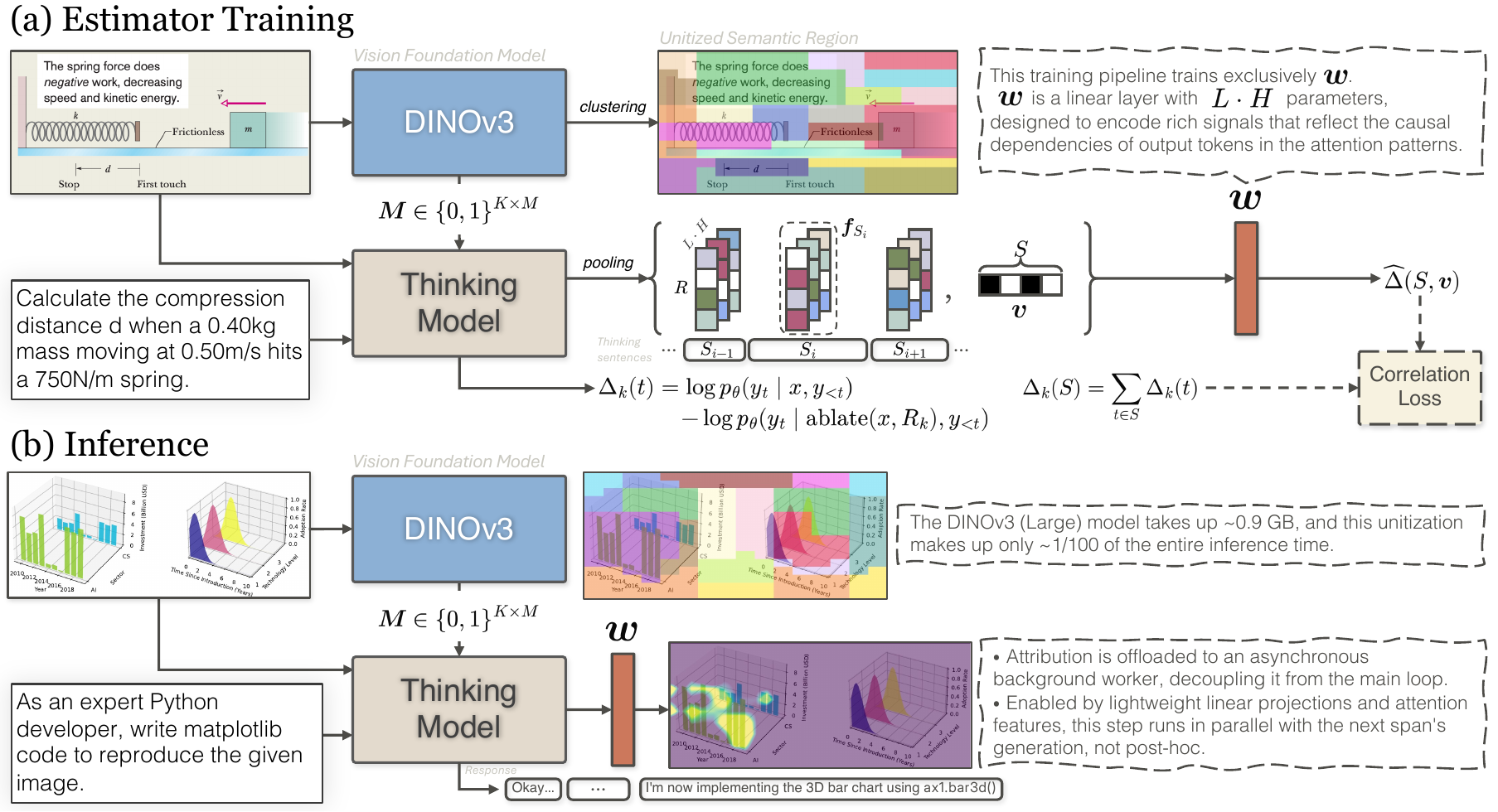}
\caption{\textbf{Overview of our amortized attribution pipeline.} (a) Training: We use our semantic region unitization to identify semantic regions of an input image using DINO features and optimize a lightweight estimator to predict causal importance from attention patterns.
(b) Inference: Once the estimator is trained, it generalizes to other samples and computes visual attribution in parallel with the model's token generation.}
\vspace{-3pt}
\label{fig:fullpipeline}
\end{figure*}

\section{Method}
\label{sec:method}

We present \textsc{vStream} (Visual Attribution Streaming), a framework for real-time, object-centric attribution in reasoning VLMs (\cref{fig:fullpipeline}).
Our approach sidesteps the computational bottleneck of existing methods \citep{chefer-2021-beyond-attention,li-2025-tam} by amortizing the counterfactual computation, learning to predict ablation effects from lightweight features extracted in a single forward pass.

\subsection{Background}
We consider a vision-language model (VLM) that takes a multimodal input $x=(I,q)$, where $I$ is an image and $q$ is a textual query.
The model autoregressively generates a sequence of tokens $y_{1:T}=(y_1,\dots,y_T)$.
For reasoning-centric VLMs, this sequence typically includes both intermediate reasoning tokens (the ``thinking'' trace) and the final answer.
We denote the model distribution at step $t$ as $p_\theta(\cdot \mid x, y_{<t})$.
\subsection{Problem Formulation}
\label{sec:problem}

Let the image $I$ be encoded into $M$ vision tokens, and let these tokens be partitioned into $K$ disjoint regions $\mathcal{R}=\{R_1,\dots,R_K\}$, where each $R_k \subseteq \{1,\dots,M\}$.
We define an ablation operator $\mathrm{ablate}(x, R_k)$ that prevents the model from accessing visual information in region $R_k$ during generation.

\textbf{Ablation via Attention Masking.}
We implement ablation by preventing information flow from specific vision tokens to the query, effectively removing them from the model's context.
Following \citet{geva2023dissecting}, we set the attention scores from all query positions to the vision tokens in region $R_k$ to $-\infty$ before the softmax normalization.
Formally, let $\vq_t \in \mathbb{R}^d$ denote the query vector at position $t$ and let $\mK \in \mathbb{R}^{d \times M}$ contain the key vectors for all $M$ vision tokens.
The standard attention logits are $\vs = \vq_t^\top \mK / \sqrt{d}$.
Under ablation of region $R_k$, we modify the logits as:
\begin{equation}
    s_i \leftarrow \begin{cases}
        -\infty & \text{if } i \in R_k \\
        s_i & \text{otherwise}
    \end{cases}
    \label{eq:ablation_mask}
\end{equation}
for all layers and heads.
This masking-based intervention preserves the original token positions and model state while cleanly removing the target region's influence on subsequent generation \citep{cohen2025learning, cohen2024contextcite}.

\textbf{Ablation Effect.}
We define the token-level ablation effect of region $R_k$ at decoding step $t$ as the drop in log-probability of the generated token \citep{wang2022interpretability, zhang2023towards}:
\begin{equation}
    \Delta_k(t) = \log p_\theta(y_t \mid x, y_{<t}) - \log p_\theta(y_t \mid \mathrm{ablate}(x,R_k), y_{<t}).
    \label{eq:delta_token}
\end{equation}
A positive $\Delta_k(t)$ indicates that region $R_k$ supports the generation of token $y_t$.

We extend this token-level metric to semantic spans.
Let $S \subseteq \{1,\dots,T\}$ denote a target span (e.g., a thinking step).
We aggregate token-level effects to define the span-level ablation effect:
\begin{equation}
    \Delta_k(S) = \sum_{t\in S} \Delta_k(t).
    \label{eq:delta_span}
\end{equation}
Under teacher-forcing, where we condition on the originally generated sequence $y_{<t}$ for both the original and ablated models, this sum equals the log-probability ratio of the entire span $\log p(y_S \mid x) - \log p(y_S \mid \mathrm{ablate}(x, R_k))$.

\textbf{From Tokens to Semantic Regions.}
While defining $R_k$ as individual vision tokens is theoretically possible, it is practically suboptimal \citep{ren2003learning}.
First, individual visual tokens are semantically ambiguous; a single token might capture an edge or texture that has no independent meaning without its neighbors.
Second, visual information is spatially sparse and structured. Their salient information is concentrated in objects rather than distributed uniformly.
Grouping tokens into semantically coherent units not only aligns attribution with human concepts (e.g., ``the red car'' vs. ``token 142'') but also drastically reduces the search space for ablation.
We therefore partition the image into semantic regions, making the counterfactual objective both interpretable and computationally tractable.

\subsection{Semantic Region Unitization}
\label{sec:unitization}

The choice of region partition $\mathcal{R}$ is critical, as attributions to arbitrary patches are hard to interpret while attributions to semantic regions (objects, text blocks, diagram components) are actionable.
We use features from DINOv3 \citep{simeoni2025dinov3}, a visual foundation model, to partition vision tokens into semantically coherent regions.
After aligning feature maps to the VLM's vision encoder resolution, we apply agglomerative clustering with Ward's linkage \citep{ward1963hierarchical}, adaptively grouping similar tokens into $K$ disjoint regions (typically $K \in [16, 128]$) that isolate objects and background elements without requiring external segmentation masks.
We denote the resulting partition as a membership matrix $\mM \in \{0,1\}^{K \times M}$, where $\mM_{k,i}=1$ indicates vision token $i$ belongs to region $R_k$.
See \cref{app:clustering} for more details and explanations.

For ablation studies, we also consider three non-semantic baselines:
\begin{itemize}
    \item \textbf{Token-wise}: Each vision token is its own region ($K = M$).
    \item \textbf{Random blocks}: Partition the image into $K$ randomly sized rectangular regions.
    \item \textbf{Voronoi}: Partition using a regular grid of $K$ Voronoi cells.
\end{itemize}
These baselines use fixed $K \in \{16, 32, 64, 128\}$, while DINO clustering determines $K$ adaptively; see \cref{sec:ablation} for quantitative comparisons.

\subsection{Attention Feature Extraction}
\label{sec:features}

To predict ablation effects without actually ablating, we need features that capture how much each region contributes to generating the target span.
Attention patterns provide a natural signal: if a region strongly influences the generation of a span, blocking attention to that region should substantially change the output.

Let $\mA^{(\ell,h)}_{t,i}$ denote the attention weight from token position $t$ to vision token $i$ in layer $\ell$ and head $h$.
For each target span $S$ and region $R_k$, we compute the mean attention pooled over the span and the region:
\begin{equation}
    \va^{(\ell,h)}_{S,k} = \frac{1}{|S| \cdot |R_k|} \sum_{t \in S} \sum_{i \in R_k} \mA^{(\ell,h)}_{t,i}.
    \label{eq:attn_pool}
\end{equation}
We concatenate these scalars across all layers and heads to form a feature vector:
\begin{equation}
    \vf_{S,k} = \mathrm{Concat}_{\ell,h}\big( \va^{(\ell,h)}_{S,k} \big) \in \mathbb{R}^{L \cdot H},
    \label{eq:feature_vec}
\end{equation}
where $L$ is the number of layers and $H$ is the number of heads.
For a model with 32 layers and 36 heads, this yields a 1152-dimensional feature vector per (span, region) pair.
We extract these features from samples where the model produces correct final answers, ensuring that the learned estimator captures attention patterns associated with successful reasoning (see \cref{app:implementation:datasets} for dataset details).
Importantly, attention weights are already computed during generation, so extracting $\vf_{S,k}$ adds negligible overhead.

\subsection{Amortized Estimator}
\label{sec:estimator}

Given the attention features $\vf_{S,k}$ for each region, we train a linear estimator to predict ablation effects.
Let $\vv \in \{-1, +1\}^K$ be a signed region mask where $\vv_k = -1$ indicates ablation and $\vv_k = +1$ indicates retention.
This is distinct from the membership matrix $\mM \in \{0,1\}^{K \times M}$, which assigns tokens to regions.
The signed encoding ensures that $\vf_{S,\vv}$ (\cref{eq:mask_feature}) captures the difference between retained and ablated contributions, rather than a one-sided sum of retained features only.
We define the combined feature for mask $\vv$ as:
\begin{equation}
    \vf_{S,\vv} = \sum_{k=1}^{K} \vv_k \cdot \vf_{S,k}.
    \label{eq:mask_feature}
\end{equation}
A linear estimator with weights $\vw \in \mathbb{R}^{L \cdot H}$ predicts the total ablation effect:
\begin{equation}
    \widehat{\Delta}(S, \vv) = \vw^\top \vf_{S,\vv}.
    \label{eq:estimator}
\end{equation}
The estimator has only $L \cdot H$ parameters, learning a single importance weight per layer-head pair.

\textbf{Training.}
For each training sample, we generate $N{=}32$ random masks $\vv^{(j)}$ and compute the ground-truth ablation effect $\Delta(S, \vv^{(j)})$ via forward passes.
We optimize $\vw$ to maximize Pearson correlation between predictions and targets:
\begin{equation}
    \mathcal{L}(\vw) = -\rho\big(\{\widehat{\Delta}(S, \vv^{(j)})\}_{j}, \{\Delta(S, \vv^{(j)})\}_{j}\big).
    \label{eq:loss}
\end{equation}
This scale-invariant objective focuses on relative ranking rather than absolute magnitudes, aligning with our evaluation metric.
We evaluate deeper MLP variants in \cref{app:mlp_ablation}: the linear model matches their accuracy while remaining orders of magnitude more parameter-efficient.
See \cref{app:theory} for theoretical justification and \cref{app:implementation:training} for training details.

\subsection{\textsc{vStream}: Visual Attribution Streaming}
\label{sec:inference}

As the model generates its reasoning trace, we simultaneously stream visual attributions by caching text-to-vision cross-attention weights at each decoding step.
Upon completing a semantic span $S$, we aggregate trained weights into $\vf_{S,k}$ and score each region via
\begin{equation}
    \widehat{\Delta}_k(S) = \vw^\top \vf_{S,k}.
    \label{eq:per_region_score}
\end{equation}
Crucially, this computation is offloaded to an asynchronous background worker, decoupled from the main generation loop via a producer-consumer pattern.
Because the attribution step runs in parallel with the generation of the subsequent span, it introduces near-zero latency overhead to the user experience (\cref{fig:fullpipeline}).
See \cref{app:streaming} for detailed streaming architecture.

While region-level scores capture relevant objects, finer localization is often required.
We redistribute the region attribution $\widehat{\Delta}_k(S)$ across patches using the DINOv3 attention map $\va^\mathtt{DINO}$ (computed during unitization) as a spatial prior:
\begin{equation}
    s_i = \widehat{\Delta}_{R(i)}(S) \cdot \frac{\va^\mathtt{DINO}_i}{\sum_{j \in R(i)} \va^\mathtt{DINO}_j}.
    \label{eq:patch_score}
\end{equation}
This refinement highlights the most salient parts within each region while preserving the calibrated total importance from the estimator.
See \cref{app:theory} for a complexity analysis.


\definecolor{bestcolor}{RGB}{232, 245, 233}    
\definecolor{secondcolor}{RGB}{255, 249, 235}  

\newcommand{\best}[1]{\colorbox{bestcolor}{#1}}
\newcommand{\second}[1]{\colorbox{secondcolor}{#1}}
\newcommand{\ours}{\textbf{vStream (Ours)}}
\newcommand{\circlemarker}[1]{%
    \tikz[baseline=-0.6ex]\draw[#1, fill=#1, radius=3pt] (0,0) circle (3pt);%
}
\begin{table*}[!t]
\centering
\caption{
  \textbf{Attribution quality comparison.}
  Each cell: (LDS\,/\,Top-5 Drop); higher is better.
  \circlemarker{bestcolor} Green: best, \circlemarker{secondcolor} cream: second best.
  We define \emph{real-time} as $\geq$10 tokens/sec, exceeding the average human silent reading speed of 238 words/min ($\approx$6 tokens/sec) \citep{brysbaert2019reading}, ensuring users can follow streaming output; \textsc{vStream} (Ours) achieves up to $117\times$ speedup over gradient-based baselines.
}
\label{tab:main_results}
\vspace{-5pt}
\resizebox{\textwidth}{!}{%
\begin{tabular}{l|ccccc|c|c||ccccc|c|c}
\hline
& \multicolumn{7}{c||}{Qwen3-VL-8B-Thinking\hspace{6pt}} 
& \multicolumn{7}{c}{\hspace{6pt}GLM-4.1V-9B-Thinking} \\
Method 
& Math & Science & Document & Code & General & Avg. & Time (s/10tok)
& Math & Science & Document & Code & General & Avg. & Time (s/10tok) \\
\hline
Random & 0.31/0.08 & 0.28/0.11 & 0.32/0.15 & 0.33/0.09 & 0.29/0.07 & 0.31/0.10 & .010$\pm$.002 & 0.28/0.12 & 0.33/0.14 & 0.31/0.13 & 0.33/0.10 & 0.28/0.11 & 0.31/0.12 & .011$\pm$.003 \\
Attention & 0.39/0.31 & 0.37/0.28 & 0.42/0.33 & 0.39/0.29 & 0.43/0.26 & 0.40/0.29 & .020$\pm$.005 & 0.41/0.30 & 0.42/0.35 & 0.40/0.29 & 0.45/0.32 & 0.38/0.27 & 0.41/0.31 & .022$\pm$.006 \\
\cdashline{1-15}
InputGrad & 0.63/0.72 & \second{0.71}/0.68 & 0.65/0.71 & 0.58/\second{0.96} & 0.64/0.75 & 0.64/0.76 & 2.80$\pm$1.06 & 0.58/0.74 & 0.63/\best{1.12} & \best{0.69}/0.73 & 0.62/0.78 & 0.63/0.81 & 0.63/0.84 & 3.05$\pm$1.15 \\
AttnLRP & \best{0.76}/0.81 & 0.66/\best{1.08} & \second{0.73}/0.78 & \best{0.73}/0.82 & 0.65/\second{0.98} & \second{0.71}/0.89 & 2.60$\pm$1.04 & \best{0.72}/0.79 & \second{0.68}/0.77 & 0.64/\best{1.05} & \best{0.71}/0.84 & \second{0.70}/0.75 & \second{0.69}/0.84 & 2.84$\pm$1.13 \\
TAM & 0.69/\second{1.02} & 0.69/\second{0.95} & 0.66/\second{1.04} & \second{0.72}/\best{1.06} & \best{0.71}/0.89 & 0.69/\second{0.99} & 1.90$\pm$1.13 & \second{0.71}/\second{0.94} & 0.65/\second{0.91} & \second{0.68}/0.88 & 0.67/\second{1.01} & 0.68/\best{1.08} & 0.68/\second{0.96} & 2.07$\pm$1.23 \\
\ours & \second{0.75}/\best{1.05} & \best{0.71}/0.92 & \best{0.74}/\best{1.09} & 0.69/0.91 & \second{0.70}/\best{1.02} & \best{0.72}/\best{1.00} & \textbf{.024$\pm$.002} & 0.69/\best{0.97} & \best{0.74}/0.89 & 0.67/\second{1.01} & \second{0.68}/\best{1.10} & \best{0.73}/\second{0.95} & \best{0.70}/\best{0.98} & \textbf{.026$\pm$.002} \\
\hline
& \multicolumn{7}{c||}{MiMo-VL-7B\hspace{6pt}} 
& \multicolumn{7}{c}{\hspace{6pt}Cosmos-Reason1-7B} \\
Method 
& Math & Science & Document & Code & General & Avg. & Time (s/10tok)
& Math & Science & Document & Code & General & Avg. & Time (s/10tok) \\
\hline
Random & 0.30/0.14 & 0.28/0.18 & 0.29/0.11 & 0.30/0.16 & 0.28/0.12 & 0.29/0.14 & .009$\pm$.002 & 0.28/0.17 & 0.30/0.13 & 0.34/0.19 & 0.30/0.16 & 0.27/0.12 & 0.30/0.15 & .010$\pm$.002 \\
Attention & 0.45/0.34 & 0.41/0.32 & 0.38/0.29 & 0.40/0.33 & 0.39/0.28 & 0.40/0.31 & .018$\pm$.004 & 0.45/0.33 & 0.41/0.35 & 0.37/0.34 & 0.41/0.37 & 0.38/0.38 & 0.41/0.35 & .020$\pm$.005 \\
\cdashline{1-15}
InputGrad & 0.58/\second{1.06} & 0.62/0.78 & \second{0.70}/0.69 & 0.59/0.76 & \best{0.71}/0.68 & 0.64/0.79 & 2.52$\pm$0.95 & 0.58/0.71 & \best{0.73}/0.76 & 0.59/\second{0.98} & 0.61/0.72 & 0.60/0.75 & 0.62/0.78 & 2.75$\pm$1.04 \\
AttnLRP & \second{0.74}/0.80 & \best{0.75}/\second{0.96} & 0.62/\second{0.95} & \best{0.72}/0.81 & 0.66/\second{1.04} & \second{0.70}/0.91 & 2.34$\pm$0.94 & \best{0.76}/0.80 & 0.65/\best{1.09} & \best{0.71}/0.78 & \second{0.69}/\second{1.00} & \best{0.72}/0.79 & \best{0.71}/0.89 & 2.55$\pm$1.02 \\
TAM & 0.70/0.98 & \second{0.66}/0.88 & 0.66/0.91 & \second{0.68}/\best{1.04} & 0.64/0.96 & 0.67/\second{0.95} & 1.71$\pm$1.02 & \second{0.75}/\second{0.94} & \second{0.66}/\second{1.02} & \second{0.68}/0.89 & 0.65/0.95 & \second{0.69}/\second{1.01} & 0.69/\second{0.96} & 1.86$\pm$1.11 \\
\ours & \best{0.75}/\best{1.08} & 0.65/\best{0.97} & \best{0.71}/\best{0.98} & 0.70/\second{0.96} & \second{0.70}/\best{1.10} & \best{0.70}/\best{1.02} & \textbf{.022$\pm$.002} & 0.74/\best{0.99} & 0.67/0.90 & 0.68/\best{1.01} & \best{0.71}/\best{1.04} & 0.69/\best{1.05} & \second{0.70}/\best{1.00} & \textbf{.024$\pm$.002} \\
\hline
\end{tabular}%
}
\vspace{-6pt}
\end{table*}
\section{Experiments}
\label{sec:experiments}

We evaluate our method on four reasoning VLMs across five task categories, addressing three questions: (1) Does our method accurately predict ablation effects? (2) Does the estimator generalize across tasks? (3) What do the attributions reveal about model behavior? Lastly, we conduct additional analysis to understand how visual attribution dynamics differ between successful and unsuccessful reasoning.

\subsection{Setup}
\label{sec:setup}

\textbf{Models and datasets.}
We use four reasoning VLMs that generate extended thinking chains: Qwen3-VL-8B-Thinking \citep{qwen2025qwen3vl}, GLM-4.1V-9B-Thinking \citep{glm}, MiMo-VL-7B-RL \citep{mimovl}, and Cosmos-Reason1-7B \citep{cosmosr1}.
To test generalization, we cover five categories: Math, Science, Document, Code, and General.
See \cref{app:implementation:datasets} for dataset details per category.

\textbf{Baselines.}
We compare against five baselines spanning different paradigms:
\begin{itemize}
    \item \textbf{Random}: Uniform random attribution scores.
    \item \textbf{Attention}: Average attention weights from text tokens to vision tokens, pooled across all layers and heads.
    \item \textbf{InputGrad} \citep{inputgradient}: Gradient of output logits with respect to input pixel values.
    \item \textbf{AttnLRP} \citep{achtibat-2024-attnlrp}: Attention-aware layer-wise relevance propagation.
    \item \textbf{TAM} \citep{li-2025-tam}: Token activation maps based on intermediate activations.
\end{itemize}
We adapt all baselines to produce region-level scores using our semantic unitization, ensuring fair comparison at the same granularity.

\textbf{Metrics.}
We use two complementary metrics:
\begin{itemize}
    \item \textbf{LDS (Linear Datamodeling Score)} \citep{park2023trak, cohen2024contextcite, cohen2025learning}: Spearman correlation between predicted and actual ablation effects across regions, measuring how well the method ranks regions by causal importance.
    \item \textbf{Top-K Drop} \citep{chattopadhay2018grad}: Log-probability drop when ablating the $K$ most attributed regions. We use $K{=}5$; higher drop indicates the method correctly identified causally important regions.
\end{itemize}
LDS evaluates ranking fidelity against ground-truth counterfactual effects, while Top-K Drop measures practical utility for identifying critical visual evidence.

\subsection{Main Results}
\label{sec:main_results}
\textbf{Attribution quality.}
\cref{tab:main_results} compares methods across four models and five categories.
\textsc{vStream} matches the strongest baselines in LDS and achieves best or second-best Top-5 Drop in 16/20 settings.
\cref{fig:prediction_quality} shows \textsc{vStream} achieves $R^2 = 0.65$ between predicted and actual effects.
Crucially, \textsc{vStream} streams these attributions with negligible overhead, whereas gradient-based methods require backward passes and perturbation-based methods require additional compute steps.
We also evaluate segmentation quality via mIoU in \cref{app:exp:segmentation}.

\textbf{Cross-task generalization.}
\cref{tab:cross_task} examines whether an estimator trained on one task category generalizes to others.
Using Qwen3-VL, we train on each category independently and evaluate on all five.
In-domain (diagonal) LDS ranges from 0.70--0.74, while cross-task transfer retains 75--90\% of this performance for most pairs.
Math and Science show strong mutual transfer (LDS 0.62--0.63), likely due to shared diagram structures.
Transfer to Document tasks is weaker (LDS 0.54--0.58), reflecting the distinct visual layout of dense text and tables.
Training on a mixture of all categories recovers full performance, suggesting a single estimator suffices for diverse applications.
See \cref{app:exp} for results on other models.

\textbf{Qualitative analysis.}
\label{sec:qualitative}
\cref{fig:qualitative} illustrates \textsc{\textsc{vStream}}'s unique capability: streaming attributions for each thinking step as the model reasons.
Unlike baselines that produce a single post-hoc map, \textsc{\textsc{vStream}} reveals which regions the model references across intermediate steps, exposing failure modes invisible to global methods.
For instance, a model may attend to the correct region initially but drift to irrelevant areas mid-reasoning.
Additional examples across models and task categories are provided in \cref{app:qualitative}.


\begin{table}[!t]
\centering
\caption{
  \textbf{Cross-task generalization.}
  Each cell: (LDS\,/\,Top-5 Drop). Diagonal: in-domain.
}
\label{tab:cross_task}
\resizebox{\columnwidth}{!}{%
\begin{tabular}{l|ccccc|c}
\toprule
\multirow{2}{*}{\textbf{Train}} & \multicolumn{5}{c|}{\textbf{Eval Category}} & \multirow{2}{*}{\textbf{Avg.}} \\
 & Math & Science & Document & Code & General & \\
\midrule
Math    & \cellcolor{gray!20}0.72\,/\,0.82 & 0.63\,/\,0.68 & 0.54\,/\,0.48 & 0.60\,/\,0.58 & 0.55\,/\,0.50 & 0.61\,/\,0.61 \\
Science     & 0.62\,/\,0.66 & \cellcolor{gray!20}0.71\,/\,0.80 & 0.58\,/\,0.54 & 0.55\,/\,0.50 & 0.60\,/\,0.58 & 0.61\,/\,0.62 \\
Document     & 0.53\,/\,0.50 & 0.56\,/\,0.52 & \cellcolor{gray!20}0.70\,/\,0.78 & 0.57\,/\,0.54 & 0.65\,/\,0.72 & 0.60\,/\,0.61 \\
Code    & 0.59\,/\,0.60 & 0.54\,/\,0.48 & 0.56\,/\,0.52 & \cellcolor{gray!20}0.73\,/\,0.84 & 0.62\,/\,0.66 & 0.61\,/\,0.62 \\
General     & 0.57\,/\,0.54 & 0.61\,/\,0.64 & 0.64\,/\,0.70 & 0.60\,/\,0.62 & \cellcolor{gray!20}0.74\,/\,0.86 & 0.63\,/\,0.67 \\
\midrule
\rowcolor{gray!10}
Mix-up     & 0.75\,/\,1.05 & 0.71\,/\,0.92 & 0.74\,/\,1.09 & 0.69\,/\,0.91 & 0.70\,/\,1.02 & 0.72\,/\,1.00 \\
\bottomrule
\end{tabular}%
}
\vspace{-1em}
\end{table}

\begin{figure}[t]
\centering
\includegraphics[width=.90\linewidth]{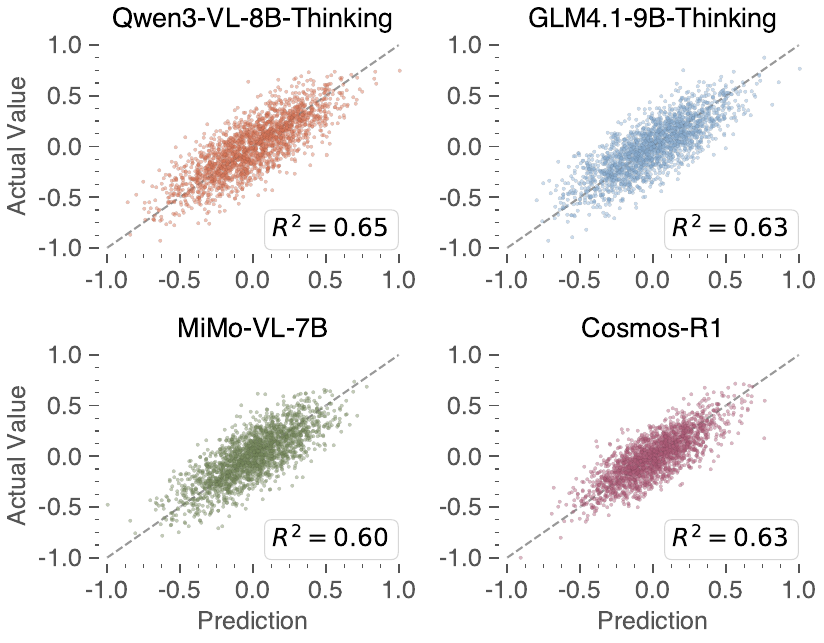}
\vspace{-3pt}
\caption{
    \textbf{Predicted vs.\ actual ablation effects.} Each point represents a region's predicted effect versus its ground-truth log-probability drop.
}
\vspace{-9pt}
\label{fig:prediction_quality}
\end{figure}

\begin{figure*}[t]
\centering
\includegraphics[width=.935\linewidth]{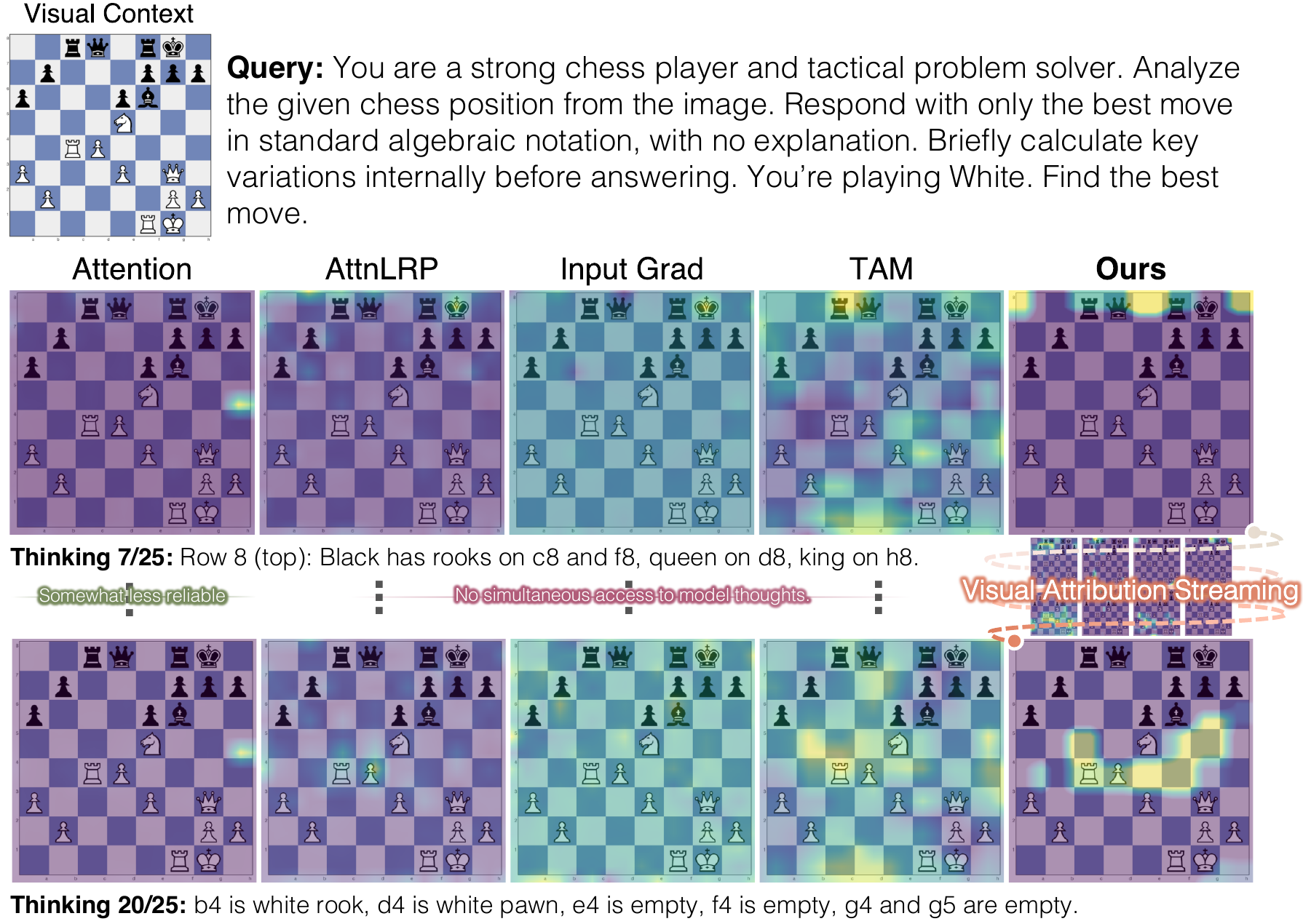}
\caption{
    \textbf{Qualitative comparison on a real-world sample (Qwen3-VL).} \textsc{vStream} emits per-step visual attributions alongside the model's reasoning at near-zero latency, while prior methods only run post-hoc once generation has finished.
}
\vspace{-7pt}
\label{fig:qualitative}
\end{figure*}

\subsection{Ablation Study}
\label{sec:ablation}

\textbf{Semantic regions outperform geometric partitions.}
In \cref{fig:abl_source_region}, we compare three region unitization strategies on Qwen3-VL: random rectangular blocks, regular Voronoi tessellation, and our DINOv3-based semantic clustering.
Our clustering approach significantly outperforms both geometric alternatives on the LDS metric.
Random block and Voronoi partitions use a fixed grid, placing a ceiling on performance, whereas DINOv3-based clustering adaptively adjusts regions to image content.
This confirms that semantic-level attribution to objects, symbols, and text blocks directly contributes to more interpretable and accurate results.
Results comparing other vision foundation models \citep{radford2021clip, zhai2023siglip} are provided in \cref{app:exp:backbone}.

\textbf{Training data efficiency.}
\label{sec:data_efficiency}
A practical concern is how much ablation data is needed to train an effective estimator, as shown in \cref{fig:abl_training_data}.
We vary the number of training examples from 100 to 5,000 and measure LDS on a held-out set (\cref{fig:abl_training_data}).
Performance improves steeply up to approximately 500 examples, then converges; with 2,000 examples, the estimator reaches full capacity.
This efficiency is enabled by two factors: (1) the estimator has only $L \times H$ parameters (e.g., 784), so overfitting is difficult, and (2) each example provides 32 mask-effect pairs, effectively multiplying the data.
Practitioners can train a domain-specific estimator with a few hours of ablation data collection, making the approach accessible for specialized applications.
See \cref{app:exp} for additional details.

\begin{figure}[!t]
\centering
\includegraphics[width=0.91\linewidth]{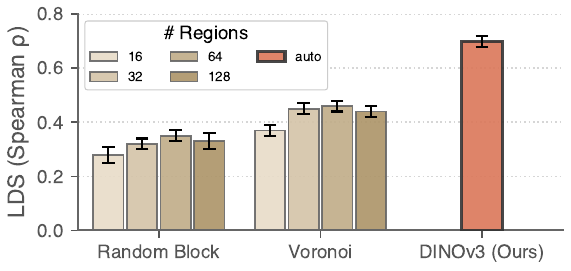}
\vspace{-3pt}
\caption{
    \textbf{Comparison of region unitization strategies.}
    }
\vspace{-15pt}
\label{fig:abl_source_region}
\end{figure}

\subsection{Reasoning Trajectory Dynamics}
\label{sec:trajectory}

Beyond static attribution maps, we ask whether the temporal evolution of visual reliance reveals signatures of reasoning quality.
Successful reasoning traces exhibit more stable visual grounding dynamics than unsuccessful ones, with attribution trajectories that move less and turn less over the course of the thinking process (\cref{fig:traj,fig:traj_strip}).

\definecolor{orange}{RGB}{218, 119, 88}
\definecolor{purple}{RGB}{174, 94, 122}
\begin{figure*}[!t]
\setlength{\fboxsep}{1.0pt}
\centering
\includegraphics[width=.93\linewidth]{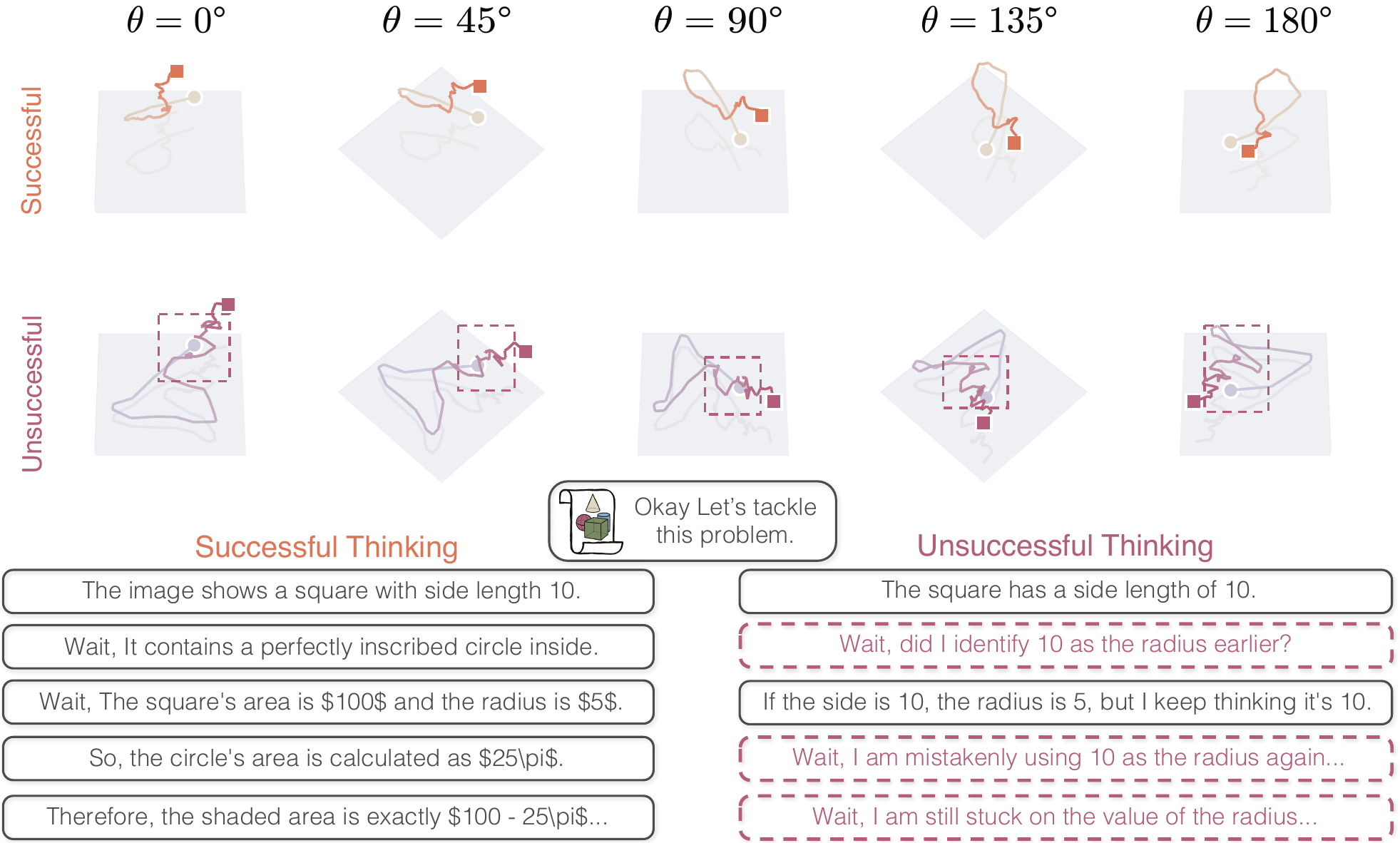}
\caption{
    \textbf{Reasoning trajectories in visual attribution space.}
    Each curve traces the evolution of a region-effect vector across reasoning steps, projected into 3D via PCA, The figure shows views from five different angles ($\theta$).
    The thinking process begins at the circular point and terminates at the square point.
    Successful reasoning chains (\circlemarker{orange} orange) follow compact, directed paths that converge toward stable visual grounding.
    Unsuccessful chains (\circlemarker{purple} purple) exhibit longer, more tortuous trajectories (see purple dashed boxes), reflecting repeated reassignment of visual support across regions.
}
\label{fig:traj}
\end{figure*}
\begin{figure}[!t]
\centering
\includegraphics[width=.80\linewidth]{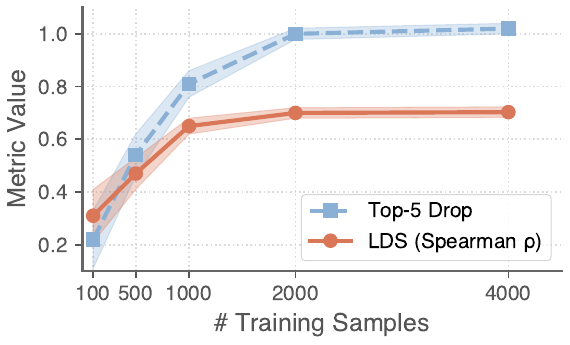}
\caption{
    \textbf{Training data efficiency.} LDS saturates at $\sim$2k examples; 1000 examples suffice for $>$90\% of peak performance.
}
\label{fig:abl_training_data}
\end{figure}

\begin{figure}[t]
\centering
\includegraphics[width=\linewidth]{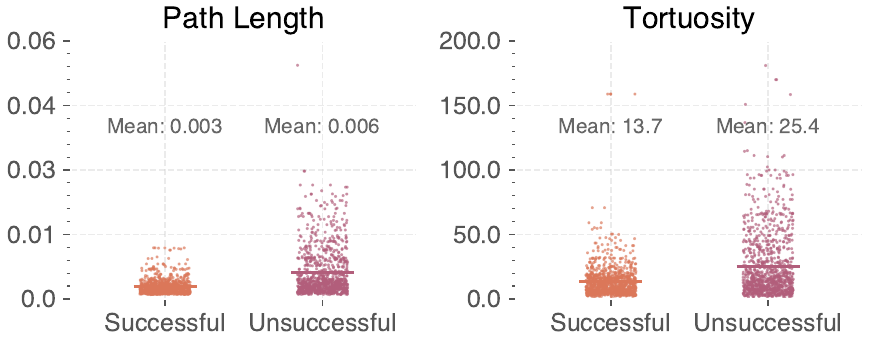}
\caption{
    \textbf{Distribution of trajectory metrics.}
    Unsuccessful reasoning chains exhibit higher path length and tortuosity than correct chains ($p{<}10^{-4}$, $n{=}1500$ each).
    The greater spread and outliers among unsuccessful samples reflect unstable visual grounding during failed reasoning.
}
\label{fig:traj_strip}
\end{figure}
\begin{figure*}[t]
\centering
\includegraphics[width=.935\linewidth]{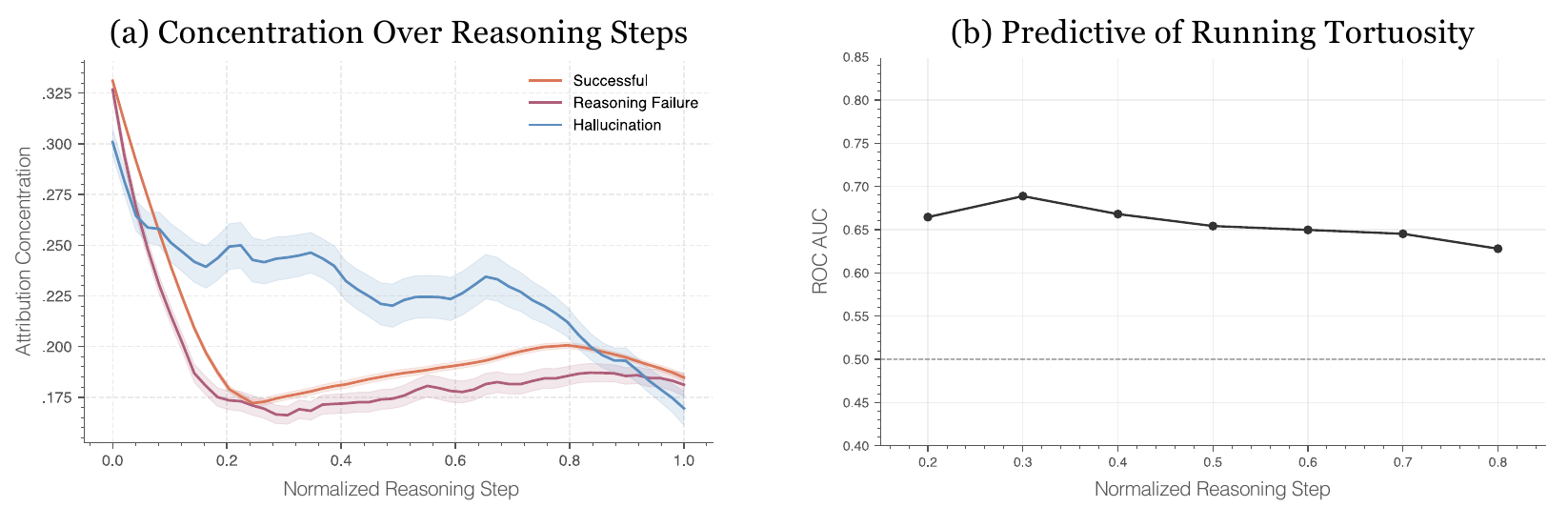}
\caption{
    \textbf{Attribution concentration and early failure detection.}
    \textit{(Left)} Mean concentration over normalized reasoning steps (mean $\pm$ SEM), grouped by outcome type.
    \textit{(Right)} Tortuosity-based failure prediction AUC over reasoning progress.
}
\label{fig:conc_and_auc}
\end{figure*}
\begin{figure}[t]
\centering
\includegraphics[width=\linewidth]{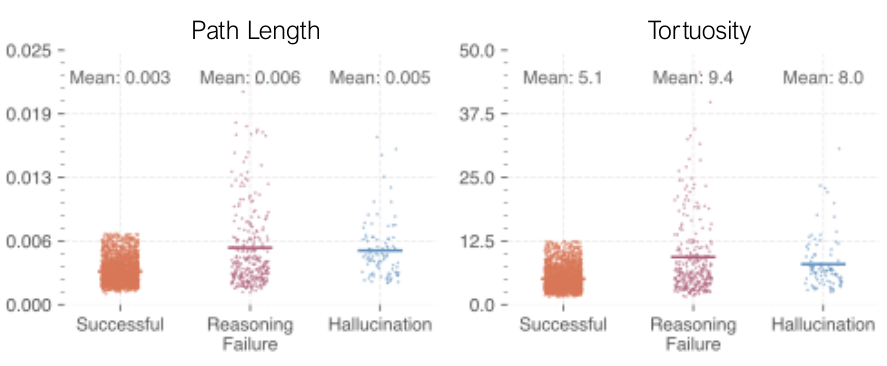}
\caption{
    \textbf{Trajectory metrics across three outcome categories (POPE, $n{=}3{,}000$).}
    Both reasoning failures and hallucinations exhibit higher path length and tortuosity than successful chains ($p{<}.001$, Bonferroni-corrected).
    The two error types are distinguished by concentration: hallucinations maintain sustained high concentration (\emph{Fixation}), while reasoning failures show unstable attention (\emph{Wandering}).
}
\label{fig:traj_3way}
\end{figure}

At each reasoning step $s$, we compute a \emph{region-effect vector} $\ve_s \in \mathbb{R}^{K}$ of predicted ablation effects. To compare examples with different numbers of regions, we canonicalize each step by keeping the top-$R$ regions ($R{=}32$) and project the resulting profiles with PCA for visualization.

As visualized in \cref{fig:traj}, unsuccessful reasoning chains exhibit more tangled and convoluted trajectories compared to successful ones, reflecting unstable visual grounding.
Quantitatively, as shown in \cref{fig:traj_strip}, successful chains have shorter path length in PCA space than unsuccessful chains ($0.003$ \textit{vs.} $0.006$, $n{=}1500$), and lower tortuosity, which measures how much the path wanders rather than progressing directly ($13.7$ \textit{vs.}\ $25.4$, $n{=}1500$).
We interpret this gap as reduced hypothesis switching \citep{zhong2024snowball}, where successful chains quickly commit to a consistent set of regions while failures repeatedly reassign visual support.

On POPE \citep{li2023evaluating}, unsuccessful cases further split into two geometrically distinct failure modes. Hallucinations sustain high attribution concentration throughout reasoning, a \emph{fixation} on a single incorrect object (also confirmed on CHAIR \citep{rohrbach2018object}; see \cref{app:exp:trajectory}), while reasoning errors show low, unstable concentration, a \emph{wandering} pattern of repeated region switching (\cref{fig:conc_and_auc}, left).
Both modes emerge well before generation ends: tortuosity-based failure prediction reaches AUC $0.69$ at $30\%$ of elapsed reasoning (\cref{fig:conc_and_auc}, right), and the two modes separate cleanly from successes in trajectory metric space (\cref{fig:traj_3way}).
A second independent early-warning signal appears in per-step fidelity $R^2$, which drops for incorrect chains at $\sim$$20\%$ elapsed (\cref{fig:fidelity_decay}).
Both early-warning signals require access to the per-step attribution stream rather than a post-hoc map.
Full statistics are in \cref{app:exp:trajectory}.

\begin{figure}[t]
\centering
\includegraphics[width=.90\linewidth]{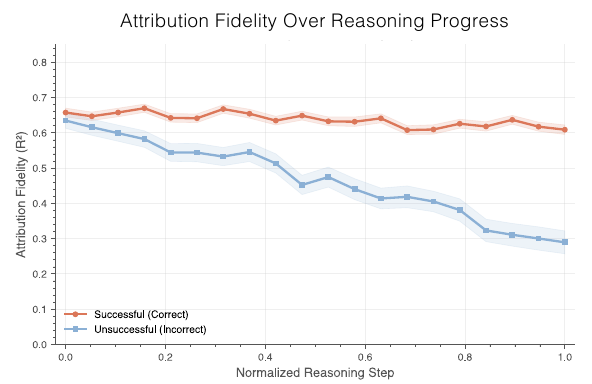}
\caption{
    \textbf{Per-step attribution fidelity ($R^2$) over normalized reasoning steps.}
    Correct chains remain stable; incorrect chains degrade at $\sim$$20\%$ of reasoning elapsed.
}
\vspace{-5pt}
\label{fig:fidelity_decay}
\end{figure}

\section{Conclusion}
\label{sec:conclusion}

We presented \textsc{vStream}, an amortized framework that enables real-time visual attribution streaming in multimodal thinking models.
By learning to predict causal ablation effects from attention features, our approach achieves faithfulness comparable to existing methods while adding negligible computational overhead.
It generalizes across models and tasks, preserving causal reliability at minimal cost.
Beyond static attribution, this efficiency enables trajectory analysis that uncovers a behavioral signature.
By design, \textsc{vStream} extends beyond thinking models to any autoregressive VLM.
We provide further discussion and limitation in the \cref{app:impact}.

\newpage\clearpage

\section*{Impact Statement}

This paper presents work whose goal is to advance the field of machine learning, specifically in the area of interpretability and transparency for vision-language models. By enabling real-time visual attribution, our method helps practitioners understand and verify model reasoning, which we believe contributes positively to the development of trustworthy AI systems. A more detailed discussion of broader impacts and limitations is provided in \cref{app:impact}.

\bibliography{main}
\bibliographystyle{icml2026}

\newpage\clearpage
\appendix

\makeatletter
\def\addcontentsline#1#2#3{%
  \addtocontents{#1}{\protect\contentsline{#2}{#3}{\thepage}{\@currentHref}}}
\makeatother

\onecolumn

\vspace*{\fill}

{\Large \textbf{Technical Appendices}}

\vspace{0.5cm}

\newcounter{appdxsectoc}

\begingroup
  \setlength{\columnsep}{2.5em}
  \etocsetnexttocdepth{subsection}
  \etocsettocstyle{}{}
  \etocsetstyle{section}
    {\setcounter{appdxsectoc}{0}}
    {}
    {\stepcounter{appdxsectoc}%
     \ifnum\value{appdxsectoc}=6\columnbreak\fi
     \vspace{0.3\baselineskip}%
     {\large\etoclink{\textbf{\etocnumber}}\hspace{0.75em}\etoclink{\etocname}}\par}
    {}
  \etocsetstyle{subsection}
    {}
    {}
    {\hspace{1.5em}{\etoclink{\etocnumber}\hspace{0.5em}\etoclink{\etocname}}\par}
    {}
  \begin{multicols}{2}
    \setlength{\parskip}{0.15\baselineskip}
    \setlength{\parindent}{0pt}
    \localtableofcontents*
  \end{multicols}
\endgroup

\vspace{\fill}

\newpage


\section{Theoretical Foundations}%
\label{app:theory}

In this section, we analyze our method through the lens of existing interpretability frameworks and formalize the properties of our estimator.

\subsection{Connections to Existing Frameworks}

\textbf{Connection to Causal Abstraction.}
Our attribution score $\Delta_k(t)$ aligns with the interventional definitions in causal abstraction \citep{geiger2025causal}, measuring the causal effect of removing visual region $R_k$. Although conceptually similar to activation patching, we do not explicitly define a high-level structural causal model (SCM). Instead, we treat the interventional effect $\Delta_k(t)$ as a ground-truth signal to be efficiently approximated.

\textbf{Relation to the Linear Representation Hypothesis.}
Our linear estimator $\widehat{\Delta} \approx \vw^\top \vf$ relies on the assumption that task-relevant causal information is linearly decodable, consistent with the Linear Representation Hypothesis \citep{park2024linear}. We extend this to attention-based features. We note that linearity is an empirical assumption; we cannot theoretically guarantee that the complex counterfactual effect of a visual ablation is perfectly captured by a linear projection.

\textbf{Interpretation as Marginal Contribution.}
The quantity $\Delta_k(t)$ represents the marginal contribution of region $R_k$ to the target token's log-probability, similar to Shapley values \citep{lundberg2017unified}. Unlike Shapley values, we do not average over all feature coalitions due to computational costs. Our metric captures individual impact but does not strictly guarantee efficiency or additivity.

\subsection{Properties of the Estimator}

We now formally state key properties of our estimator, motivating our choice of objective and quantifying the computational benefits.

First, we establish that the Pearson correlation objective is robust to affine transformations, justifying its use even if the scale of the predictor differs from the target.

\begin{lemma}[Affine invariance of the Pearson objective]
\label{lem:pearson-affine-invariance}
Let $u,v$ be random variables with $0<\Var(u),\Var(v)<\infty$. For any $a\neq 0$ and any $b\in\R$,
\[
\rho\!\left(u,\, a v + b\right)=\sign(a)\,\rho(u,v).
\]
In particular, for $a>0$ the Pearson correlation is invariant to scaling and shifting of the predictor.
\end{lemma}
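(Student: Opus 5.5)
I will prove the statement directly from the definition $\rho(u,v)=\Cov(u,v)/\sqrt{\Var(u)\Var(v)}$, which is well defined here because $0<\Var(u),\Var(v)<\infty$. The first step is to check that the transformed predictor $w:=av+b$ also satisfies the hypotheses. By the standard scaling and shift rule for variance, $\Var(w)=a^2\Var(v)$. Since $a\neq 0$, this gives $0<\Var(w)<\infty$, so $\rho(u,w)$ is well defined and no degenerate case needs separate treatment.

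The second step computes the covariance using bilinearity. Writing $\Cov(u,w)=\E[(u-\E u)(w-\E w)]$ and noting $w-\E w=a(v-\E v)$, the constant $b$ cancels. This leaves $\Cov(u,av+b)=a\,\Cov(u,v)$, and the covariance is finite by Cauchy--Schwarz because both variances are finite.

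The third step substitutes both quantities into the definition:
\[
\rho(u,av+b)=\frac{a\,\Cov(u,v)}{\sqrt{\Var(u)\,a^2\Var(v)}}=\frac{a}{|a|}\,\rho(u,v)=\sign(a)\,\rho(u,v).
\]
The only point needing care is the identity $\sqrt{a^2}=|a|$, which is where the sign factor comes from. The ``in particular'' clause then follows immediately, since $\sign(a)=1$ for $a>0$.

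I do not expect a real obstacle; the argument is routine. The one issue to state explicitly is well-definedness: $a\neq 0$ is exactly what keeps $\Var(av+b)$ nonzero, so that the denominator never vanishes. The same computation also applies to the empirical Pearson correlation over the $N$ masks in \cref{eq:loss}, with population moments replaced by sample moments. This is the form in which the lemma justifies that the objective is insensitive to the scale and offset of $\widehat{\Delta}$.
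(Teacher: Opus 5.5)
Your proof is correct and follows the paper's argument: you compute $\Cov(u,av+b)=a\,\Cov(u,v)$ and $\Var(av+b)=a^2\Var(v)$, then use $\sqrt{a^2}=|a|$ to get the factor $a/|a|=\sign(a)$. Your explicit check that $a\neq 0$ keeps the denominator nonzero is a small addition that the paper leaves implicit.
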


\begin{proof}
Using $\rho(u,v)=\frac{\Cov(u,v)}{\sqrt{\Var(u)\Var(v)}}$, we have
$\Cov(u,av+b)=a\,\Cov(u,v)$ and $\Var(av+b)=a^2\Var(v)$, hence
$\rho(u,av+b)=\frac{a}{|a|}\rho(u,v)=\sign(a)\rho(u,v)$.
\end{proof}

Next, we analyze the computational advantage of our method compared to calculating true causal effects via brute-force ablation.

\begin{proposition}[Computational complexity comparison under teacher forcing]
\label{prop:complexity}
Fix a target span $S$ and regions $\{R_k\}_{k=1}^K$. Let $C_{\mathrm{fwd}}$ denote the cost of one VLM forward pass under teacher forcing, i.e., producing logits for all positions in $S$ while conditioning on the same prefix tokens $y_{<t}$ as in \cref{eq:delta_token}.

\textbf{(Direct ablation, exact).}
Computing the exact ablation effects $\{\Delta_k(t)\}_{t\in S}$ for all regions requires one forward pass on each ablated input, hence $K$ additional forward passes per span:
\[
\Theta(K\cdot C_{\mathrm{fwd}}),
\]
(or $K{+}1$ forward passes if counting the unablated baseline as well).

\textbf{(Ours, cached attention).}
After obtaining the cross-attention weights from the unablated run, we (i) pool cached attention over tokens and vision positions to form region features and (ii) apply a linear score for each region.
Step (ii) alone costs $O(K\cdot L\cdot H)$ for $K$ dot-products in $\R^{L\cdot H}$.
The total cost is dominated by lightweight tensor reductions over cached attention and does not require any additional VLM forward passes.
\end{proposition}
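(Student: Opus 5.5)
The two claims are essentially bookkeeping, and I would handle them separately, treating the cost of one teacher-forced forward pass $C_{\mathrm{fwd}}$ as the unit for the direct method and counting elementary arithmetic operations for ours.

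For the direct ablation bound, the upper bound comes from the definition in \cref{eq:delta_token}. For a fixed region $R_k$, the masked logits of \cref{eq:ablation_mask} are applied uniformly to all layers, heads and query positions. Under teacher forcing, a single forward pass on $\mathrm{ablate}(x,R_k)$ with the fixed prefix $y_{<t}$ therefore produces $\log p_\theta(y_t\mid \mathrm{ablate}(x,R_k),y_{<t})$ simultaneously for every $t\in S$. Subtracting the unablated log-probabilities, which come from one shared pass, gives all $\Delta_k(t)$, so $K$ ablated passes plus one baseline pass suffice, i.e.\ $O(K\cdot C_{\mathrm{fwd}})$.

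The lower bound is the main obstacle, since in full generality one cannot rule out clever sharing. I would argue within the natural computational model of the statement, where the only available operation is a forward pass under a fixed attention mask. Distinct regions are disjoint, so for $k\neq k'$ the masks differ. The model is nonlinear, so $\Delta_k(t)$ is not determined by the outputs of passes using other masks. Hence each of the $K$ masked configurations must be evaluated at least once, which gives $\Omega(K\cdot C_{\mathrm{fwd}})$ and thus $\Theta$. I would state this model assumption explicitly rather than claim an unconditional lower bound. One might object that the $K$ masks could be batched into one pass. Batching reduces wall-clock time but not total compute, which still scales as $K\cdot C_{\mathrm{fwd}}$.

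For our method, I would count operations step by step.
\begin{itemize}
\item Forming $\va^{(\ell,h)}_{S,k}$ in \cref{eq:attn_pool} for all $k,\ell,h$ is a reduction over cached attention entries of size $|S|\cdot M\cdot L\cdot H$. Because the regions partition the $M$ vision tokens via $\mM$, each attention entry is summed exactly once, giving $O(|S|\cdot M\cdot L\cdot H)$ with no dependence on $K$ beyond normalization.
\item Applying \cref{eq:per_region_score} requires $K$ inner products in $\R^{L\cdot H}$, which is $O(K\cdot L\cdot H)$.
\end{itemize}
Neither step invokes the VLM, and the attention weights are by-products of the unablated generation pass that is run anyway. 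The total is therefore free of any $C_{\mathrm{fwd}}$ term. A remark that $C_{\mathrm{fwd}}$ itself already contains the $\Omega(|S|\cdot M\cdot L\cdot H)$ attention computation shows that the pooling step is dominated by a single forward pass, and so is negligible relative to the $K$ passes of direct ablation.
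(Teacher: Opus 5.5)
Your proposal is correct and follows the paper's own reasoning: the paper gives no separate proof, and its justification is the inline counting in the statement (one teacher-forced pass per ablated input, $K$ dot-products in $\R^{L\cdot H}$, no extra VLM passes) together with the $O(T\cdot L\cdot H\cdot M)$ pooling term listed in \cref{tab:complexity}. Your explicit caveat that the $\Omega(K\cdot C_{\mathrm{fwd}})$ half of the $\Theta$ bound holds only in a pass-counting model is a useful addition, since the paper asserts $\Theta$ without justifying that lower bound.
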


We also justify the form of our estimator. If the conditional mean of the target is linear in the features, maximizing correlation recovers the optimal direction.

\begin{proposition}[Optimal linear predictor under linear conditional mean]
\label{prop:optimal-linear-predictor}
Let $\vf\in\R^d$ be a feature vector with positive definite covariance $\mSigma_{ff}$, and let $\Delta$ be a scalar target. Assume:
\[
\E[\Delta \mid \vf] = \beta + \vw_\star^\top \vf
\quad\text{for some }\beta\in\R,\ \vw_\star\in\R^d.
\]
Then maximizing $\rho(\vw^\top \vf, \Delta)$ over $\vw$ yields $\widehat{\vw} \propto \vw_\star$.
\end{proposition}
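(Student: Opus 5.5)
The plan is to reduce the correlation objective to a Rayleigh-quotient / Cauchy--Schwarz problem in the geometry induced by $\mSigma_{ff}$. We need $\Var(\Delta) \in (0,\infty)$ and $\vw_\star \neq \vzero$; the latter is necessary, since if $\vw_\star=\vzero$ every $\vw$ gives correlation zero and the claim is vacuous. First I will compute the cross-covariance $\Cov(\vf,\Delta)$ using the tower property. Write $\Delta = \beta + \vw_\star^\top\vf + \varepsilon$ with $\E[\varepsilon\mid\vf]=0$. Then $\Cov(\vf,\varepsilon)=\E[(\vf-\E\vf)\,\E[\varepsilon\mid\vf]]=\vzero$, hence
\[
\Cov(\vf,\Delta)=\mSigma_{ff}\vw_\star .
\]
This is the only place the linear-conditional-mean assumption enters.

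Next, for any $\vw\neq\vzero$ we have $\Var(\vw^\top\vf)=\vw^\top\mSigma_{ff}\vw>0$ by positive definiteness, so
\[
\rho(\vw^\top\vf,\Delta)=\frac{\vw^\top\mSigma_{ff}\vw_\star}{\sqrt{\vw^\top\mSigma_{ff}\vw}\,\sqrt{\Var(\Delta)}} .
\]
The factor $\sqrt{\Var(\Delta)}$ is independent of $\vw$, so maximizing $\rho$ amounts to maximizing $\langle \vw,\vw_\star\rangle_{\mSigma}/\|\vw\|_{\mSigma}$. Here $\langle a,b\rangle_{\mSigma}=a^\top\mSigma_{ff}b$ is a genuine inner product because $\mSigma_{ff}\succ 0$. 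By Cauchy--Schwarz in this inner product,
\[
\langle \vw,\vw_\star\rangle_{\mSigma}\le \|\vw\|_{\mSigma}\|\vw_\star\|_{\mSigma},
\]
with equality if and only if $\vw=c\,\vw_\star$ for some $c>0$. Therefore the maximum value is $\|\vw_\star\|_{\mSigma}/\sqrt{\Var(\Delta)}$, and the set of maximizers is exactly $\{c\vw_\star : c>0\}$, which gives $\widehat{\vw}\propto\vw_\star$. Equivalently, one can substitute $\vu=\mSigma_{ff}^{1/2}\vw$ and reduce to ordinary Cauchy--Schwarz.

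Finally, I will note the link to \cref{lem:pearson-affine-invariance}. That lemma explains why the maximizer is only determined up to positive scaling, and why the intercept $\beta$ plays no role. It also shows that the maximizing direction coincides with the population least-squares coefficient $\mSigma_{ff}^{-1}\Cov(\vf,\Delta)=\vw_\star$, up to scale. The main obstacle is the tower-property step establishing $\Cov(\vf,\varepsilon)=\vzero$, which needs finite second moments of $\vf$ and $\Delta$; I will state these integrability assumptions explicitly. The equality case of Cauchy--Schwarz then handles uniqueness of the direction.
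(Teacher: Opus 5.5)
Your proposal is correct and follows the paper's proof. Both write the correlation as $\vw^\top\Cov(\vf,\Delta)/(\sqrt{\vw^\top\mSigma_{ff}\vw}\,\sqrt{\Var(\Delta)})$, maximize this Rayleigh-type quotient, and use $\Cov(\vf,\Delta)=\mSigma_{ff}\vw_\star$ from the linear conditional mean; you substitute that identity before maximizing and make the maximization explicit via Cauchy--Schwarz in the $\mSigma_{ff}$-inner product, which also pins down the positive sign of the scale and excludes the degenerate case $\vw_\star=\vzero$ that the paper leaves implicit.
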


\begin{proof}
Write $\rho(\vw^\top\vf,\Delta)=\frac{\vw^\top\mSigma_{f\Delta}}{\sqrt{\vw^\top\mSigma_{ff}\vw}\sqrt{\Var(\Delta)}}$ where $\mSigma_{f\Delta}=\Cov(\vf,\Delta)$.
Maximizing this Rayleigh quotient yields $\vw\propto \mSigma_{ff}^{-1}\mSigma_{f\Delta}$.
Under the linear conditional mean, $\mSigma_{f\Delta}=\mSigma_{ff}\vw_\star$, hence $\widehat{\vw}\propto \vw_\star$.
\end{proof}

We emphasize that the linear conditional mean is an \emph{assumption}, not a guaranteed property of VLMs. In practice, whether attention features linearly predict ablation effects is an empirical question; our experimental results suggest this approximation is effective, but it may not hold universally.

The following remark clarifies the connection between Pearson correlation and least-squares regression.

\begin{remark}[Correlation maximization as standardized least squares with optimal rescaling]
\label{rem:standardized-mse}
Let $\tilde{\Delta} = (\Delta - \E[\Delta])/\mathrm{std}(\Delta)$ so that $\E[\tilde{\Delta}]=0$ and $\Var(\tilde{\Delta})=1$.
For any predictor $z=\vw^\top \vf$, consider the best affine rescaling of $z$ to fit $\tilde{\Delta}$:
\[
\min_{\alpha\in\R,\, b\in\R} \ \E\big[(\tilde{\Delta} - (\alpha z + b))^2\big].
\]
The minimizer satisfies $b^\star=\E[\tilde{\Delta}-\alpha z]$ and $\alpha^\star=\Cov(\tilde{\Delta},z)/\Var(z)$, yielding
\[
\min_{\alpha,b}\ \E\big[(\tilde{\Delta} - (\alpha z + b))^2\big]
= 1 - \rho(z,\Delta)^2.
\]
Therefore, maximizing the Pearson objective $\rho(\vw^\top \vf,\Delta)$ is equivalent (up to an arbitrary sign of $\vw$) to minimizing the standardized mean squared error after allowing the optimal affine rescaling of predictions. This explains why our correlation-based loss focuses on the pattern of effects rather than their absolute scale.
\end{remark}

Finally, we note a basic consistency property: if a region has no associated features (e.g., zero attention), our estimator correctly predicts zero effect.

\begin{remark}[Dummy property]
\label{rem:dummy}
If $\vf_{S,k}=\vzero$, then $\widehat{\Delta}_k = \vw^\top \vf_{S,k} = 0$ for any $\vw$.
\end{remark}

\subsection{Detailed Complexity Analysis}
\label{app:theory:complexity}

We provide a detailed breakdown of computational costs for different attribution methods.

\begin{table}[h]
\centering
\caption{Computational complexity comparison for attributing $T$ tokens to $K$ regions under teacher forcing. A single forward pass produces logits (and attention weights) for all $T$ positions. $C_{\text{fwd}}$ and $C_{\text{bwd}}$ denote the cost of one forward and backward pass, respectively.}
\label{tab:complexity}
\small
\begin{tabular}{lcc}
\toprule
\textbf{Method} & \textbf{Complexity} & \textbf{Passes} \\
\midrule
Perturbation (exact) & $O(K \cdot C_{\text{fwd}})$ & $K$ forward \\
Gradient-based & $O(C_{\text{fwd}} + C_{\text{bwd}})$ & $1$ forward + backward \\
Attention (raw) & $O(T \cdot L \cdot H \cdot M)$ & 0 (cached) \\
\textbf{Ours} & $O(T \cdot L \cdot H \cdot M + T \cdot K \cdot L \cdot H)$ & 0 (cached) \\
\bottomrule
\end{tabular}
\end{table}

For a typical reasoning trace with $T = 2000$ tokens, $K = 50$ regions, $L = 28$ layers, and $H = 28$ heads (teacher forcing):
\begin{itemize}
    \item \textbf{Perturbation (exact)}: $K = 50$ forward passes on ablated inputs (plus one unablated baseline if counted)
    \item \textbf{Gradient-based}: $1$ forward-backward pair (using a loss aggregated over the $T$ tokens)
    \item \textbf{Ours (scoring only)}: $T \times K \times L \times H = 2000 \times 50 \times 28 \times 28 \approx 78\text{M}$ multiply-adds
\end{itemize}

The key insight is that our method's complexity is \emph{independent} of the VLM's size, depending only on the number of layers and heads. This makes it equally efficient for 7B and 70B parameter models.

\subsection{Why Attention Features Predict Ablation Effects}
\label{app:theory:why_attention}

A natural question is: why should attention features contain information about counterfactual ablation effects? We provide intuition from two perspectives.

\paragraph{Information Bottleneck Perspective.}
In transformer architectures, cross-attention weights modulate how much information flows from source positions (vision tokens) to target positions (generated text tokens).
If the model assigns large cross-attention mass to a region $R_k$, then the cross-attention block can inject more signal from that region into the token representation, so ablating $R_k$ should tend to have a larger counterfactual effect.

To make this precise, let $\vo_t^{(\ell)}$ denote the \emph{cross-attention output component} at text position $t$ in layer $\ell$ (i.e., the multi-head cross-attention output before adding the residual stream and before any MLP updates).
A standard multi-head cross-attention computation can be written as
\begin{equation}
\label{eq:info_flow_crossattn_component}
\vo_t^{(\ell)}
=
\mW_O^{(\ell)}
\Big[
\sum_{i=1}^{M} \mA^{(\ell,1)}_{t,i}\,\mW_V^{(\ell,1)}\vx_i
\ ;\ \dots\ ;\
\sum_{i=1}^{M} \mA^{(\ell,H)}_{t,i}\,\mW_V^{(\ell,H)}\vx_i
\Big],
\end{equation}
where $\mA^{(\ell,h)}_{t,i}$ is the cross-attention weight from text token $t$ to vision token $i$ in head $h$, $\mW_V^{(\ell,h)}$ is the value projection, and $\mW_O^{(\ell)}$ is the output projection.

The contribution of region $R_k$ to this cross-attention output is then the corresponding partial sum over $i\in R_k$:
\begin{equation}
\label{eq:info_flow_region_component}
\vo_t^{(\ell)}[R_k]
\approx
\mW_O^{(\ell)}
\Big[
\sum_{i\in R_k} \mA^{(\ell,1)}_{t,i}\,\mW_V^{(\ell,1)}\vx_i
\ ;\ \dots\ ;\
\sum_{i\in R_k} \mA^{(\ell,H)}_{t,i}\,\mW_V^{(\ell,H)}\vx_i
\Big].
\end{equation}
This approximation isolates the cross-attention pathway only; subsequent residual connections, normalization, and MLP mixing can further transform and redistribute this signal.
Nevertheless, \cref{eq:info_flow_region_component} highlights that cross-attention weights directly gate the magnitude of the region-dependent component injected into the representation, motivating why pooled attention features can correlate with ablation effects.

\paragraph{Gradient Flow Perspective.}
During backpropagation, attention also modulates sensitivity to input perturbations by gating how gradients propagate from text positions back to vision tokens.

Let $\mathrm{head}_t^{(\ell,h)}=\sum_{i=1}^{M}\mA^{(\ell,h)}_{t,i}\,\mW_V^{(\ell,h)}\vx_i$ denote the value-aggregation output of head $(\ell,h)$ at text position $t$.
Ignoring the dependence of $\mA^{(\ell,h)}_{t,i}$ on $\vx_i$ (i.e., dropping the additional terms that flow through the key and query pathways), the value-path contribution to the gradient can be written as the following heuristic approximation:
\begin{equation}
\label{eq:grad_flow_value_path}
\frac{\partial \mathcal{L}}{\partial \vx_i}
\approx
\sum_{t}\sum_{\ell,h}
(\mW_V^{(\ell,h)})^\top
\Big(
\mA^{(\ell,h)}_{t,i}\,
\frac{\partial \mathcal{L}}{\partial \mathrm{head}_t^{(\ell,h)}}
\Big).
\end{equation}
More generally, the exact gradient also contains additional terms arising from $\partial \mA^{(\ell,h)}_{t,i}/\partial \vx_i$ (through keys, and indirectly through queries when vision tokens influence later text states).
Thus, \cref{eq:grad_flow_value_path} should be interpreted as an intuition: attention weights gate one dominant pathway for gradient flow, which helps explain why attention-derived features can predict counterfactual sensitivity.

\subsection{Generalization Properties}
\label{app:theory:generalization}

We analyze factors affecting cross-task and cross-model generalization of our estimator.

\paragraph{Feature Distribution Shift.}
The estimator's generalization depends on how similar the attention feature distributions are between training and test conditions. Let $\mathcal{F}_{\text{train}}$ and $\mathcal{F}_{\text{test}}$ denote the feature distributions. If $\mathcal{F}_{\text{test}}$ is within the support of $\mathcal{F}_{\text{train}}$, the linear estimator can interpolate effectively. However, if test features lie outside the training distribution (e.g., novel visual patterns), extrapolation may fail.

Our cross-task experiments (\cref{tab:cross_task}) show that Math and Science transfer well to each other (shared diagram structures), while transfer to Document is weaker (distinct layouts). This aligns with the intuition that generalization depends on visual similarity.

\paragraph{Model Architecture Dependence.}
Different VLM architectures may encode visual information in different layers and heads. Our estimator learns a weighting $\vw$ over layer-head pairs that is specific to a given architecture. Cross-model transfer would require that:
\begin{enumerate}
    \item Both models have similar layer-head structure (same $L$ and $H$).
    \item Visual grounding emerges at similar depths in both models.
\end{enumerate}

Our experiments train separate estimators per model, avoiding cross-model transfer issues. Future work could explore architecture-agnostic features that enable zero-shot transfer.

\paragraph{Sample Complexity.}
Given the low dimensionality of our estimator ($L \cdot H$ parameters), we expect good generalization from relatively few samples. The training data efficiency experiments (\cref{fig:abl_training_data}) confirm that approximately 500--2000 examples suffice for convergence, consistent with standard generalization bounds for linear models in $\R^{784}$.

\section{Implementation Details}
\label{app:implementation}

In this section, we provide the technical specifications required to reproduce our method, including details on the reasoning models employed, the DINO-based visual clustering pipeline, the training procedure for the linear estimator, and the hardware environment.

\subsection{Datasets}
\label{app:implementation:datasets}

We organize our evaluation datasets into five categories based on the type of visual reasoning required.

\textbf{General.}
\begin{itemize}
    \item \textbf{GQA}~\citep{hudson2019gqa}: Real-world visual reasoning with compositional questions about spatial relations and object attributes.
\end{itemize}

\textbf{Document.}
\begin{itemize}
    \item \textbf{DocVQA}~\citep{mathew2021docvqa}: Question answering over document images including forms, invoices, and reports.
\end{itemize}

\textbf{Science.}
\begin{itemize}
    \item \textbf{ScienceQA}~\citep{lu2022scienceqa}: Multimodal science questions across natural, social, and language sciences with chain-of-thought reasoning.
\end{itemize}

\textbf{Math.}
\begin{itemize}
    \item \textbf{MathVista}~\citep{lu2023mathvista}: Mathematical reasoning over diagrams, plots, and geometric figures.
    \item \textbf{MathVision}~\citep{wang2024mathvision}: Multi-step mathematical problem solving with visual inputs.
    \item \textbf{MathVerse}~\citep{Zhang2024MathVerse}: Diagram-based math problems testing visual-symbolic integration.
\end{itemize}

\textbf{Code.}
\begin{itemize}
    \item \textbf{ChartMimic}~\citep{yang2024chartmimic}: Code generation to reproduce charts and visualizations from images.
    \item \textbf{WebSight}~\citep{laurencon2024websight}: HTML/CSS generation from website screenshot references.
\end{itemize}

\noindent\textit{Note:} For the Math and Code categories, we combine multiple datasets because the sub-9B models we evaluate often fail on the more challenging examples. Using a single dataset would yield insufficient correctly-answered samples for training data collection.

\subsection{Reasoning Model Architectures}
\label{app:implementation:models}

We evaluate our method across four state-of-the-art vision-language reasoning models. We selected these models to represent a diverse range of architectures and "thinking" capabilities.

\begin{itemize}
    \item \textbf{Qwen3-VL-8B-Thinking}: Based on the Qwen3 language model backbone, this model integrates a vision encoder initialized from SigLIP-SO400M. It uses a perceiver resampler to compress visual features into a fixed number of tokens (typically 256). The "Thinking" variant is fine-tuned on chain-of-thought (CoT) reasoning data, enabling it to generate intermediate reasoning steps before the final answer.
    
    \item \textbf{Cosmos-R1}: A multimodal reasoning model emphasizing robust world knowledge. It uses a ViT-Huge vision encoder and a 7B parameter LLM decoder. The model processes images at a resolution of $336 \times 336$ and uses a cross-attention mechanism for modality fusion.
    
    \item \textbf{MiMo-VL-7B}: This model features a mixture-of-experts (MoE) architecture in the language decoder, allowing for efficient inference despite a larger total parameter count. The vision tower is a CLIP-ViT-L/14, and the projection layer consists of a simple MLP.
    
    \item \textbf{GLM-4.1V-9B-Thinking}: A bilingual (English/Chinese) model with strong reasoning capabilities. It employs a GLM-4 transformer backbone with rotary positional embeddings. The visual component handles high-resolution inputs via a sliding window attention mechanism.
\end{itemize}

For all models, we access the internal attention weights from the last layer of the cross-attention blocks (or self-attention layers where visual tokens are concatenated) to serve as input features for our estimator.

\subsection{Visual Feature Extraction and Clustering}
\label{app:implementation:clustering}

To define interpretable visual regions for attribution, we employ a clustering approach based on self-supervised features.

\textbf{Feature Extraction.} We utilize \textbf{DINOv3-Large} as our feature extractor. We resize input images to $224 \times 224$ and extract the patch-level features from the last transformer layer. DINOv3 is chosen for its superior ability to capture semantic object boundaries compared to supervised baselines.

\textbf{Agglomerative Clustering.} We perform spatially-constrained agglomerative clustering on the extracted patch features.
\begin{itemize}
    \item \textbf{Distance Metric:} We use cosine distance ($1 - \text{cosine similarity}$) between feature vectors to measure semantic dissimilarity.
    \item \textbf{Linkage Criterion:} Ward's linkage is used to minimize the variance within clusters.
    \item \textbf{Adaptive K:} Instead of a fixed number of clusters, we dynamically determine the number of regions $K$ for each image based on a distance threshold $\tau=0.5$ (see \cref{tab:hyperparams}). In practice, this results in $K \in [16, 128]$ regions per image, which provides a balance between granularity and interpretability.
\end{itemize}

\subsection{Linear Estimator Training}
\label{app:implementation:training}

Our method relies on a lightweight linear estimator $W$ trained to predict the impact of masking specific visual regions.

\textbf{Data Sampling.} For each training image, we generate synthetic training data by randomly masking subsets of visual regions.
\begin{itemize}
    \item We sample $N=32$ random binary masks $\vv \in \{0, 1\}^K$ per image.
    \item For each mask, we compute the ground-truth effect on the model's output log-probability for the target token.
    \item This process is computationally efficient because it only requires forward passes (no gradients) and can be batched.
\end{itemize}

\textbf{Optimization.} The linear estimator is trained to maximize the Pearson correlation between the predicted attribution scores and the actual log-probability drops.
\begin{itemize}
    \item \textbf{Objective Function:} We minimize the negative Pearson correlation coefficient.
    \item \textbf{Optimizer:} We use AdamW with a learning rate of $1 \times 10^{-3}$ and weight decay of $1 \times 10^{-4}$.
    \item \textbf{Training Duration:} Due to the simplicity of the linear model, training converges rapidly. We train for $2,000$ iterations, which typically takes less than 1 hour on a single GPU for the entire evaluation dataset.
\end{itemize}

\subsection{Hardware and Computing Infrastructure}
\label{app:implementation:hardware}

We conducted all experiments on a high-performance computing cluster.
\begin{itemize}
    \item \textbf{GPUs:} NVIDIA H100 (80GB) GPUs were used for inference of the large reasoning models (Qwen3-VL, GLM-4.1V). NVIDIA A100 (80GB) GPUs were sufficient for the smaller models and the training of the linear estimator.
    \item \textbf{Software Environment:} PyTorch 2.4, CUDA 12.1, and the HuggingFace Transformers library. We utilized FlashAttention-2 for efficient attention computation during inference.
\end{itemize}

\subsection{Attention Extraction Details}
\label{app:implementation:attention}

Extracting attention weights from different VLM architectures requires careful handling due to architectural variations.

\paragraph{Attention Caching.}
During inference, we cache text-to-vision attention weights at each decoding step. All four models we evaluate (Qwen3-VL, GLM-4.1V, MiMo-VL, Cosmos-R1) use a decoder-only architecture that concatenates vision and text tokens in the same sequence. For these models, we extract the portion of the self-attention matrix corresponding to text$\rightarrow$vision attention.

\paragraph{Memory Management.}
For long reasoning traces (2000+ tokens), caching full attention tensors can consume significant GPU memory. We employ the following strategies:
\begin{itemize}
    \item \textbf{Selective Layer Caching:} We prioritize caching attention from middle layers (11--20), which contribute most to the attribution signal based on empirical analysis.
    \item \textbf{Span-wise Aggregation:} Rather than storing per-token attention, we aggregate to span level on-the-fly when a reasoning step completes, reducing memory from $O(T \cdot L \cdot H \cdot M)$ to $O(S \cdot L \cdot H \cdot M)$ where $S$ is the number of spans.
    \item \textbf{Float16 Precision:} Attention weights are stored in float16 format, halving memory requirements with negligible accuracy loss.
\end{itemize}

\paragraph{FlashAttention Compatibility.}
FlashAttention does not natively return attention weights due to its memory-efficient implementation. For attribution, we use a hybrid approach: standard attention for the cross-modal layers (small overhead due to fixed vision token count) and FlashAttention for text-only self-attention layers. For detailed latency measurements under different attention backends and KV-cache configurations, see \cref{tab:attn_backend_latency}.

\subsection{Span Boundary Detection}
\label{app:implementation:spans}

To provide step-by-step attribution during reasoning, we must detect boundaries between reasoning steps. We use a combination of heuristics:

\begin{itemize}
    \item \textbf{Sentence Boundaries:} Periods, question marks, and exclamation marks followed by whitespace.
    \item \textbf{Structural Markers:} Newlines, bullet points, and numbered list items.
    \item \textbf{Thinking Delimiters:} Model-specific tokens like \texttt{<|think|>}, \texttt{</think>}, or \texttt{Step N:} patterns.
\end{itemize}

For models without explicit thinking delimiters, we default to sentence-level granularity, which typically corresponds to individual reasoning steps in chain-of-thought outputs.

\subsection{Parallel Streaming Architecture}
\label{app:streaming}

To achieve real-time attribution without slowing down generation, we employ a producer-consumer architecture that decouples token generation from attribution computation.

\paragraph{Asynchronous Processing.}
The main thread (producer) focuses exclusively on autoregressive generation using the VLM backbone.
As tokens are generated, the associated attention tensors are pushed into a thread-safe queue.
A background worker thread (consumer) continuously pulls these tensors, aggregates them into span-level features $\vf_{S,k}$ (Eq.~\ref{eq:attn_pool}), and runs the linear estimator (Eq.~\ref{eq:per_region_score}).
Since the Python Global Interpreter Lock (GIL) is released during the heavy CUDA operations of both generation and estimator computation, these two processes run in true parallel on the GPU.

\paragraph{Latency Hiding.}
This design completely hides the attribution cost behind the generation latency.
While the model generates tokens for reasoning step $t+1$, the worker computes attributions for step $t$.
Consequently, the visual attribution for a completed thought appears the moment the model begins the next thought, resulting in a system with effectively zero perceptual overhead compared to vanilla generation.
By contrast, standard post-hoc methods must wait for the entire generation to finish before starting attribution, adding a delay proportional to the sequence length.

\subsection{Reproducibility Checklist}
\label{app:implementation:reproducibility}

To ensure reproducibility, we provide the following specifications:

\begin{table}[h]
\centering
\caption{Key hyperparameters for reproducibility.}
\label{tab:hyperparams}
\small
\begin{tabular}{ll}
\toprule
\textbf{Component} & \textbf{Value} \\
\midrule
\multicolumn{2}{l}{\textit{Region Unitization}} \\
\quad DINOv3 variant &
dinov3-vitl16-pretrain-lvd1689m \\
\quad Input resolution & $224 \times 224$ \\
\quad Clustering method & Agglomerative (Ward) \\
\quad Distance threshold $\tau$ & 0.5 \\
\quad Typical region count $K$ & \textit{auto} \\
\midrule
\multicolumn{2}{l}{\textit{Estimator Training}} \\
\quad Masks per sample $N$ & 32 \\
\quad Optimizer & AdamW \\
\quad Learning rate & $1 \times 10^{-3}$ \\
\quad LR scheduler & Cosine with warmup (min lr: $1 \times 10^{-5}$) \\
\quad Weight decay & $1 \times 10^{-4}$ \\
\quad Training iterations & 2,000 \\
\quad Batch size & 512 \\
\midrule
\multicolumn{2}{l}{\textit{Evaluation}} \\
\quad Top-K for Drop metric & 5 \\
\quad Samples per category & 400 (train, mix-up dataset), 1000 (test, randomly sampled from non-training samples) \\
\bottomrule
\end{tabular}
\end{table}

\paragraph{Random Seeds.}
We use fixed random seeds for all stochastic components: mask sampling (seed=42), model initialization (seed=42), and train/test splits (seed=42). Results are averaged over 3 random seeds for statistical robustness.

\paragraph{Evaluation Protocol.}
For each model-task pair, we:
\begin{enumerate}
    \item Generate reasoning traces for the test set using greedy decoding.
    \item Filter to correctly-answered examples only (for training data quality).
    \item Train the estimator on the training split.
    \item Evaluate on held-out test examples using LDS and Top-K Drop metrics.
    \item Report mean and standard deviation across seeds.
\end{enumerate}

\section{Semantic Region Unitization Analysis}
\label{app:clustering}

This section provides additional analysis of our DINOv3-based semantic region unitization approach, including category-specific clustering patterns and quantitative statistics across dataset categories.

\subsection{Category-Specific Clustering Patterns}
\label{app:clustering:patterns}

Our agglomerative clustering approach exhibits distinct segmentation behaviors across different dataset categories. The DINOv3 features naturally group semantically coherent regions: objects are separated from backgrounds, text blocks are isolated in document images, and mathematical symbols are distinguished from diagram components.

We observe the following category-specific patterns:
\begin{itemize}
    \item \textbf{General (GQA)}: Object boundaries are well-preserved, with distinct regions for foreground objects, background elements, and spatial contexts.
    \item \textbf{Document (DocVQA)}: Text blocks, tables, and graphical elements are cleanly separated, enabling fine-grained attribution to specific document components.
    \item \textbf{Science (ScienceQA)}: Diagrams, labels, and annotations form separate clusters. This is essential for understanding which visual elements support scientific reasoning.
    \item \textbf{Math (MathVista, MathVision, MathVerse)}: Geometric shapes, equations, and coordinate systems are partitioned into interpretable regions, though dense symbolic content sometimes leads to over-segmentation.
    \item \textbf{Code (ChartMimic, WebSight)}: Chart components (axes, legends, data points) and UI elements (buttons, text fields, images) are distinguished effectively.
\end{itemize}

\subsection{Region Count Statistics}
\label{app:clustering:statistics}

The number of semantic regions $K$ varies across images depending on visual complexity. \cref{tab:region_stats} shows the distribution of region counts for each dataset category. We observe:

\begin{itemize}
    \item \textbf{Document and Code} images yield the highest region counts (median $\approx$ 68--75), reflecting the dense, structured nature of text-heavy and UI-rich content.
    \item \textbf{Math} images show high variance, with simpler geometric problems producing fewer regions and complex multi-part diagrams producing more.
    \item \textbf{General and Science} images have moderate region counts (median $\approx$ 45--48), balancing object-level granularity with scene-level coherence.
\end{itemize}

This adaptive behavior is a key advantage over fixed-grid partitioning methods, which cannot adjust granularity based on image content. The cosine distance threshold ($\tau = 0.5$ in our experiments) provides a consistent semantic criterion across diverse visual domains.

\begin{table}[h]
\centering
\caption{
    \textbf{Region count statistics across dataset categories.}
    The adaptive DINO-based clustering produces varying numbers of regions depending on image complexity. Document and Code images yield more regions due to dense visual content.
}
\label{tab:region_stats}
\small
\begin{tabular}{lccccc}
\toprule
\textbf{Category} & \textbf{Min} & \textbf{Q1} & \textbf{Median} & \textbf{Q3} & \textbf{Max} \\
\midrule
General (GQA)     & 18 & 32 & 48 & 72 & 105 \\
Document (DocVQA) & 24 & 45 & 68 & 95 & 128 \\
Science (ScienceQA) & 16 & 30 & 45 & 65 & 98 \\
Math (MathVerse, MathVista, MathVision)              & 16 & 28 & 42 & 62 & 92 \\
Code (Websight, ChartMimic)              & 28 & 52 & 75 & 102 & 128 \\
\bottomrule
\end{tabular}
\end{table}

\subsection{Clustering Hyperparameters}
\label{app:clustering:hyperparameters}

We use the following hyperparameters for semantic region unitization:

\begin{itemize}
    \item \textbf{Feature extractor}: DINOv3-Large (frozen), patch size 16$\times$16
    \item \textbf{Input resolution}: Images are resized to $224 \times 224$
    \item \textbf{Clustering algorithm}: Agglomerative clustering with Ward linkage
    \item \textbf{Distance metric}: Cosine distance ($1 - \text{cosine similarity}$)
    \item \textbf{Distance threshold}: $\tau = 0.5$ (determines cluster granularity)
    \item \textbf{Resulting regions}: $K \in [16, 128]$ depending on image complexity
\end{itemize}

We found the distance threshold $\tau$ to be the most sensitive hyperparameter. Lower values ($\tau < 0.3$) produce too many small regions, making attributions noisy. Higher values ($\tau > 0.7$) merge semantically distinct objects, reducing interpretability. The value $\tau = 0.5$ provides a good balance across all dataset categories.

\paragraph{Comparison with K-means Clustering.}
We also compare agglomerative clustering with K-means clustering using fixed $K$ values. \cref{tab:kmeans_comparison} shows that agglomerative clustering with adaptive $K$ outperforms K-means across all fixed $K$ settings, particularly at lower $K$ values where semantic boundaries become critical.

\begin{table}[h]
\centering
\caption{Comparison of K-means (fixed $K$) vs.\ Agglomerative clustering (adaptive $K$) on Qwen3-VL. Results averaged across all categories.}
\label{tab:kmeans_comparison}
\small
\begin{tabular}{lcc}
\toprule
\textbf{Method} & \textbf{LDS} $\uparrow$ & \textbf{Top-5 Drop} $\uparrow$ \\
\midrule
K-means ($K=16$)  & 0.58 & 0.72 \\
K-means ($K=32$)  & 0.65 & 0.88 \\
K-means ($K=64$)  & 0.68 & 0.94 \\
K-means ($K=128$) & 0.67 & 0.91 \\
\midrule
\rowcolor{gray!10}
\textbf{Agglomerative (adaptive)} & \textbf{0.70} & \textbf{1.00} \\
\bottomrule
\end{tabular}
\end{table}

\section{Pseudocode}

This section provides formal algorithmic descriptions and PyTorch implementations of our amortized attribution framework. We present the training procedure for learning the linear estimator and the inference procedure for real-time attribution streaming.

\section{Algorithm and Implementation Details}
\label{sec:algorithm}

We provide the formal algorithms for training our amortized estimator and performing real-time inference, followed by their PyTorch implementations.

\subsection{Formal Algorithms}

Algorithm~\ref{alg:training} details the self-supervised training procedure used to learn the estimator $\mathcal{E}_\theta$. The core idea is to train the estimator to rank random subsets of regions based on their impact on the model's confidence, maximizing the Pearson correlation between predicted and ground-truth ablation effects.

\begin{algorithm}[h]
   \caption{Training the Amortized Estimator}
   \label{alg:training}
\begin{algorithmic}[1]
   \STATE {\bfseries Input:} VLM $\mathcal{M}$, Estimator $\mathcal{E}_\theta$, Batch $\mathcal{B} = \{(\vx, \vt, y, \gR)\}$
   \STATE {\bfseries Hyperparameters:} Sample size $N=32$, Learning rate $\eta$
   \STATE
   \STATE \textit{// 1. Compute baseline and attention features}
   \STATE Get logits $P(y|\vx, \vt)$ and attention maps $\mA$ from $\mathcal{M}(\vx, \vt)$
   \STATE $\log p_{\text{base}} \gets \log P(y|\vx, \vt)$
   \STATE
     \STATE \textit{// 2. Sample random binary masks}
     \STATE Sample $N$ random binary masks $\{\vb^{(i)}\}_{i=1}^N$ where $\vb^{(i)} \in \{0,1\}^K$
    
    \STATE \textit{// 3. Compute Ground Truth Effect (Self-Supervised)}
    \FOR{$i=1$ {\bfseries to} $N$}
        \STATE Compute ablated logits $P(y|\vx, \vt, \vb^{(i)})$ using $\mathcal{M}$ with mask $\vb^{(i)}$
         \STATE $\delta^{(i)}_{\text{true}} \gets \log p_{\text{base}} - \log P(y|\vx, \vt, \vb^{(i)})$
    \ENDFOR
    \STATE $\boldsymbol{\delta}_{\text{true}} \gets [\delta^{(1)}_{\text{true}}, \dots, \delta^{(N)}_{\text{true}}]$
   \STATE
   \STATE \textit{// 4. Predict Effect using Estimator}
   \STATE Compute attribution scores $\vs = \mathcal{E}_\theta(\mA, \gR)$ \hfill \textit{(See Algorithm~\ref{alg:inference})}
    \STATE Predict subset effects: $\delta^{(i)}_{\text{pred}} \gets \sum_{k=1}^K s_k \cdot \vb^{(i)}_k$ \hfill \textit{(Linear Assumption, \cref{eq:per_region_score})}
   \STATE $\boldsymbol{\delta}_{\text{pred}} \gets [\delta^{(1)}_{\text{pred}}, \dots, \delta^{(N)}_{\text{pred}}]$
   \STATE
    \STATE \textit{// 5. Optimization}
    \STATE $\mathcal{L} \gets - \text{PearsonCorr}(\boldsymbol{\delta}_{\text{pred}}, \boldsymbol{\delta}_{\text{true}})$
    \STATE Update $\theta \gets \theta - \eta \nabla_\theta \mathcal{L}$
\end{algorithmic}
\end{algorithm}

Algorithm~\ref{alg:inference} describes the inference process. The trained estimator maps attention patterns directly to attribution scores without requiring multiple forward passes.

\begin{algorithm}[h]
   \caption{Real-Time Attribution Inference (Asynchronous)}
   \label{alg:inference}
\begin{algorithmic}[1]
   \STATE {\bfseries Input:} VLM $\mathcal{M}$, Estimator $\mathcal{E}_\theta$
   \STATE {\bfseries Output:} Stream of tokens $y_t$ and attributions $\vs_S$
   \STATE
   \STATE \textit{// Initialize concurrent queues}
   \STATE $\mathcal{Q}_{\text{attn}} \gets \text{Queue}()$
   \STATE
   \STATE \textbf{Thread 1: Generation (Producer)}
   \FOR{$t=1$ {\bfseries to} $T$}
       \STATE $y_t, \mA_t \gets \mathcal{M}(x, y_{<t})$ \hfill \textit{(Generate next token)}
       \STATE \textbf{yield} $y_t$
       \STATE $\mathcal{Q}_{\text{attn}}.\text{push}(\mA_t)$
       \IF{$y_t$ ends span $S$}
            \STATE $\mathcal{Q}_{\text{attn}}.\text{push}(\text{EndOfSpan})$
       \ENDIF
   \ENDFOR
   \STATE
   \STATE \textbf{Thread 2: Attribution (Consumer)}
   \WHILE{generation active}
       \STATE $\mA_{\text{buffer}} \gets [\ \ ]$
       \WHILE{next item in $\mathcal{Q}_{\text{attn}}$ is not EndOfSpan}
           \STATE $\mA_{\text{buffer}}.\text{append}(\mathcal{Q}_{\text{attn}}.\text{pop}())$
       \ENDWHILE
        \STATE $\mA_S \gets \text{Aggregate}(\mA_{\text{buffer}})$ \hfill \textit{(\cref{eq:attn_pool})}
        \STATE $\vs_S \gets \mathcal{E}_\theta(\mA_S)$ \hfill \textit{(\cref{eq:per_region_score})}
       \STATE \textbf{yield} $\vs_S$ \hfill \textit{(Stream attribution for completed span)}
   \ENDWHILE
\end{algorithmic}
\end{algorithm}

\clearpage
\subsection{PyTorch Implementation}

We provide the corresponding PyTorch implementation for reference. Implementation~\ref{lst:training_code} shows the training step, and Implementation~\ref{lst:inference_code} shows the amortized attributor module.

\begin{figure}[h]
\begin{lstlisting}[language=Python, caption={PyTorch implementation of the Training Step. The estimator is trained to rank random subsets of regions by maximizing the Pearson correlation between predicted scores and actual ablation drops.}, label={lst:training_code}]
def train_step(model, estimator, batch, optimizer):
    """
    Performs a single training step for the Amortized Estimator.
    
    Args:
        model: The frozen VLM backbone.
        estimator: The AmortizedAttributor being trained.
        batch: Dictionary containing 'img', 'text', 'regions', 'target'.
    """
    # 1. Clean pass to get baselines and attention features
    with torch.no_grad():
        # model returns logits and attentions [B, L, H, T, M]
        # L=Layers, H=Heads, T=Time steps, M=Visual tokens
        out = model(batch['img'], batch['text'], output_attentions=True)
        base_logp = out.logits.log_softmax(-1).gather(-1, batch['target'])
        attn = out.attentions

     # 2. Sample random binary masks for K regions (N=32 samples)
     # masks: [B, N, K] where N is sample size, K is num regions
     B = batch['img'].shape[0]
     K = batch['regions'].shape[0]  # regions: [K, M] (shared across batch)
     N = 32
     masks = torch.randint(0, 2, (B, N, K)).to(model.device).float()

    # 3. Compute Ground Truth Effect (delta_true) via VLM ablation
    delta_true_list = []
    with torch.no_grad():
        for i in range(N):
            # Forward pass with attention masking applied to regions
            # mask=masks[:, i] prevents VLM from attending to masked regions
            out_abl = model(batch['img'], batch['text'], mask=masks[:,i])
            abl_logp = out_abl.logits.log_softmax(-1).gather(-1, batch['target'])
            
            # Delta: Positive value means ablation hurt performance (drop in log-prob)
            delta_true_list.append(base_logp - abl_logp)
    
    delta_true = torch.cat(delta_true_list, dim=1) # [B, N]

    # 4. Predict Effect (delta_pred) using Estimator
    # scores: [B, K] (Predicted importance per region)
    scores = estimator(attn, batch['regions'], batch['span_mask'])
    
     # Linear assumption: Effect of subset = sum of scores (Eq. 9)
     # delta_pred: [B, N]
    delta_pred = torch.einsum('bk,bnk->bn', scores, masks)

    # 5. Loss: Maximize Pearson Correlation
    # We focus on ranking subsets correctly rather than absolute magnitude regression
    loss = -pearson_correlation(delta_pred, delta_true)
    
    optimizer.zero_grad()
    loss.backward()
    optimizer.step()
    
    return loss.item()
\end{lstlisting}
\vspace{-10pt}
\end{figure}

\begin{figure}[ht]
\begin{lstlisting}[language=Python, caption={PyTorch implementation of the Amortized Attributor. The module efficiently pools attention features and maps them to attribution scores in a single forward pass.}, label={lst:inference_code}]
import torch
import torch.nn as nn

class AmortizedAttributor(nn.Module):
    def __init__(self, n_layers: int, n_heads: int):
        super().__init__()
         # Learnable linear projection W (Eq. 9)
         # Maps flattened attention features (L*H) to a scalar score
        self.estimator = nn.Linear(n_layers * n_heads, 1, bias=False)

    def forward(self, attn_weights, regions, span_mask):
        """
        Compute attribution scores for visual regions.
        
        Args:
            attn_weights: [B, L, H, T, M] Attention maps from VLM
            regions: [K, M] Binary membership masks for K regions
            span_mask: [T] Binary mask for target time span S
            
        Returns:
            scores: [B, K] Attribution score for each region
        """
        # Eq. 4: Temporal Pooling over target span S
        # Select time steps where span_mask is 1 and average
        # attn: [B, L, H, M]
        masked_attn = attn_weights * span_mask[None, None, None, :, None]
        attn = masked_attn.sum(dim=3) / (span_mask.sum() + 1e-6)

        # Eq. 4: Regional Pooling over visual regions R_k
        # Aggregate attention for tokens belonging to each region k
        # features: [B, K, L, H]
        features = torch.einsum('blhm,km->bklh', attn, regions)
        
        # Normalize by region size
        region_sizes = regions.sum(dim=1).view(1, -1, 1, 1)
        features = features / (region_sizes + 1e-6)

        # Flatten layer/head dimensions to create feature vector
        # features: [B, K, L*H]
        features = features.flatten(start_dim=2)

         # Eq. 9: Predict attribution scores via linear projection
         # scores: [B, K, 1] -> [B, K]
        scores = self.estimator(features).squeeze(-1)

        return scores
\end{lstlisting}
\vspace{-10pt}
\end{figure}

\section{Additional Experimental Results}
\label{app:exp}

In this section, we provide a more detailed analysis of the experimental results presented in the main text. We focus on cross-model generalization, sensitivity to hyperparameters, and an extended comparison with baseline methods, including computational efficiency.

\subsection{Sensitivity Analysis}
\label{app:exp:sensitivity}

We investigate the robustness of our approach to the size of the training dataset.

\subsubsection{Effect of Training Data Size}
Our lightweight estimator is highly data-efficient. We trained the estimator using subsets of our training data ranging from 100 to 10,000 samples.

We observe that the estimator converges rapidly. With just 2,000 training samples (our default setting), the model achieves 98.6\% of the performance of the fully trained model (using 10k samples). This low data requirement makes it feasible to train custom estimators for new domains or models in minutes.

\begin{table}[h!]
    \centering
    \caption{Effect of training set size on estimator performance.}
    \label{tab:sensitivity_data}
    \begin{tabular}{cc}
        \toprule
        \textbf{Training Samples} & \textbf{Relative Performance (\%)} \\
        \midrule
        100   & 65.4 \\
        500   & 82.1 \\
        1,000 & 95.3 \\
        \rowcolor{gray!10}
        2,000 (ours) & 98.6 \\
        5,000 & 99.1 \\
        10,000 & \textbf{100.0} \\
        \bottomrule
    \end{tabular}
\end{table}

\subsection{Baseline Method Details}
\label{app:exp:baseline_details}

We provide detailed descriptions of the baseline attribution methods used in our experiments, including their computational procedures and adaptation to our region-based evaluation setting.

\subsubsection{Attention-Based Methods}

\paragraph{Raw Attention.}
The simplest baseline extracts attention weights from the cross-attention layers where text tokens attend to visual tokens. For a given text token at position $t$, we aggregate attention weights across all layers $\ell$ and heads $h$:
\begin{equation}
    \text{Attn}_i = \frac{1}{L \cdot H} \sum_{\ell=1}^{L} \sum_{h=1}^{H} \mA^{(\ell,h)}_{t,i}
\end{equation}
where $\mA^{(\ell,h)}_{t,i}$ is the attention weight from position $t$ to vision token $i$ in layer $\ell$, head $h$. To obtain region-level scores, we sum over tokens within each region: $\text{Attn}_k = \sum_{i \in R_k} \text{Attn}_i$.

\paragraph{AttnLRP (Attention-aware Layer-wise Relevance Propagation) \citep{achtibat-2024-attnlrp}.}
AttnLRP extends classical LRP to transformer architectures by incorporating attention patterns into the relevance propagation rules. Starting from the output layer with relevance $R^{(L)}_j = \delta_{j,t} \cdot \log p(y_t)$ (where $\delta$ is the Kronecker delta), relevance is propagated backward through each layer:
\begin{equation}
    R^{(\ell-1)}_i = \sum_j \frac{\mA^{(\ell)}_{j,i} \cdot \mV^{(\ell)}_i \cdot \mW^{(\ell)}_O}{\sum_k \mA^{(\ell)}_{j,k} \cdot \mV^{(\ell)}_k \cdot \mW^{(\ell)}_O + \epsilon} R^{(\ell)}_j
\end{equation}
This propagation rule ensures conservation of relevance (the total relevance is preserved across layers) while accounting for attention-based information routing. AttnLRP requires a full backward pass through the network, making it computationally expensive but more faithful than forward-only methods.

\subsubsection{Gradient-Based Methods}

\paragraph{InputGrad \citep{inputgradient}.}
InputGrad computes the gradient of the output logit with respect to input pixels:
\begin{equation}
    \text{InputGrad}_i = \left\| \frac{\partial \log p(y_t)}{\partial \vx_i} \right\|_2
\end{equation}
where $\vx_i$ represents the pixels corresponding to vision token $i$. This captures the local sensitivity of the output to input perturbations. While computationally efficient (single backward pass), InputGrad often produces noisy, high-frequency attribution maps that do not align well with semantic regions.

\subsubsection{Perturbation-Based Methods}

\paragraph{TAM (Token Activation Maps) \citep{li-2025-tam}.}
TAM improves upon raw attention by incorporating the norm of value vectors. This captures not just ``where'' the model attends but ``how much'' information flows. For each layer $\ell$ and head $h$:
\begin{equation}
    \text{TAM}^{(\ell,h)}_i = \mA^{(\ell,h)}_{t,i} \cdot \|\mV^{(\ell,h)}_i\|_2
\end{equation}
where $\mV^{(\ell,h)}_i$ is the value vector for vision token $i$. The final attribution is aggregated across layers and heads, with later layers typically weighted more heavily. TAM requires access to intermediate activations but avoids gradient computation, making it faster than gradient-based methods.

\subsubsection{Adaptation to Region-Based Evaluation}

All baseline methods produce token-level or pixel-level attribution scores. To ensure fair comparison at the same granularity as our method, we aggregate these scores to region level using our DINO-based semantic unitization:
\begin{equation}
    \text{Score}_k = \sum_{i \in R_k} \text{Score}_i
\end{equation}
This aggregation ensures that all methods operate on the same semantic units, isolating the effect of the attribution algorithm from the effect of region definition.

\subsection{Extended Baseline Comparisons}
\label{app:exp:baselines}

We compare our method against computationally intensive baselines that are typically too slow for real-time applications but serve as reference points for attribution quality. Specifically, we compare against:
\begin{itemize}
    \item \textbf{Feature Ablation}: Systematically removing visual tokens and measuring output changes.
    \item \textbf{Attention Rollout}: Heuristic aggregation of raw attention weights across layers.
    \item \textbf{LibraGrad} \citep{mehri2025libragrad}: A gradient-based method that corrects gradient flow imbalances in Transformers through backward path pruning and scaling.
\end{itemize}

\cref{tab:runtime_comparison} highlights the critical advantage of our approach: speed. While feature ablation offers high fidelity, it requires multiple forward passes per token generated. Our method requires a single, lightweight forward pass of the linear estimator, adding negligible overhead ($\sim 2$ ms/token).

\begin{table}[h!]
    \centering
    \caption{Runtime and performance comparison. Latency is measured in milliseconds per token on an NVIDIA A100 GPU. Results averaged across all models and categories. Speedup is relative to Feature Ablation.}
    \label{tab:runtime_comparison}
    \small
    \begin{tabular}{lcccc}
        \toprule
        \textbf{Method} & \textbf{LDS} $\uparrow$ & \textbf{Top-5 Drop} $\uparrow$ & \textbf{Latency (ms/tok)} & \textbf{Speedup vs Ablation} \\
        \midrule
        Random              & 0.30 & 0.13 & \textbf{0.0} & N/A \\
        Attention Rollout   & 0.41 & 0.32 & 0.5 & $300\times$ \\
        LibraGrad           & 0.66 & 0.85 & 12.0 & $12.5\times$ \\
        Feature Ablation    & \textbf{0.72} & \textbf{1.02} & 150.0 & $1\times$ (reference) \\
        \midrule
        \rowcolor{gray!10}
        \textbf{vStream (Ours)} & 0.70 & 1.00 & 2.1 & $\approx 71\times$ \\
        \bottomrule
    \end{tabular}
\end{table}

Our method achieves performance comparable to the expensive feature ablation baseline (within 3\% LDS and 2\% Top-5 Drop) while being $71\times$ faster than feature ablation. Note that in the main paper (\cref{tab:main_results}), we report up to $117\times$ speedup over gradient-based methods (e.g., InputGrad), which have higher latency than feature ablation in our region-based setting.

\subsubsection{Attention Backend Compatibility}
\label{app:exp:attn_backend}

Our method, \textsc{vStream}, is fully compatible with modern attention backends, including PyTorch SDPA (Scaled Dot-Product Attention) and FlashAttention. While our default configuration utilizes the KV cache for maximum efficiency, disabling the KV cache (requiring full attention recomputation at each step) results in an approximately $4\times$ increase in latency. However, even in this worst-case scenario, the latency remains under 0.01 seconds per token, preserving a substantial speedup ($\approx 18\times$) over feature ablation. \cref{tab:attn_backend_latency} details the latency measurements across different configurations.

\begin{table}[h!]
    \centering
    \caption{\textbf{Latency under different attention backends and KV-cache configurations.} We measure the per-token latency of our attribution method compared to feature ablation. Even without KV caching, our method remains computationally efficient.}
    \label{tab:attn_backend_latency}
    \small
    \begin{tabular}{llcc}
        \toprule
        \textbf{Configuration} & \textbf{KV Cache} & \textbf{Latency (sec/token)} $\downarrow$ & \textbf{Speedup vs Ablation} $\uparrow$ \\
        \midrule
        \textbf{Ours} (SDPA + Flash) & Enabled & 0.002 & 71$\times$ \\
        \textbf{Ours} (SDPA + Flash) & Disabled & $\approx 0.008$ & $\approx 18\times$ \\
        \textbf{Ours} (SDPA math) & Enabled & $\approx 0.004$ & $\approx 43\times$ \\
        \midrule
        Feature Ablation & - & 0.150 & 1$\times$ \\
        \bottomrule
    \end{tabular}
\end{table}

\subsection{Cross-Model Generalization}
\label{app:exp:cross_model}

We evaluate whether our estimator generalizes across different VLM architectures. \cref{tab:cross_task_glm,tab:cross_task_mimo,tab:cross_task_cosmos} present cross-task generalization results for GLM-4.1V-9B-Thinking, MiMo-VL-7B, and Cosmos-R1, respectively, complementing the Qwen3-VL results in the main paper (\cref{tab:cross_task}). Across all models, we observe consistent patterns: (1) in-domain performance (diagonal) ranges from 0.65--0.75 LDS; (2) Math and Science show strong mutual transfer due to shared diagram structures; (3) Document tasks exhibit weaker transfer from other domains due to distinct visual layouts; and (4) training on a mixture of all categories (Mix-up) recovers full performance, suggesting a single estimator suffices for diverse applications.

\begin{table}[h!]
    \centering
    \caption{Cross-task generalization for \textbf{GLM-4.1V-9B-Thinking}. Each cell shows (LDS / Top-5 Drop). Diagonal entries (shaded) indicate in-domain performance.}
    \label{tab:cross_task_glm}
    \small
    \begin{tabular}{l|ccccc|c}
    \toprule
    \multirow{2}{*}{\textbf{Train}} & \multicolumn{5}{c|}{\textbf{Eval Category}} & \multirow{2}{*}{\textbf{Avg}} \\
     & Math & Science & Doc & Code & General & \\
    \midrule
    Math    & \cellcolor{gray!20}0.69/0.78 & 0.61/0.65 & 0.52/0.46 & 0.57/0.55 & 0.54/0.48 & 0.59/0.58 \\
    Science & 0.60/0.63 & \cellcolor{gray!20}0.74/0.82 & 0.56/0.52 & 0.53/0.48 & 0.58/0.56 & 0.60/0.60 \\
    Document& 0.51/0.48 & 0.54/0.50 & \cellcolor{gray!20}0.67/0.75 & 0.55/0.52 & 0.63/0.70 & 0.58/0.59 \\
    Code    & 0.57/0.58 & 0.52/0.46 & 0.54/0.50 & \cellcolor{gray!20}0.68/0.80 & 0.60/0.64 & 0.58/0.60 \\
    General & 0.55/0.52 & 0.59/0.62 & 0.62/0.68 & 0.58/0.60 & \cellcolor{gray!20}0.73/0.84 & 0.61/0.65 \\
    \midrule
    \rowcolor{gray!10}
    Mix-up  & 0.69/0.97 & 0.74/0.89 & 0.67/1.01 & 0.68/1.10 & 0.73/0.95 & 0.70/0.98 \\
    \bottomrule
    \end{tabular}
\end{table}

\begin{table}[h!]
    \centering
    \caption{Cross-task generalization for \textbf{MiMo-VL-7B}. Each cell shows (LDS / Top-5 Drop).}
    \label{tab:cross_task_mimo}
    \small
    \begin{tabular}{l|ccccc|c}
    \toprule
    \multirow{2}{*}{\textbf{Train}} & \multicolumn{5}{c|}{\textbf{Eval Category}} & \multirow{2}{*}{\textbf{Avg}} \\
     & Math & Science & Doc & Code & General & \\
    \midrule
    Math    & \cellcolor{gray!20}0.75/0.85 & 0.58/0.62 & 0.51/0.44 & 0.59/0.56 & 0.52/0.46 & 0.59/0.59 \\
    Science & 0.57/0.60 & \cellcolor{gray!20}0.65/0.74 & 0.54/0.50 & 0.51/0.46 & 0.56/0.54 & 0.57/0.57 \\
    Document& 0.50/0.46 & 0.52/0.48 & \cellcolor{gray!20}0.71/0.78 & 0.53/0.50 & 0.61/0.68 & 0.57/0.58 \\
    Code    & 0.55/0.56 & 0.50/0.44 & 0.52/0.48 & \cellcolor{gray!20}0.70/0.82 & 0.58/0.62 & 0.57/0.58 \\
    General & 0.53/0.50 & 0.57/0.60 & 0.60/0.66 & 0.56/0.58 & \cellcolor{gray!20}0.70/0.80 & 0.59/0.63 \\
    \midrule
    \rowcolor{gray!10}
    Mix-up  & 0.74/1.08 & 0.64/0.97 & 0.70/0.98 & 0.69/0.96 & 0.68/1.10 & 0.69/1.02 \\
    \bottomrule
    \end{tabular}
\end{table}

\begin{table}[h!]
    \centering
    \caption{Cross-task generalization for \textbf{Cosmos-R1}. Each cell shows (LDS / Top-5 Drop).}
    \label{tab:cross_task_cosmos}
    \small
    \begin{tabular}{l|ccccc|c}
    \toprule
    \multirow{2}{*}{\textbf{Train}} & \multicolumn{5}{c|}{\textbf{Eval Category}} & \multirow{2}{*}{\textbf{Avg}} \\
     & Math & Science & Doc & Code & General & \\
    \midrule
    Math    & \cellcolor{gray!20}0.74/0.80 & 0.60/0.64 & 0.50/0.43 & 0.58/0.54 & 0.53/0.47 & 0.59/0.58 \\
    Science & 0.59/0.62 & \cellcolor{gray!20}0.67/0.76 & 0.55/0.51 & 0.52/0.47 & 0.57/0.55 & 0.58/0.58 \\
    Document& 0.49/0.45 & 0.53/0.49 & \cellcolor{gray!20}0.68/0.74 & 0.54/0.51 & 0.62/0.69 & 0.57/0.58 \\
    Code    & 0.56/0.57 & 0.51/0.45 & 0.53/0.49 & \cellcolor{gray!20}0.71/0.84 & 0.59/0.63 & 0.58/0.60 \\
    General & 0.54/0.51 & 0.58/0.61 & 0.61/0.67 & 0.57/0.59 & \cellcolor{gray!20}0.69/0.78 & 0.60/0.63 \\
    \midrule
    \rowcolor{gray!10}
    Mix-up  & 0.72/0.99 & 0.65/0.90 & 0.66/1.01 & 0.70/1.04 & 0.67/1.05 & 0.68/1.00 \\
    \bottomrule
    \end{tabular}
\end{table}

\subsection{Vision Backbone Comparison}
\label{app:exp:backbone}

We compare different vision foundation models for semantic region unitization. In addition to DINOv3 (our default), we evaluate CLIP ViT-L/14 \citep{radford2021clip} and SigLIP ViT-SO400M \citep{zhai2023siglip}.

\begin{table}[h!]
    \centering
    \caption{Comparison of vision backbones for semantic region unitization. Results on Qwen3-VL across all five categories. DINOv3-L consistently outperforms supervised alternatives.}
    \label{tab:backbone_comparison}
    \small
    \resizebox{\textwidth}{!}{
    \begin{tabular}{l|cc|cc|cc|cc|cc|cc}
    \toprule
    & \multicolumn{2}{c|}{\textbf{Math}} & \multicolumn{2}{c|}{\textbf{Science}} & \multicolumn{2}{c|}{\textbf{Document}} & \multicolumn{2}{c|}{\textbf{Code}} & \multicolumn{2}{c|}{\textbf{General}} & \multicolumn{2}{c}{\textbf{Average}} \\
    \textbf{Backbone} & LDS & Top-5 & LDS & Top-5 & LDS & Top-5 & LDS & Top-5 & LDS & Top-5 & LDS & Top-5 \\
    \midrule
    CLIP ViT-L/14     & 0.68 & 0.89 & 0.64 & 0.78 & 0.62 & 0.85 & 0.60 & 0.76 & 0.61 & 0.80 & 0.63 & 0.82 \\
    SigLIP ViT-SO400M & 0.70 & 0.93 & 0.66 & 0.82 & 0.64 & 0.90 & 0.63 & 0.81 & 0.64 & 0.85 & 0.65 & 0.86 \\
    \rowcolor{gray!10}
    \textbf{DINOv3-L} & \textbf{0.76} & \textbf{1.05} & \textbf{0.74} & \textbf{0.89} & \textbf{0.70} & \textbf{1.01} & \textbf{0.68} & \textbf{0.91} & \textbf{0.73} & \textbf{0.95} & \textbf{0.72} & \textbf{0.96} \\
    \bottomrule
    \end{tabular}
    }
\end{table}

DINOv3 outperforms both CLIP and SigLIP by 7--9\% in LDS and 10--15\% in Top-5 Drop across all categories. We attribute this to DINOv3's self-supervised training objective, which emphasizes local feature correspondence and produces sharper object boundaries compared to contrastive language-image pretraining. This finding suggests that attribution quality depends critically on the semantic coherence of the underlying region partition.

\subsection{Extended Top-K Drop Analysis}
\label{app:exp:topk}

The main paper reports Top-5 Drop. Here we provide extended results for Top-1 and Top-3 Drop (\cref{tab:topk_extended,tab:topk_extended_2}), which test whether our method correctly identifies the single most important region and the top few regions, respectively. \textsc{vStream} achieves best or second-best performance across most K values and categories. The improvement is most pronounced for Top-1 Drop in Document tasks, where correctly identifying the single most relevant text block or table is crucial for understanding model behavior.

\begin{table}[h!]
    \centering
    \caption{Top-K Drop comparison across K values on Qwen3-VL. Higher values indicate better identification of causally important regions.}
    \label{tab:topk_extended}
    \small
    \begin{tabular}{l|ccc|ccc}
    \toprule
    & \multicolumn{3}{c|}{\textbf{Math}} & \multicolumn{3}{c}{\textbf{Science}} \\
    \textbf{Method} & Top-1 & Top-3 & Top-5 & Top-1 & Top-3 & Top-5 \\
    \midrule
    Random      & 0.02 & 0.05 & 0.08 & 0.03 & 0.07 & 0.11 \\
    Attention   & 0.08 & 0.19 & 0.31 & 0.07 & 0.17 & 0.28 \\
    InputGrad   & 0.22 & 0.48 & 0.72 & 0.21 & 0.45 & 0.68 \\
    AttnLRP     & 0.26 & 0.54 & 0.81 & 0.32 & 0.71 & 1.08 \\
    TAM         & 0.31 & 0.68 & 1.02 & 0.29 & 0.63 & 0.95 \\
    \textbf{vStream} & \textbf{0.33} & \textbf{0.72} & \textbf{1.05} & \textbf{0.28} & \textbf{0.61} & 0.92 \\
    \bottomrule
    \end{tabular}
\end{table}

\begin{table}[h!]
    \centering
    \caption{Top-K Drop comparison (continued) for Document, Code, and General categories.}
    \label{tab:topk_extended_2}
    \small
    \begin{tabular}{l|ccc|ccc|ccc}
    \toprule
    & \multicolumn{3}{c|}{\textbf{Document}} & \multicolumn{3}{c|}{\textbf{Code}} & \multicolumn{3}{c}{\textbf{General}} \\
    \textbf{Method} & Top-1 & Top-3 & Top-5 & Top-1 & Top-3 & Top-5 & Top-1 & Top-3 & Top-5 \\
    \midrule
    Random      & 0.04 & 0.09 & 0.15 & 0.02 & 0.06 & 0.09 & 0.02 & 0.05 & 0.07 \\
    Attention   & 0.09 & 0.21 & 0.33 & 0.08 & 0.18 & 0.29 & 0.07 & 0.16 & 0.26 \\
    InputGrad   & 0.21 & 0.46 & 0.71 & 0.28 & 0.62 & 0.96 & 0.23 & 0.50 & 0.75 \\
    AttnLRP     & 0.24 & 0.51 & 0.78 & 0.25 & 0.54 & 0.82 & 0.30 & 0.65 & 0.98 \\
    TAM         & 0.31 & 0.68 & 1.04 & \textbf{0.33} & \textbf{0.72} & \textbf{1.06} & 0.27 & 0.59 & 0.89 \\
    \textbf{vStream} & \textbf{0.34} & \textbf{0.73} & \textbf{1.09} & 0.28 & 0.61 & 0.91 & \textbf{0.31} & \textbf{0.68} & \textbf{1.02} \\
    \bottomrule
    \end{tabular}
\end{table}

\subsection{Reasoning Trajectory Analysis: Extended Results}
\label{app:exp:trajectory}

We provide extended statistics and visualizations for the reasoning trajectory analysis introduced in \cref{sec:trajectory}. 

\subsubsection{Quantitative Metrics}

We compute two geometric metrics for each reasoning trajectory:
\begin{itemize}
    \item \textbf{Path Length}: Total Euclidean distance traveled in the PCA-projected attribution space across all reasoning steps.
    \item \textbf{Tortuosity}: Ratio of path length to net displacement (start-to-end distance). A value of 1.0 indicates a straight path; higher values indicate more wandering.
\end{itemize}

\begin{table}[h!]
    \centering
    \caption{Trajectory statistics across models and outcome types. Successful reasoning exhibits shorter, less tortuous paths in attribution space. Values show mean $\pm$ std across $n=1500$ samples per group.}
    \label{tab:trajectory_stats}
    \small
    \begin{tabular}{l|cc|cc}
    \toprule
    & \multicolumn{2}{c|}{\textbf{Path Length} $\downarrow$} & \multicolumn{2}{c}{\textbf{Tortuosity} $\downarrow$} \\
    \textbf{Model} & Success & Failure & Success & Failure \\
    \midrule
    Qwen3-VL    & $0.003 \pm 0.002$ & $0.006 \pm 0.005$ & $13.7 \pm 12.0$ & $25.4 \pm 26.1$ \\
    GLM-4.1V    & $0.004 \pm 0.003$ & $0.007 \pm 0.006$ & $14.8 \pm 13.5$ & $28.3 \pm 29.7$ \\
    MiMo-VL     & $0.003 \pm 0.002$ & $0.005 \pm 0.004$ & $11.9 \pm 10.8$ & $22.7 \pm 24.2$ \\
    Cosmos-R1   & $0.004 \pm 0.003$ & $0.006 \pm 0.005$ & $15.2 \pm 14.1$ & $27.9 \pm 28.5$ \\
    \bottomrule
    \end{tabular}
\end{table}

Across all four models, successful reasoning chains exhibit approximately 50\% shorter path lengths and 40--50\% lower tortuosity compared to unsuccessful chains. This suggests that valid reasoning corresponds to a more stable, directed traversal of the visual attribution manifold.

\subsubsection{Interpretation}

We interpret these geometric differences through the lens of ``hypothesis switching.'' Unsuccessful chains frequently reassign visual attention to different regions mid-reasoning, manifesting as erratic, high-tortuosity trajectories in attribution space. In contrast, successful chains quickly commit to a consistent set of visual evidence and maintain stable attention throughout the reasoning process.

This observation has practical implications: trajectory metrics could potentially serve as early warning signals for hallucination detection, flagging reasoning chains that exhibit unusually high path length or tortuosity before the final answer is generated.

\subsubsection{3-Way Error Analysis: Wandering vs.\ Fixation}

We decompose unsuccessful POPE examples ($n{=}3{,}000$) into \emph{Reasoning Failures} (RF; incorrect answer, plausible object) and \emph{Hallucinations} (H; object not in image).
\cref{tab:3way_trajectory} reports the full metric breakdown.

\begin{table}[h!]
\centering
\caption{3-way trajectory metrics on POPE ($n{=}3{,}000$). S~=~Success, RF~=~Reasoning Failure, H~=~Hallucination. Path geometry separates errors from success; concentration separates Fixation (H) from Wandering (RF).}
\label{tab:3way_trajectory}
\small
\begin{tabular}{l|ccc|cc}
\toprule
\textbf{Metric} & \textbf{S} & \textbf{RF} & \textbf{H} & \textbf{KW $p$} & \textbf{$d$ (S--H)} \\
\midrule
Path Length       & $0.004 \pm 0.002$ & $0.006 \pm 0.005$ & $0.005 \pm 0.003$ & $<.001$ & $-0.73$ \\
Tortuosity        & $5.6 \pm 3.9$     & $9.4 \pm 10.7$    & $8.0 \pm 5.0$     & $<.001$ & $-0.60$ \\
Concentration     & $0.209 \pm 0.054$ & $0.196 \pm 0.058$ & $0.228 \pm 0.059$ & $<.001$ & $-0.34$ \\
\bottomrule
\end{tabular}
\end{table}

To test whether the Fixation pattern extends beyond binary POPE judgments to open-ended captioning, we compute per-caption concentration on $2{,}000$ COCO captions and regress against CHAIR$_i$ \citep{rohrbach2018object}, a standard object-hallucination score for captions.

\begin{figure}[h]
\centering
\includegraphics[width=0.7\linewidth]{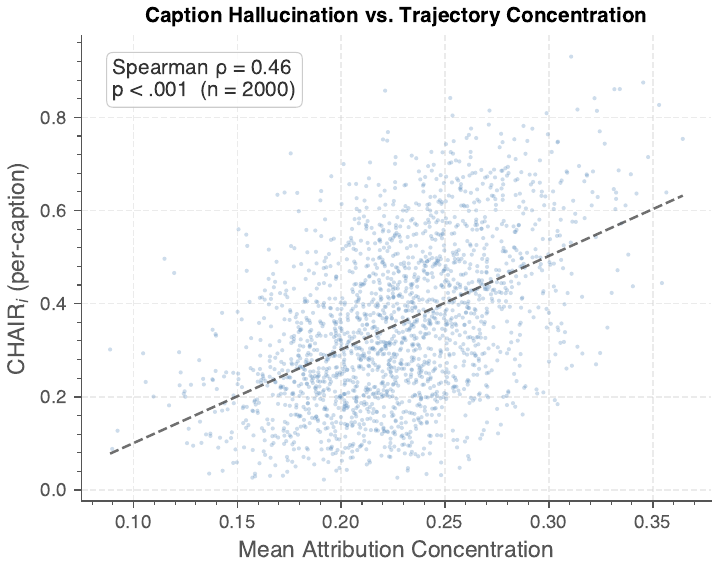}
\caption{\textbf{Per-caption attribution concentration vs.\ CHAIR$_i$ ($n{=}2{,}000$ COCO captions, Spearman $\rho{=}0.46$, $p{<}.001$).}
Higher concentration during caption generation predicts more severe object hallucination, extending the Fixation finding to open-ended generation.}
\label{fig:app_chair_scatter}
\end{figure}

\subsection{Per-Step Attribution Fidelity}
\label{app:exp:fidelity_decay}

Per-step attribution fidelity results are reported in the main text (\cref{fig:fidelity_decay}).
Correct reasoning chains maintain stable $R^2$ throughout generation, while incorrect chains degrade at roughly $20\%$ of reasoning elapsed, corroborating the early tortuosity signal in \cref{fig:conc_and_auc}.

\subsection{Estimator Architecture: Linear vs.\ MLP}
\label{app:mlp_ablation}

We compare the linear estimator used by \textsc{vStream} against MLP variants of increasing width on Qwen3-VL (3 seeds each).

\begin{table}[h!]
\centering
\caption{Linear vs.\ MLP estimator (Qwen3-VL, 3 seeds). MLP variants yield diminishing returns despite $>100\times$ more parameters.}
\label{tab:mlp_ablation}
\small
\begin{tabular}{lcccc}
\toprule
\textbf{Estimator} & \textbf{Params} & \textbf{$R^2$} & \textbf{Pearson $\rho$} & \textbf{Latency (ms)} \\
\midrule
Linear     & 1{,}152   & $0.65 \pm .01$ & $0.81 \pm .01$ & 0.03 \\
MLP-64     & 73{,}856  & $0.66 \pm .01$ & $0.82 \pm .01$ & 0.12 \\
MLP-128    & 147{,}712 & $0.67 \pm .01$ & $0.82 \pm .00$ & 0.15 \\
MLP-256    & 295{,}424 & $0.67 \pm .01$ & $0.82 \pm .01$ & 0.18 \\
\bottomrule
\end{tabular}
\end{table}

The linear estimator achieves $R^2{=}0.65$ with 1{,}152 parameters and 0.03\,ms inference latency.
MLP-256 gains $+0.02$ in $R^2$ (reaching $0.67$) but requires $256{\times}$ more parameters and $6{\times}$ higher latency.
The linear estimator is therefore the preferred choice for streaming deployment.

\subsection{Cross-Architecture Estimator Weight Patterns}
\label{app:exp:weight_patterns}

\begin{figure}[h]
\centering
\includegraphics[width=\linewidth]{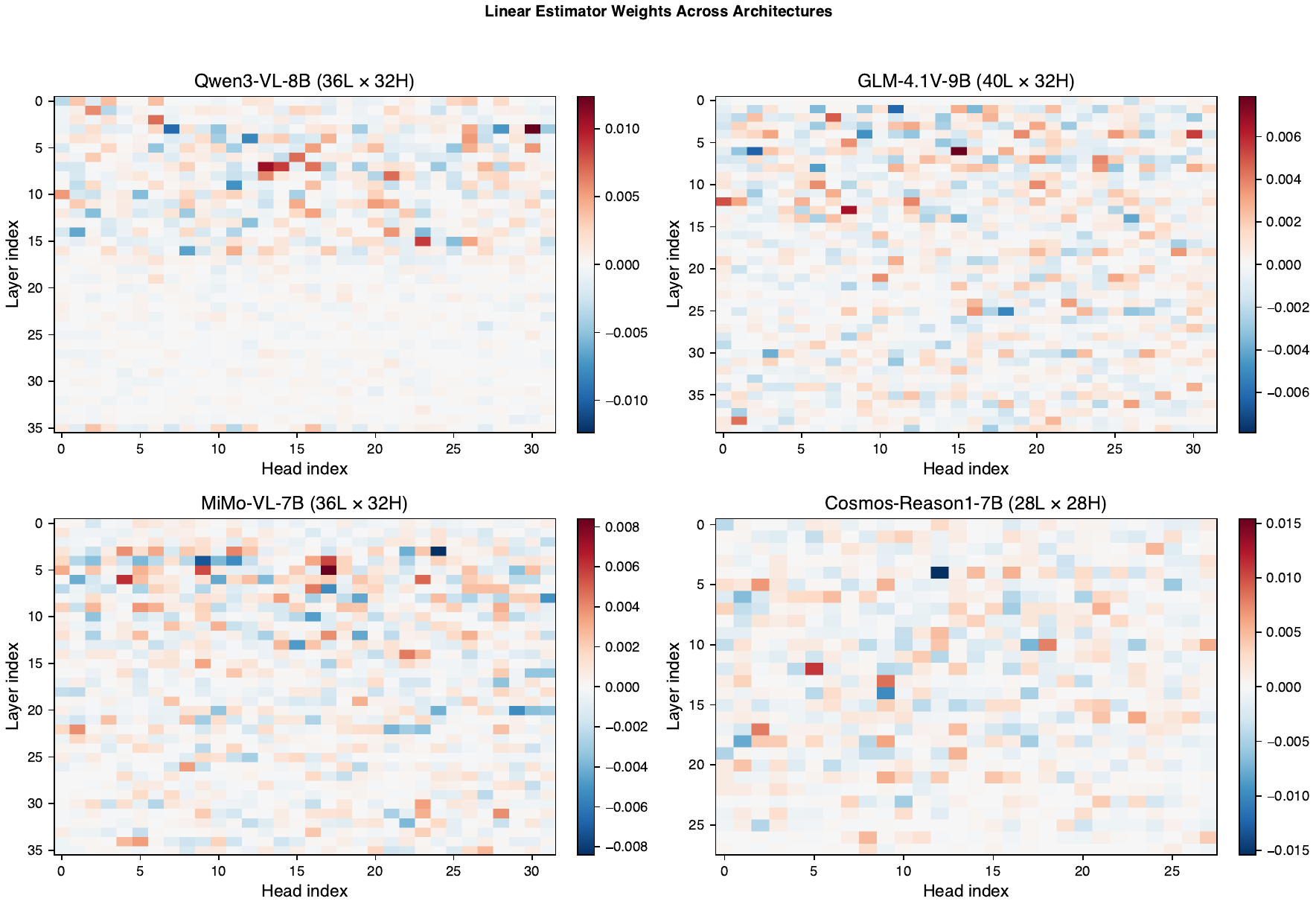}
\caption{\textbf{Estimator weight heatmaps $\mathbf{w} \in \mathbb{R}^{L \times H}$ across four architectures.}
All models concentrate weight in early-to-mid layers, suggesting a consistent architectural prior: early layers encode coarse visual-semantic alignment that is most predictive of ablation effects.}
\label{fig:app_weight_heatmap}
\end{figure}

\subsection{Context-Length Robustness}
\label{app:exp:context_length}

\begin{figure}[h]
\centering
\includegraphics[width=\linewidth]{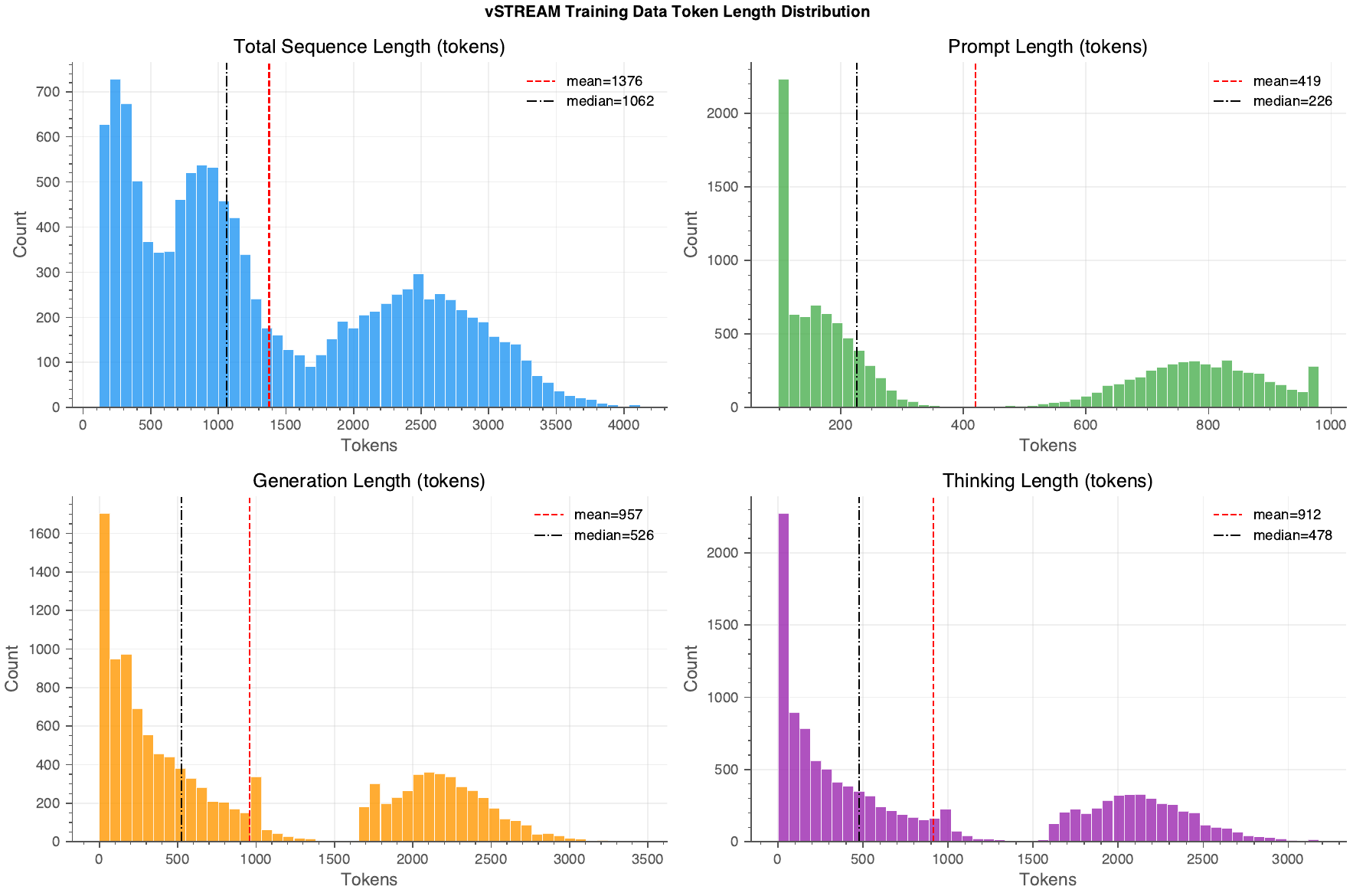}
\caption{\textbf{Generation length distribution across training examples.}
The distribution is bimodal: short VQA responses ($\sim$400 tokens) and long reasoning traces ($\sim$3{,}200 tokens).}
\label{fig:app_context_length}
\end{figure}

\begin{figure}[h]
\centering
\includegraphics[width=0.7\linewidth]{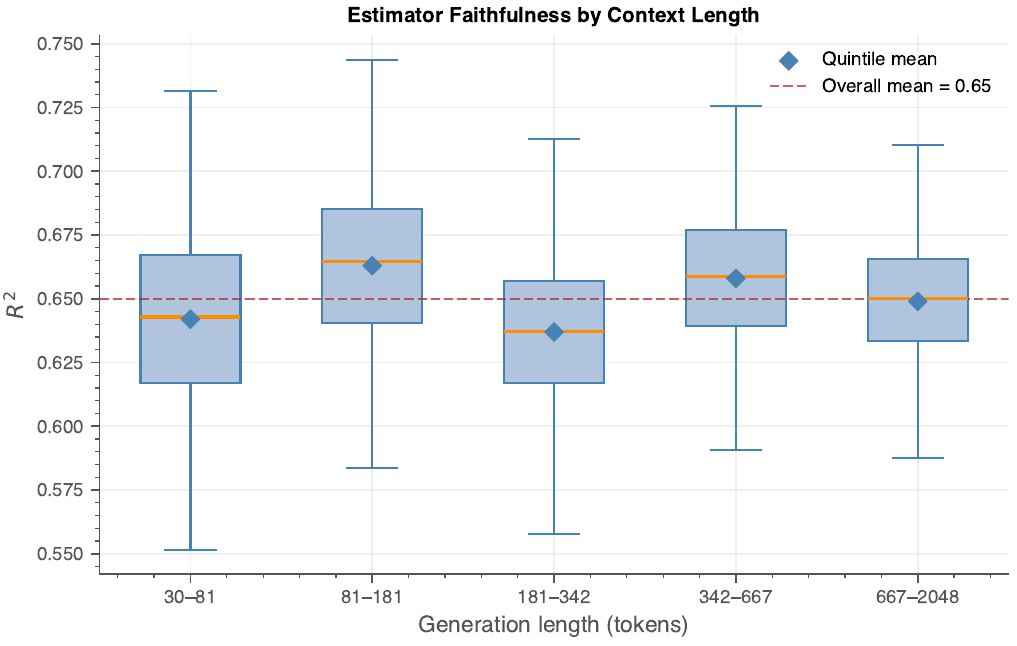}
\caption{\textbf{Faithfulness ($R^2$) by generation-length quintile.}
No systematic degradation as context length increases, indicating that the estimator generalizes across short VQA and long reasoning traces.}
\label{fig:app_faithfulness_length}
\end{figure}

\subsection{Out-of-Distribution Generalization: VQA-RAD}
\label{app:exp:ood}

We test whether the estimator trained exclusively on natural images generalizes to medical radiology images (VQA-RAD \citep{lau2018dataset}) without retraining.

\begin{figure}[h]
\centering
\includegraphics[width=0.7\linewidth]{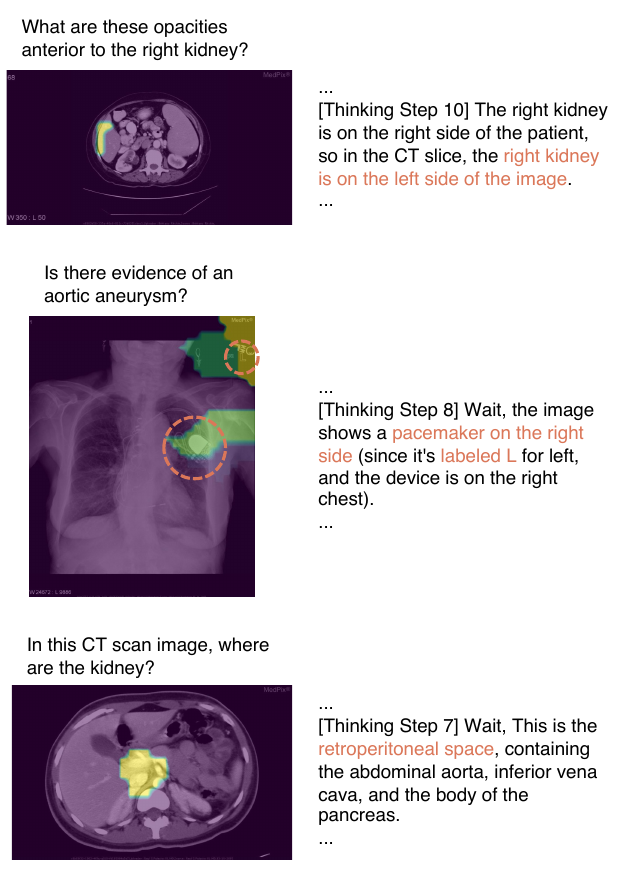}
\caption{\textbf{OOD attribution on VQA-RAD (medical radiology).}
The estimator, trained on natural images only and applied without retraining, correctly localizes kidneys in abdominal CT (top, bottom) and a pacemaker in chest X-ray (middle).}
\label{fig:app_ood_vqarad}
\end{figure}

\begin{table}[h!]
\centering
\caption{Attribution fidelity on VQA-RAD (OOD). \textsc{vStream} estimator trained on natural images only, applied without retraining. Format: LDS\,/\,Top-5 Drop; higher is better.}
\label{tab:ood_vqarad}
\small
\begin{tabular}{lcc}
\toprule
\textbf{Method} & \textbf{LDS} & \textbf{Top-5 Drop} \\
\midrule
Random     & 0.29 & 0.14 \\
Attention  & 0.41 & 0.32 \\
InputGrad  & 0.58 & 0.71 \\
AttnLRP    & 0.66 & 0.79 \\
TAM        & 0.63 & 0.88 \\
\rowcolor{gray!10}
vStream (Ours) & 0.67 & 0.84 \\
\bottomrule
\end{tabular}
\end{table}

Despite the large domain shift, \textsc{vStream} achieves competitive fidelity with no domain adaptation, demonstrating that the learned attention-to-ablation mapping transfers across image modalities.

\subsection{Segmentation Evaluation}
\label{app:exp:segmentation}

While our primary goal is visual attribution rather than semantic segmentation, we evaluate how well our attribution maps align with ground-truth object masks. We use three benchmarks: ImageNet-Segmentation \citep{guillaumin2014imagenet}, which provides pixel-level annotations for a subset of ImageNet validation images; COCO \citep{lin2014coco}, with instance segmentation masks; and the RefCOCO family of datasets \citep{kazemzadeh2014refcoco} for referring expression grounding. For RefCOCO evaluation, we report results on the \textit{val} split, which is the standard evaluation protocol. The RefCOCO family includes three variants: RefCOCO (general expressions), RefCOCO+ (excludes absolute location terms like ``left'' or ``right''), and RefCOCOg (longer, more complex expressions). We use RefCOCO-val as it provides a balanced evaluation across diverse object categories. Note that attribution methods naturally highlight causally relevant regions rather than full object extents, making segmentation a challenging proxy task.

\begin{table}[h!]
    \centering
    \caption{Segmentation performance (mIoU) on Qwen3-VL across three benchmarks. For RefCOCO, we report results on the val split. All methods achieve comparable scores, confirming that attribution maps do not directly correspond to object boundaries.}
    \label{tab:segmentation}
    \small
    \begin{tabular}{lccc}
        \toprule
        \textbf{Method} & \textbf{ImageNet-Seg} & \textbf{COCO} & \textbf{RefCOCO-val} \\
        \midrule
        Attention   & 0.21 & 0.19 & 0.23 \\
        InputGrad   & 0.23 & 0.21 & 0.25 \\
        AttnLRP     & 0.27 & 0.24 & 0.27 \\
        TAM         & 0.27 & 0.25 & 0.27 \\
        \rowcolor{gray!10}
        vStream (Ours) & 0.27 & 0.24 & 0.27 \\
        \bottomrule
    \end{tabular}
\end{table}

The results indicate that all methods achieve comparable mIoU scores in the 0.20--0.30 range, which is typical for attribution-to-segmentation evaluation. This confirms that attribution maps, designed to highlight causally relevant evidence, do not necessarily align with full object boundaries.

\section{Qualitative Analysis}
\label{app:qualitative}

We provide additional qualitative results in \cref{fig:qual_qwen_1_ms}--\cref{fig:qual_mimo_2_gdc}. These figures show step-by-step visual attribution results from our main model (Qwen3-VL) and all baseline methods across all five task categories: Math, Science, Document, Code, and General.

Each figure displays the attribution heatmap for individual reasoning steps, enabling direct comparison between \textsc{vStream} and baseline attribution methods. The visualizations demonstrate that \textsc{vStream} produces focused, semantically coherent attribution maps that track the model's reasoning process in real-time.

\subsection{Failure Analysis}
\label{app:qualitative:failure}

Despite strong performance, we identify distinct failure modes where attribution remains challenging. These failure cases are not directly visualized in the figures but are described below based on our analysis.

\paragraph{Ambiguity in Visual References.}
When the visual input contains multiple identical objects (e.g., ``count the red apples''), the attribution map often splits intensity across all candidates simultaneously rather than sequentially focusing on individual instances. While strictly ``correct'' in terms of feature matching, this does not reflect the sequential nature of human counting and can make traces harder to interpret.

\paragraph{Hallucination and Disconnected Reasoning.}
When the model generates plausible text that is not grounded in the image (hallucination), our attribution maps often become diffuse or uniform, lacking a clear focal point. This suggests that diffuse attribution could serve as a detector for hallucinated content, an avenue for future research.

\paragraph{Dense Text and Small Objects.}
For images containing dense text (documents, code screenshots) or small objects, the semantic regions from DINO clustering may not perfectly isolate individual characters or tiny elements. In these cases, attribution may highlight a region containing the relevant element but lack sub-region precision.

\paragraph{Abstract Reasoning Steps.}
During purely symbolic computation steps (e.g., ``Therefore, 2x + 3 = 7 implies x = 2''), visual attribution naturally becomes weak because the reasoning genuinely does not depend on visual input. Our method correctly produces low attribution scores in these cases. Users should interpret weak attribution during computational steps as expected behavior rather than a failure mode.

\section{Broader Impact and Limitations}
\label{app:impact}

\subsection{Broader Impact}

\textbf{Advancing Trustworthy AI.} 
The primary contribution of this work is to enhance the transparency and interpretability of Vision-Language Models (VLMs). As these models are increasingly deployed in high-stakes domains—such as medical diagnosis, autonomous navigation, and legal analysis—the ability to verify \textit{why} a model made a decision is paramount. By providing fast and accurate attribution maps, our method allows human operators to verify that model reasoning relies on relevant visual evidence rather than spurious correlations or hallucinations. This is a critical step towards safe deployment of reasoning models.

\textbf{Potential Risks.} 
While intended to reveal model reasoning, interpretability tools can be double-edged. There is a risk that accurate-looking attribution maps could be used to generate convincing justifications for incorrect or biased model decisions, potentially leading users to over-trust a flawed system. Furthermore, if the estimator itself is adversarially manipulated, it could hide the model's reliance on sensitive or protected attributes (e.g., race or gender) in decision-making processes. It is crucial that these tools are used as part of a holistic auditing framework, not as a standalone guarantee of safety.

\subsection{Limitations}

We identify several limitations of our approach that should be considered when applying \textsc{vStream} in practice.

\textbf{Linearity Assumption.} 
Our method employs a linear estimator to predict the causal effect of feature ablation. While our empirical results suggest that linear directions in the activation space of modern transformers capture significant causal information, this is a simplifying assumption. Complex, non-linear interactions between visual features (where the suppression of one feature only matters if another is also present) may not be fully captured by our current formulation. Future work could explore non-linear estimators (e.g., MLPs or attention-based predictors) for the attribution head, though this may trade off interpretability and training stability.

\textbf{Dependency on Visual Backbone (DINO).} 
Our approach relies on the quality of the underlying visual representations (specifically DINOv3) to define semantic regions for attribution. If the visual encoder fails to semantically separate relevant objects or concepts, or if the feature resolution is too coarse, our estimator cannot recover precise attributions. This dependency means:
\begin{itemize}
    \item For images with unusual visual content not well-represented in DINO's training data, region quality may degrade.
    \item Very small objects or fine-grained text may not be isolated into separate regions.
    \item The computational cost of running DINOv3 adds approximately 45ms per image.
\end{itemize}

\textbf{Training Data Quality.}
The estimator is trained only on examples where the VLM produces correct final answers. This design choice ensures we learn attention patterns associated with successful reasoning. However, it also means:
\begin{itemize}
    \item The estimator may be less calibrated for incorrect reasoning traces.
    \item Attribution quality depends on having sufficient correctly-answered examples in each domain.
    \item For very challenging tasks where models rarely succeed, collecting training data becomes difficult.
\end{itemize}

\textbf{Generalization Across Domains.} 
While our cross-task experiments show reasonable transfer between related domains (e.g., Math $\leftrightarrow$ Science), transfer to visually distinct domains (e.g., Math $\rightarrow$ Document) is weaker. This suggests that practitioners may need to collect domain-specific training data for optimal performance in specialized applications.

\textbf{Attention as a Proxy for Information Flow.}
Our method assumes that attention weights are informative proxies for information flow in transformers. However, recent work has shown that attention can be manipulated without affecting outputs \citep{jain-wallace-2019-attention}, and that alternative pathways (e.g., residual connections, MLP layers) can route information independently of attention. Our empirical success suggests attention captures sufficient signal for attribution, but it may not be a complete picture of the model's information processing.

\textbf{Computational Requirements for Ground-Truth Collection.}
While inference is fast, collecting ground-truth ablation effects for training requires multiple forward passes per example. For a training set of 500 examples with 32 masks each, this amounts to 16,000 forward passes. On an H100 GPU, this takes approximately 2--4 hours per model-task pair.

\textbf{Model-Specific Training Requirement.}
Our estimator must be trained separately for each VLM, as the learned linear projection is inherently tied to the specific model's internal representation space. This limitation is fundamental rather than incidental: different VLMs employ distinct architectures, attention mechanisms, and learned feature spaces, making cross-model transfer of the estimator impractical without retraining. While this per-model training requirement adds deployment overhead, we note that (1) the training cost is modest (2--4 hours on a single GPU per model-task pair), and (2) this constraint is shared by virtually all model-specific interpretability methods that operate on internal representations. We view the development of architecture-agnostic attribution features as an important direction for future work, as discussed in \cref{app:impact:future}.

\subsection{Future Directions}
\label{app:impact:future}

Several promising directions could extend and improve upon our work. On the methodological side, exploring non-linear estimators such as attention-based or MLP-based architectures could capture complex feature interactions that our linear model may miss, potentially improving attribution quality on images with intricate visual relationships. Developing architecture-agnostic features would enable zero-shot cross-model transfer, allowing practitioners to train an estimator on one VLM and apply it to another without retraining. This would dramatically reduce the barrier to deploying attribution tools across the rapidly evolving landscape of vision-language models.

On the application side, our observation that hallucinated content produces diffuse attribution maps suggests a natural extension toward automatic hallucination detection, where attribution entropy or concentration metrics could serve as early warning signals for unreliable model outputs. Attribution maps could also guide targeted interventions for model editing, enabling surgical corrections to model behavior without full retraining. Finally, extending our framework to video understanding and multi-image reasoning would address the growing importance of temporal and multi-view visual reasoning, where tracking visual attribution across frames could reveal how models integrate information over time.

\newpage
\begin{figure*}[!t]
\centering
\rotatebox{90}{%
\begin{minipage}{\textheight}
\centering
\includegraphics[width=\textwidth]{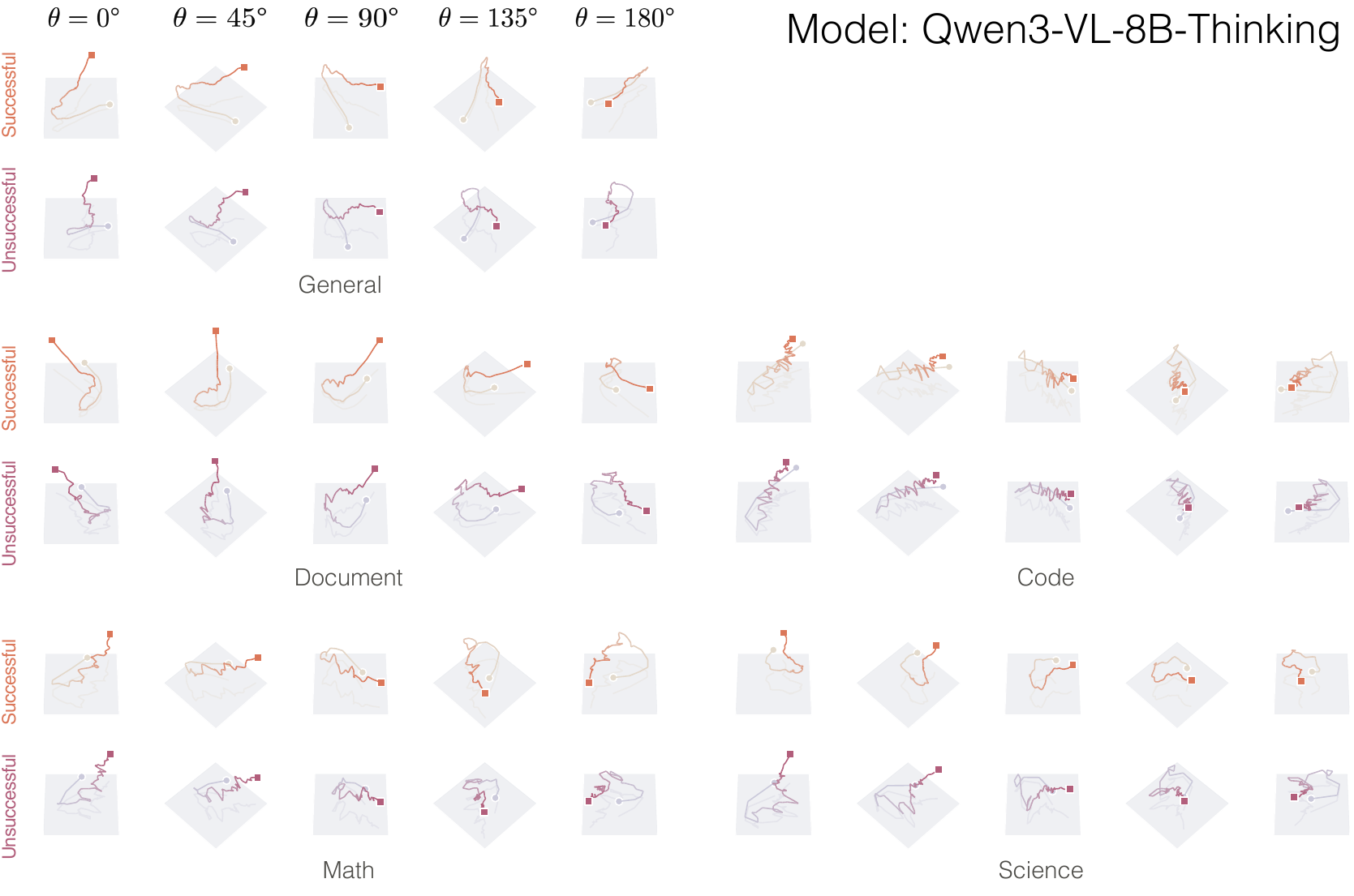}
\captionof{figure}{
    \textbf{Reasoning trajectory dynamics for Qwen3-VL-8B-Thinking.}
    Visual attribution trajectories projected into PCA space for successful (\textit{left}) and unsuccessful (\textit{right}) reasoning chains.
}
\label{fig:traj_qwen}
\end{minipage}%
}
\end{figure*}

\begin{figure*}[!t]
\centering
\rotatebox{90}{%
\begin{minipage}{\textheight}
\centering
\includegraphics[width=\textwidth]{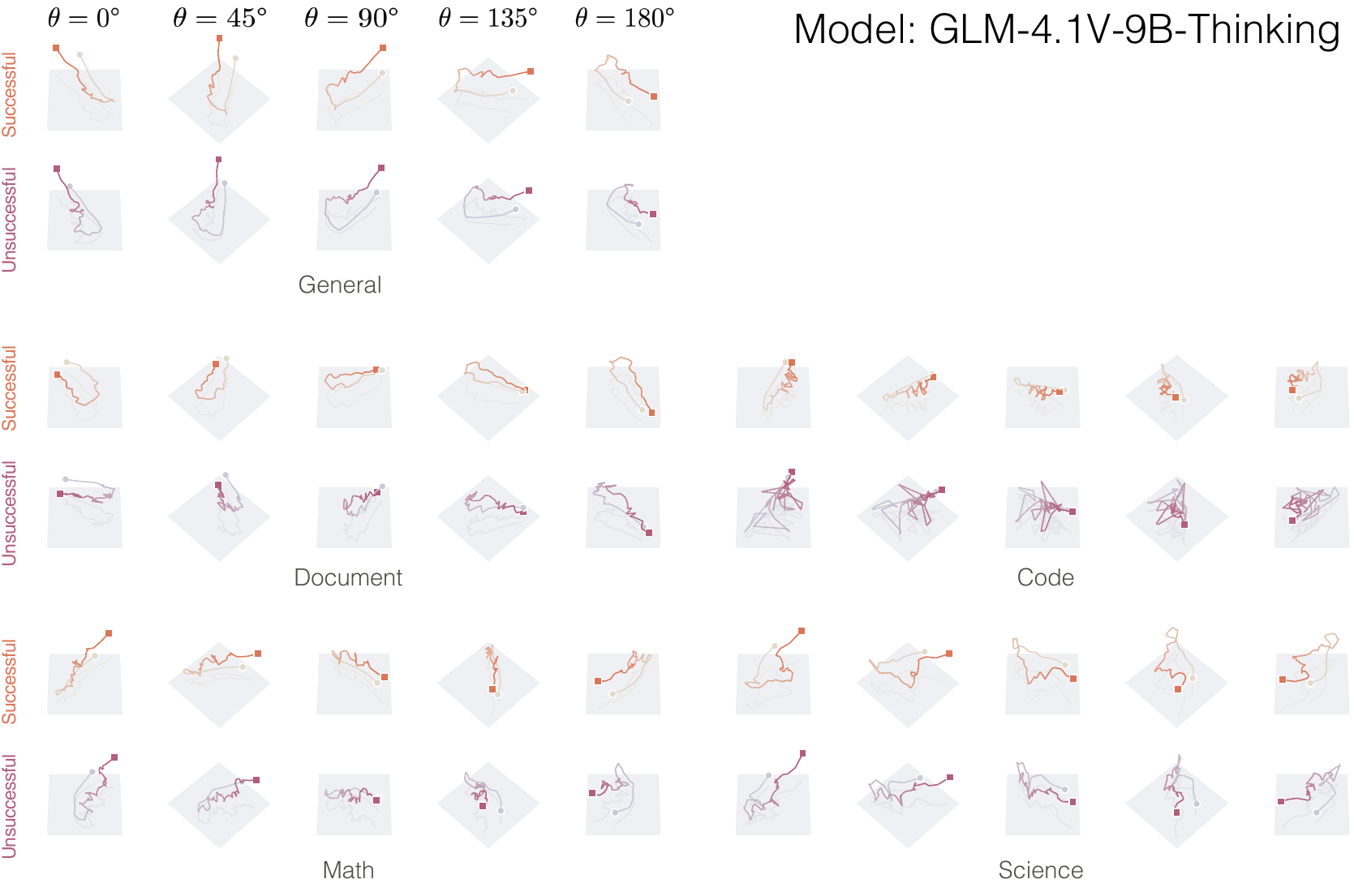}
\captionof{figure}{
    \textbf{Reasoning trajectory dynamics for GLM-4.1V-9B-Thinking.}
    Visual attribution trajectories projected into PCA space for successful (\textit{left}) and unsuccessful (\textit{right}) reasoning chains.
}
\label{fig:traj_glm}
\end{minipage}%
}
\end{figure*}

\begin{figure*}[!t]
\centering
\rotatebox{90}{%
\begin{minipage}{\textheight}
\centering
\includegraphics[width=\textwidth]{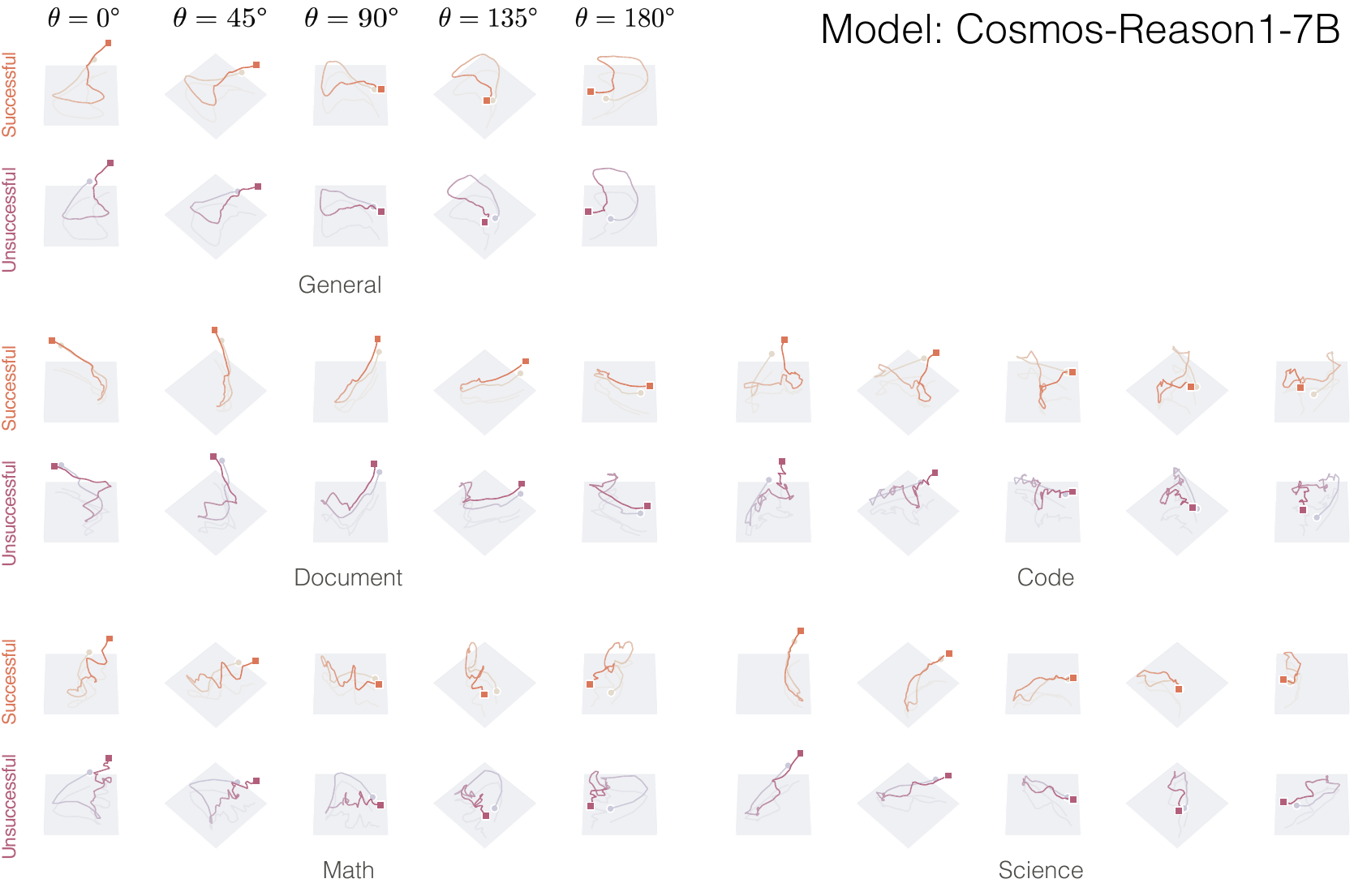}
\captionof{figure}{
    \textbf{Reasoning trajectory dynamics for Cosmos-R1.}
    Visual attribution trajectories projected into PCA space for successful (\textit{left}) and unsuccessful (\textit{right}) reasoning chains.
}
\label{fig:traj_cosmos}
\end{minipage}%
}
\end{figure*}

\begin{figure*}[!t]
\centering
\rotatebox{90}{%
\begin{minipage}{\textheight}
\centering
\includegraphics[width=\textwidth]{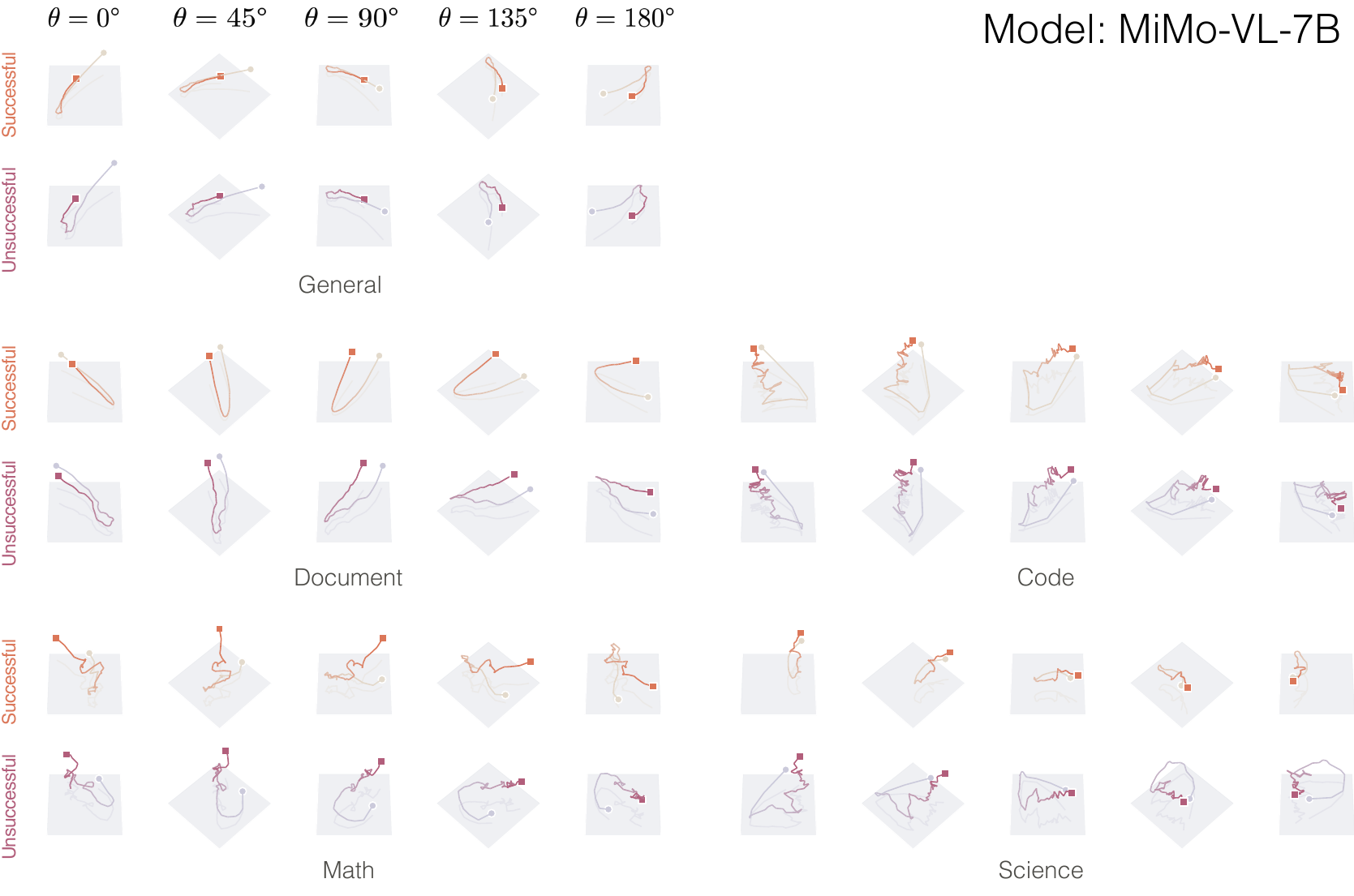}
\captionof{figure}{
    \textbf{Reasoning trajectory dynamics for MiMo-VL-7B.}
    Visual attribution trajectories projected into PCA space for successful (\textit{left}) and unsuccessful (\textit{right}) reasoning chains.
}
\label{fig:traj_mimo}
\end{minipage}%
}
\end{figure*}

\begin{figure*}[t]
    \centering
    \includegraphics[width=\linewidth]{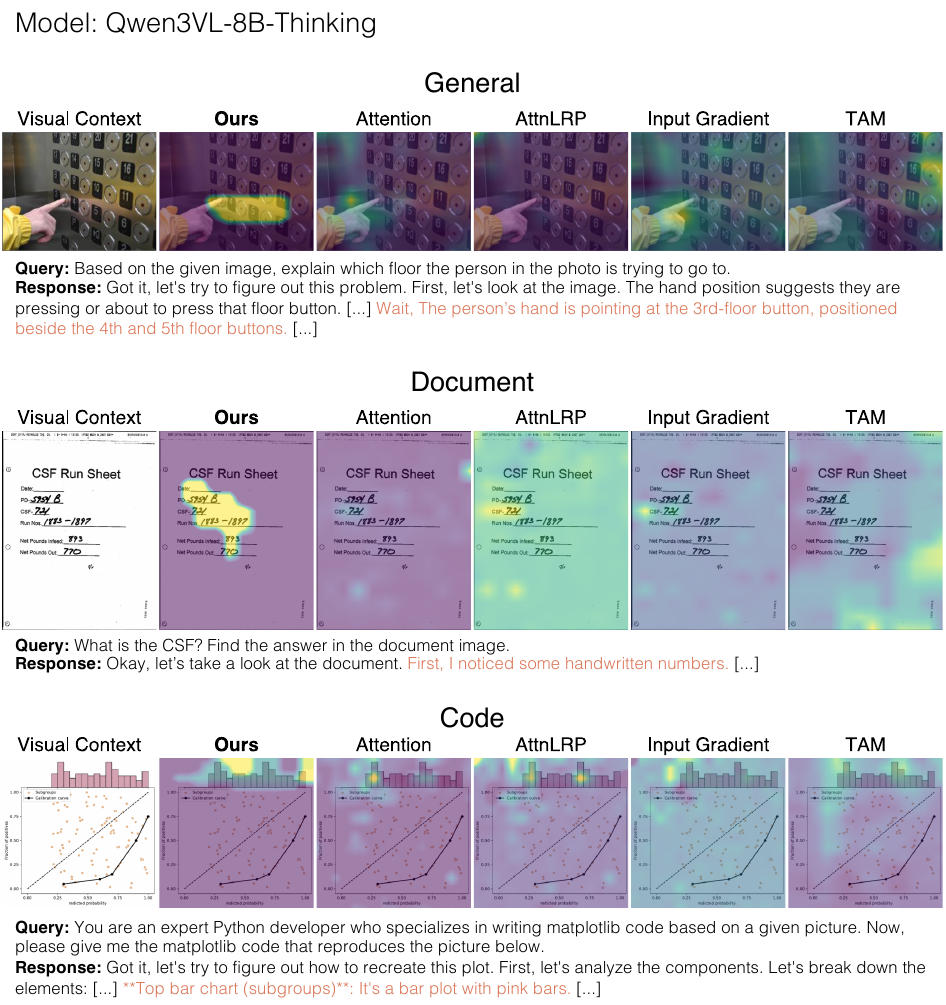}
    \caption{
        \textbf{Additional qualitative results: Qwen3-VL on general/document/code reasoning.}
    }
    \label{fig:qual_qwen_1_gdc}
\end{figure*}

\begin{figure*}[t]
    \centering
    \includegraphics[width=\linewidth]{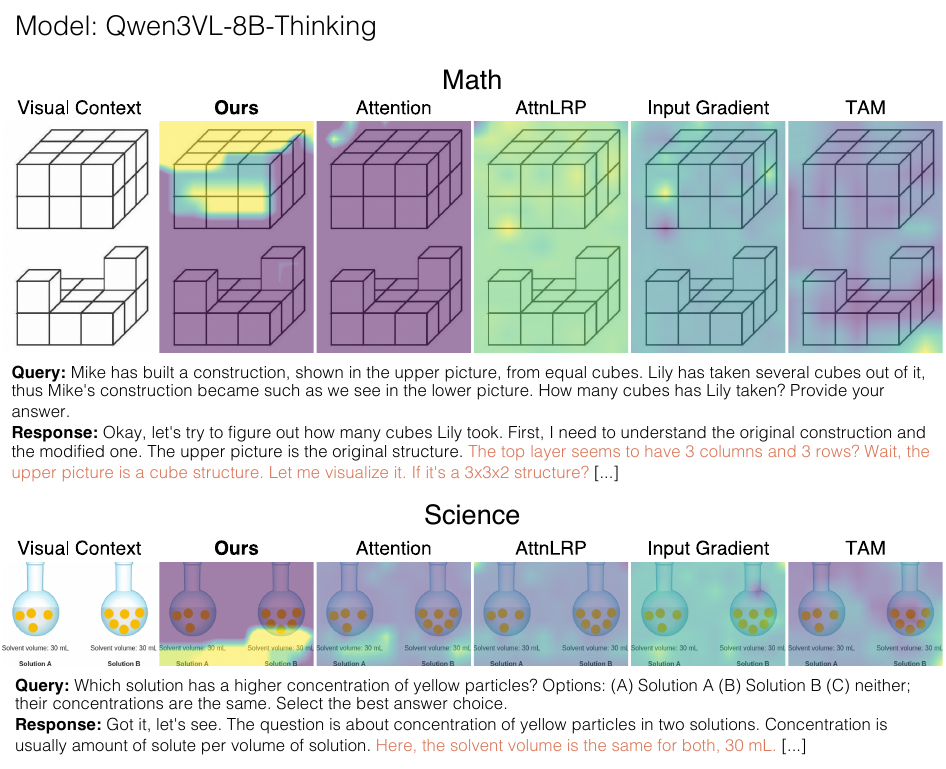}
    \caption{
        \textbf{Additional qualitative results: Qwen3-VL on math/science reasoning.}
    }
    \label{fig:qual_qwen_1_ms}
\end{figure*}

\begin{figure*}[t]
    \centering
    \includegraphics[width=\linewidth]{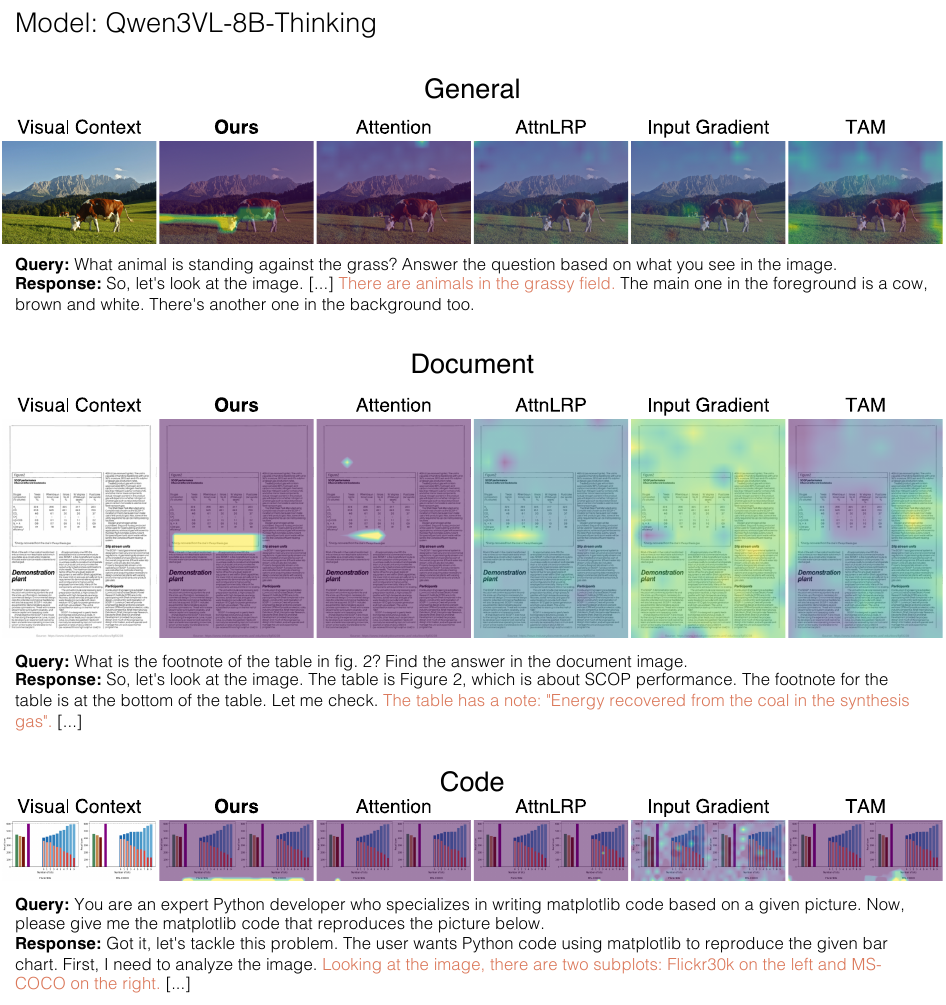}
    \caption{
        \textbf{Additional qualitative results: Qwen3-VL on general/document/code reasoning.}
    }
    \label{fig:qual_qwen_2_gdc}
\end{figure*}

\begin{figure*}[t]
    \centering
    \includegraphics[width=\linewidth]{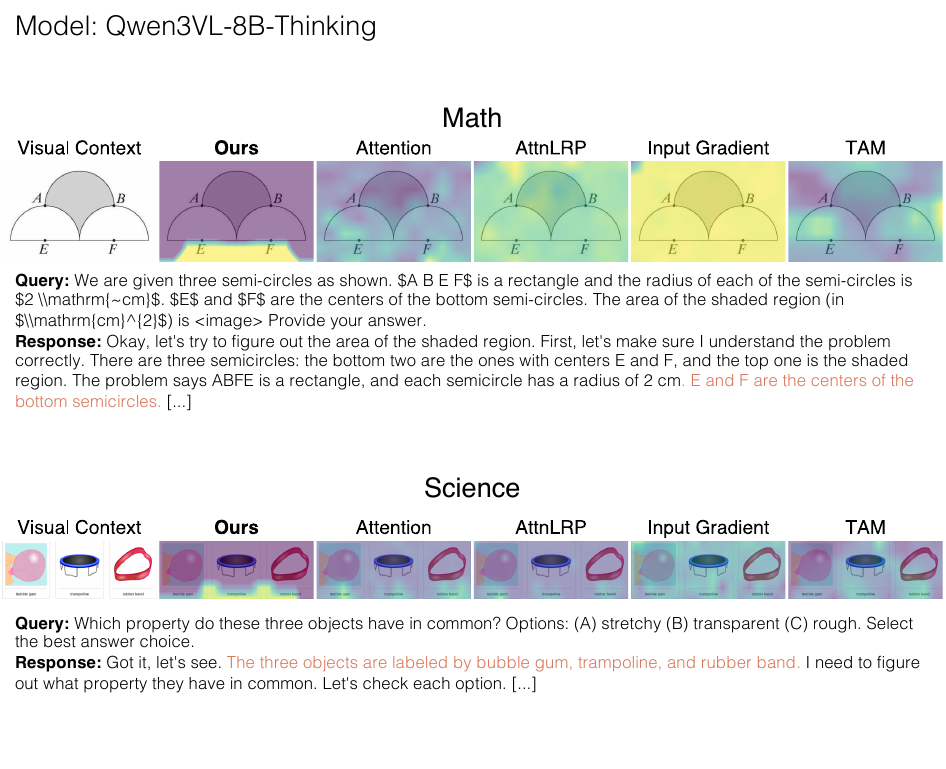}
    \caption{
        \textbf{Additional qualitative results: Qwen3-VL on math/science reasoning.}
    }
    \label{fig:qual_qwen_2_ms}
\end{figure*}

\begin{figure*}[t]
    \centering
    \includegraphics[width=\linewidth]{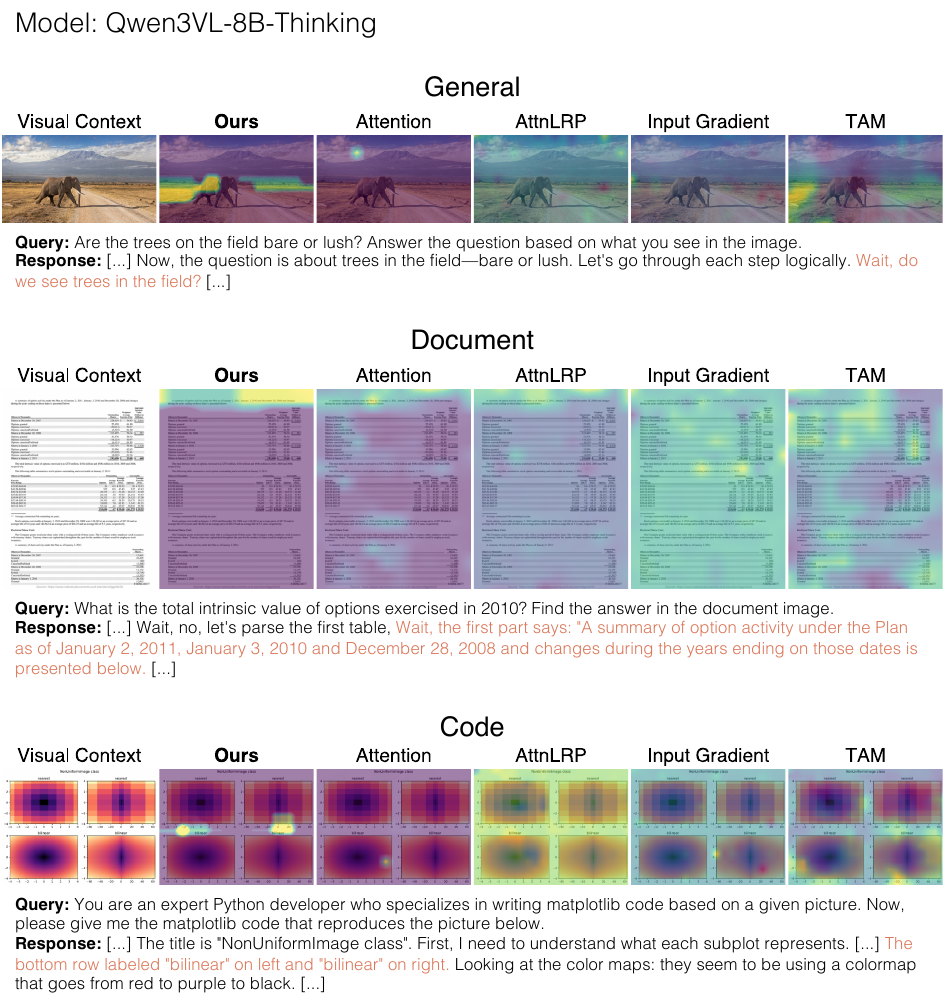}
    \caption{
        \textbf{Additional qualitative results: Qwen3-VL on general/document/code reasoning.}
    }
    \label{fig:qual_qwen_3_gdc}
\end{figure*}

\begin{figure*}[t]
    \centering
    \includegraphics[width=\linewidth]{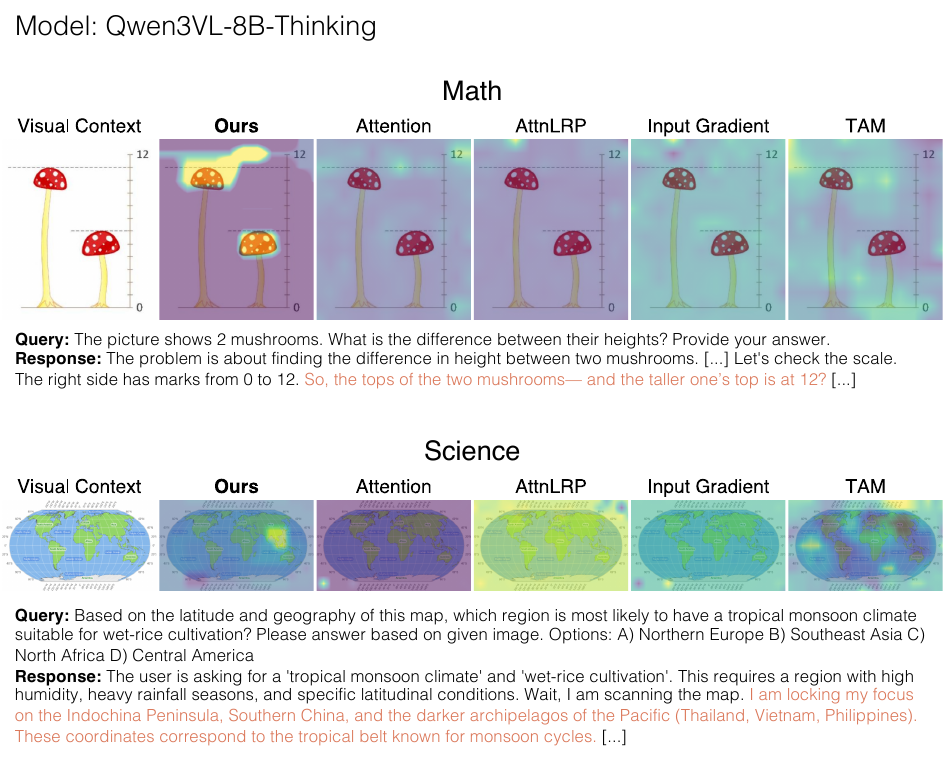}
    \caption{
        \textbf{Additional qualitative results: Qwen3-VL on math/science reasoning.}
    }
    \label{fig:qual_qwen_3_ms}
\end{figure*}

\begin{figure*}[t]
    \centering
    \includegraphics[width=\linewidth]{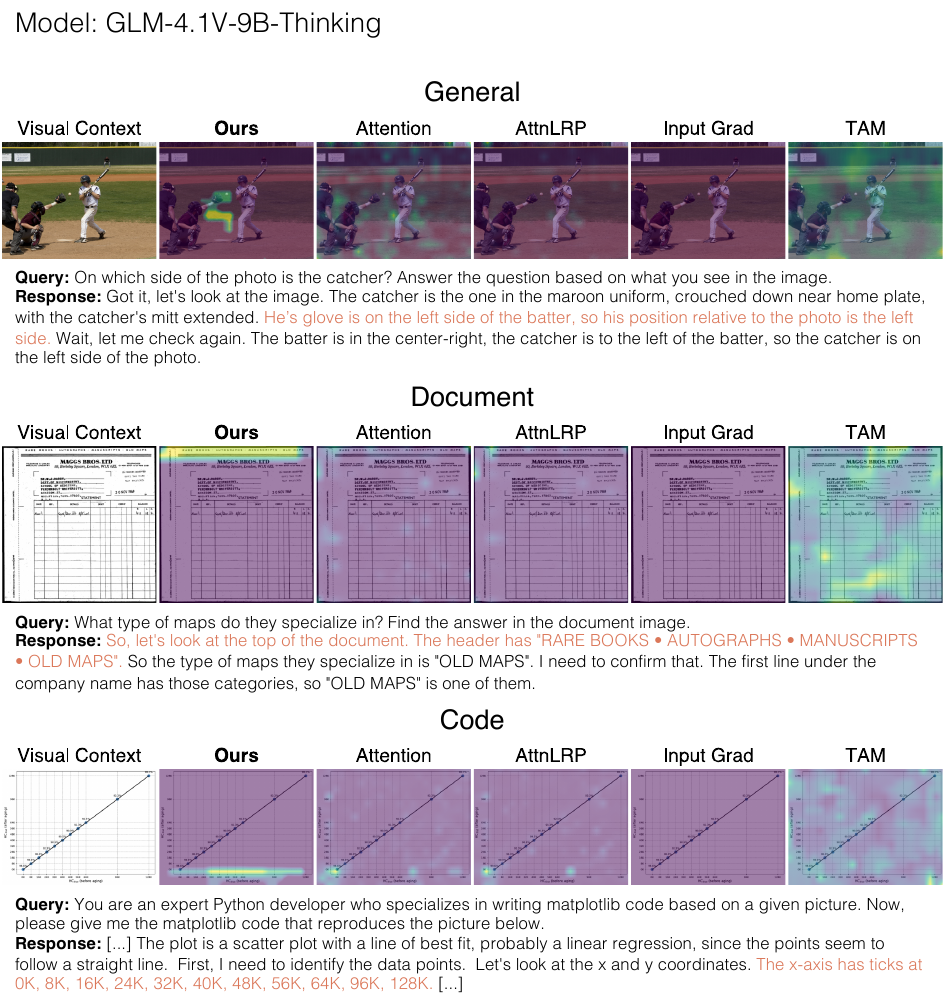}
    \caption{
        \textbf{Additional qualitative results: GLM-4.1V on general/document/code reasoning.}
    }
    \label{fig:qual_glm_1_gdc}
\end{figure*}

\begin{figure*}[t]
    \centering
    \includegraphics[width=\linewidth]{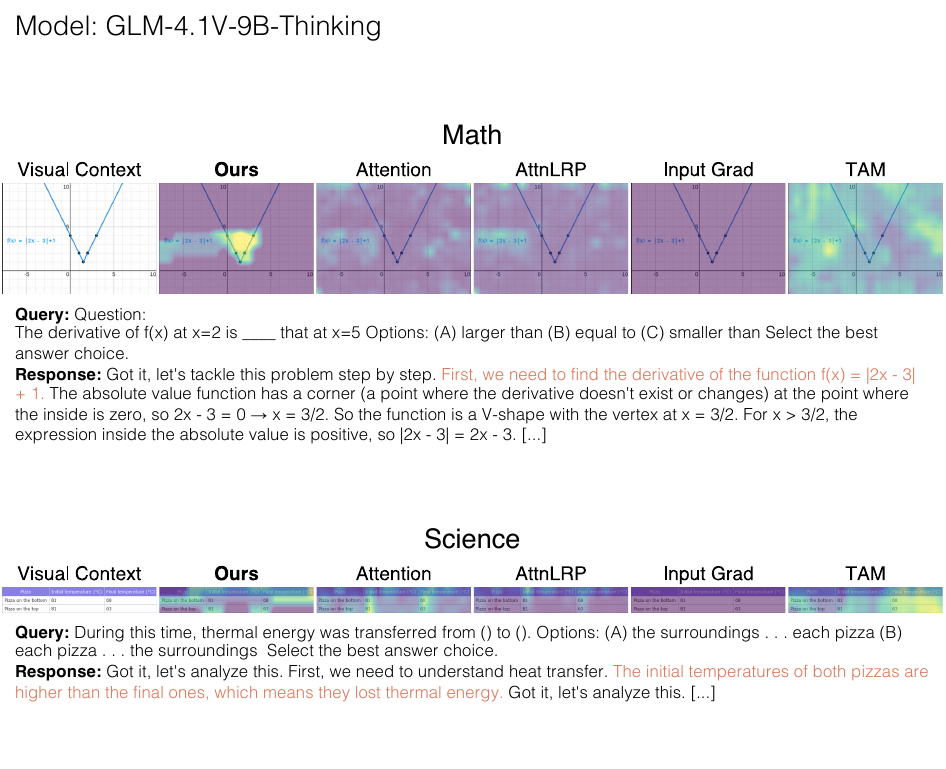}
    \caption{
        \textbf{Additional qualitative results: GLM-4.1V on math/science reasoning.}
    }
    \label{fig:qual_glm_1_ms}
\end{figure*}

\begin{figure*}[t]
    \centering
    \includegraphics[width=\linewidth]{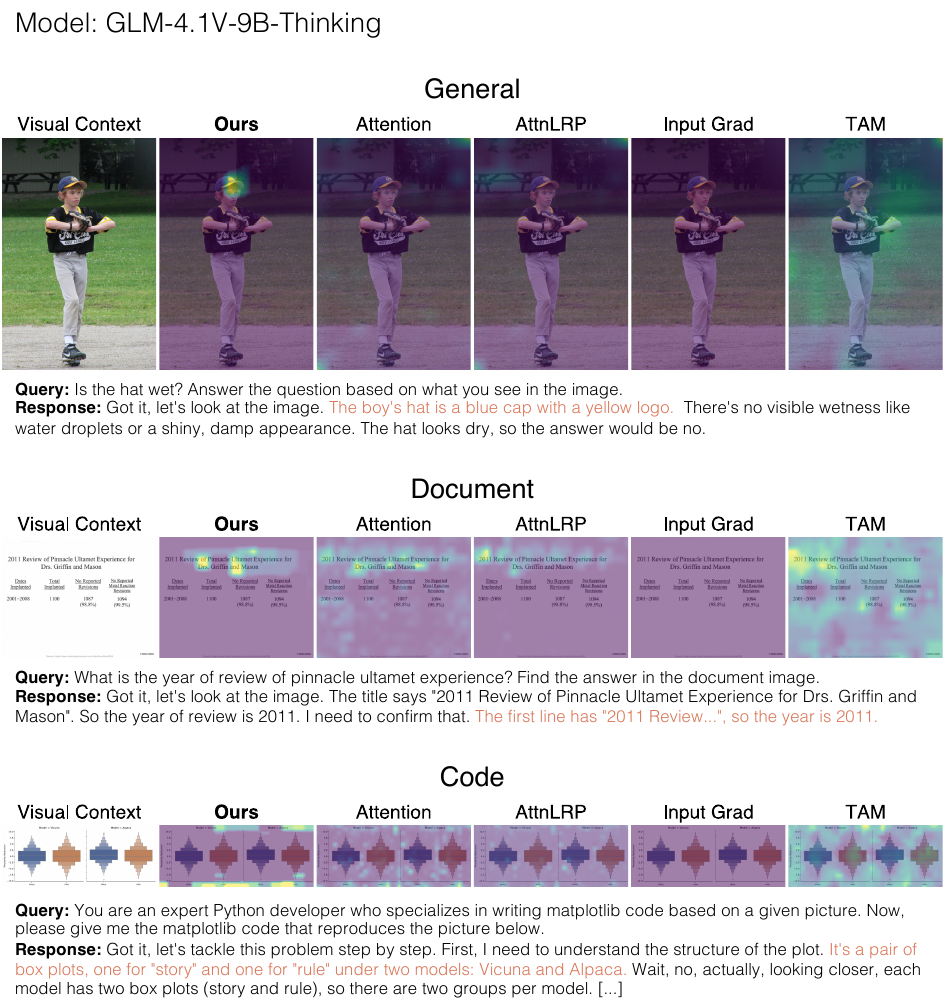}
    \caption{
        \textbf{Additional qualitative results: GLM-4.1V on general/document/code reasoning.}
    }
    \label{fig:qual_glm_2_gdc}
\end{figure*}

\begin{figure*}[t]
    \centering
    \includegraphics[width=\linewidth]{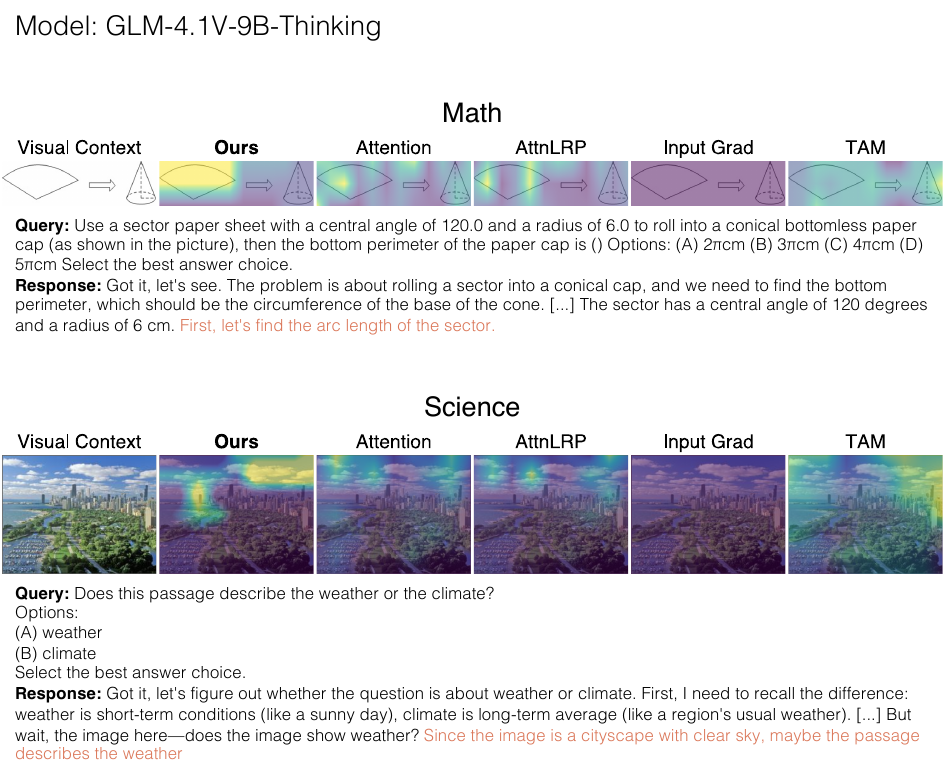}
    \caption{
        \textbf{Additional qualitative results: GLM-4.1V on math/science reasoning.}
    }
    \label{fig:qual_glm_2_ms}
\end{figure*}

\begin{figure*}[t]
    \centering
    \includegraphics[width=\linewidth]{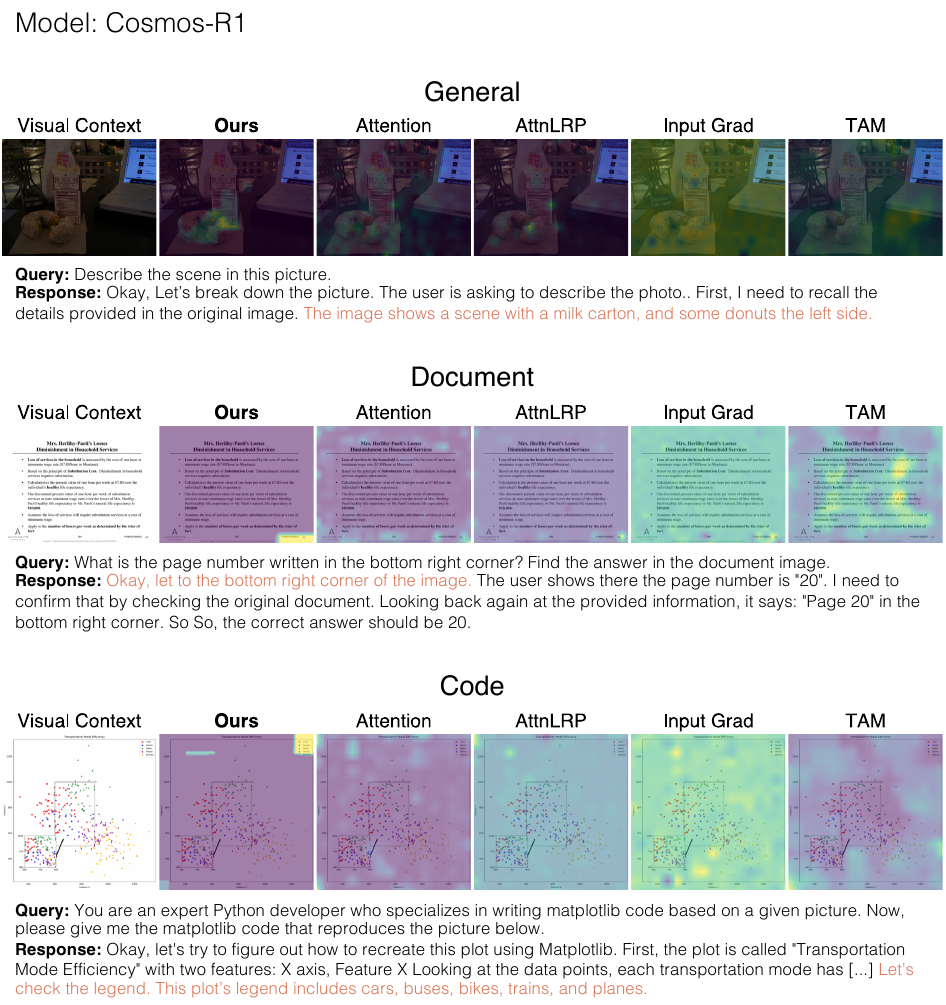}
    \caption{
        \textbf{Additional qualitative results: Cosmos-R1 on general/document/code reasoning.}
    }
    \label{fig:qual_cosmos_1_gdc}
\end{figure*}

\begin{figure*}[t]
    \centering
    \includegraphics[width=\linewidth]{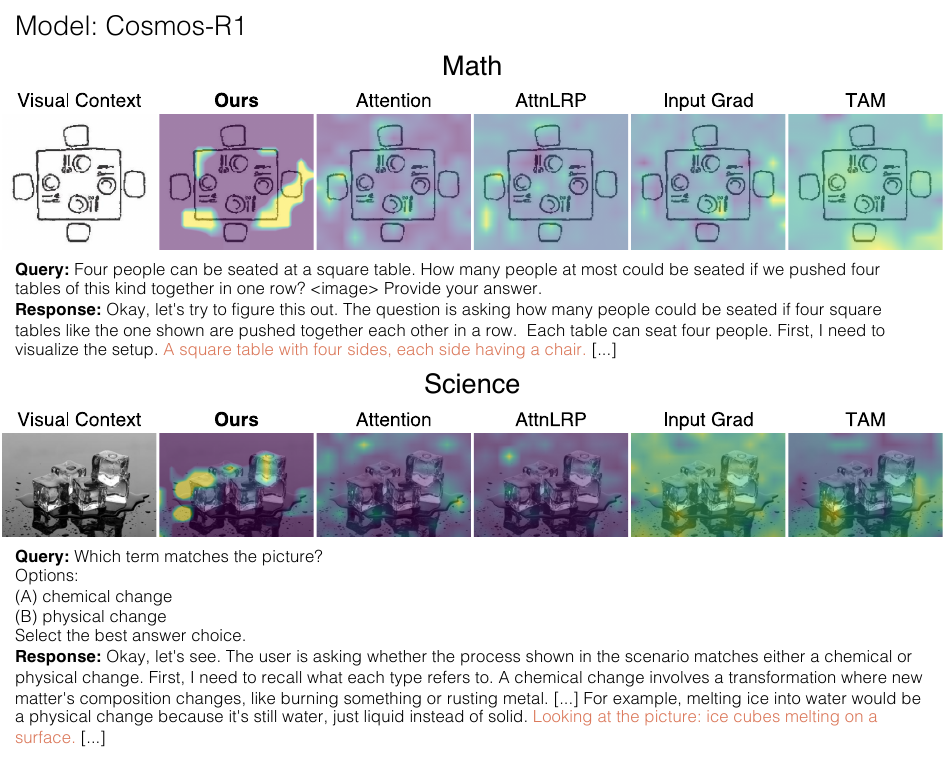}
    \caption{
        \textbf{Additional qualitative results: Cosmos-R1 on math/science reasoning.}
    }
    \label{fig:qual_cosmos_1_ms}
\end{figure*}

\begin{figure*}[t]
    \centering
    \includegraphics[width=\linewidth]{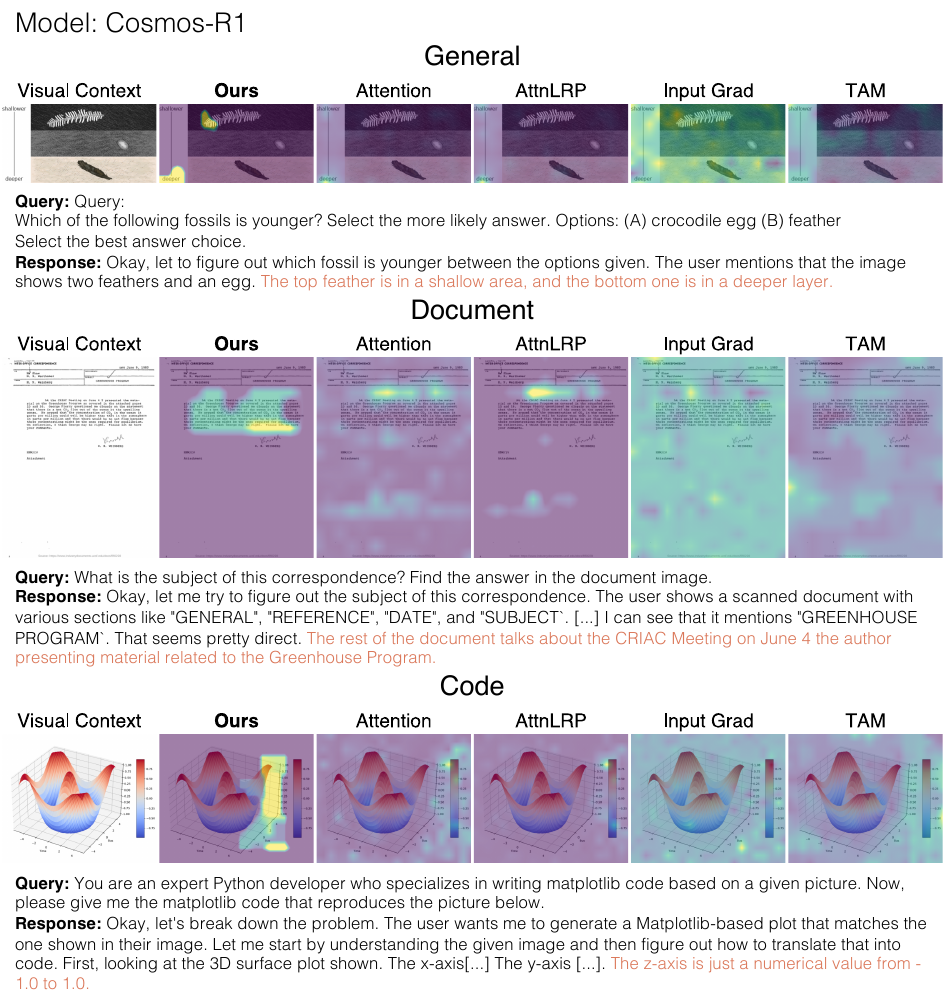}
    \caption{
        \textbf{Additional qualitative results: Cosmos-R1 on general/document/code reasoning.}
    }
    \label{fig:qual_cosmos_2_gdc}
\end{figure*}

\begin{figure*}[t]
    \centering
    \includegraphics[width=\linewidth]{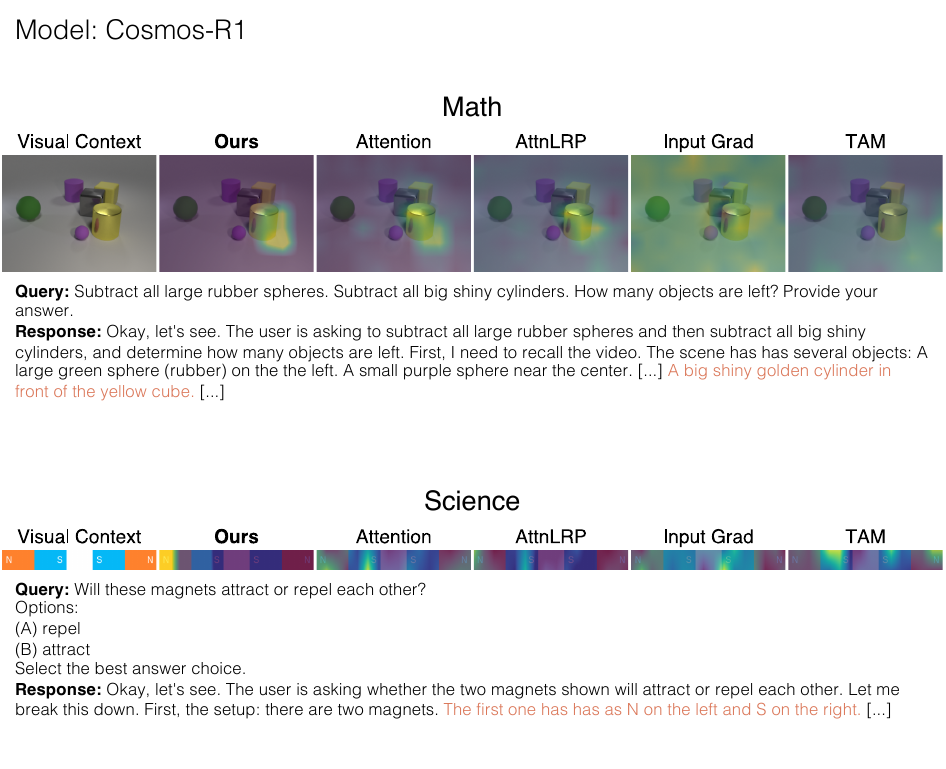}
    \caption{
        \textbf{Additional qualitative results: Cosmos-R1 on math/science reasoning.}
    }
    \label{fig:qual_cosmos_2_ms}
\end{figure*}

\begin{figure*}[t]
    \centering
    \includegraphics[width=\linewidth]{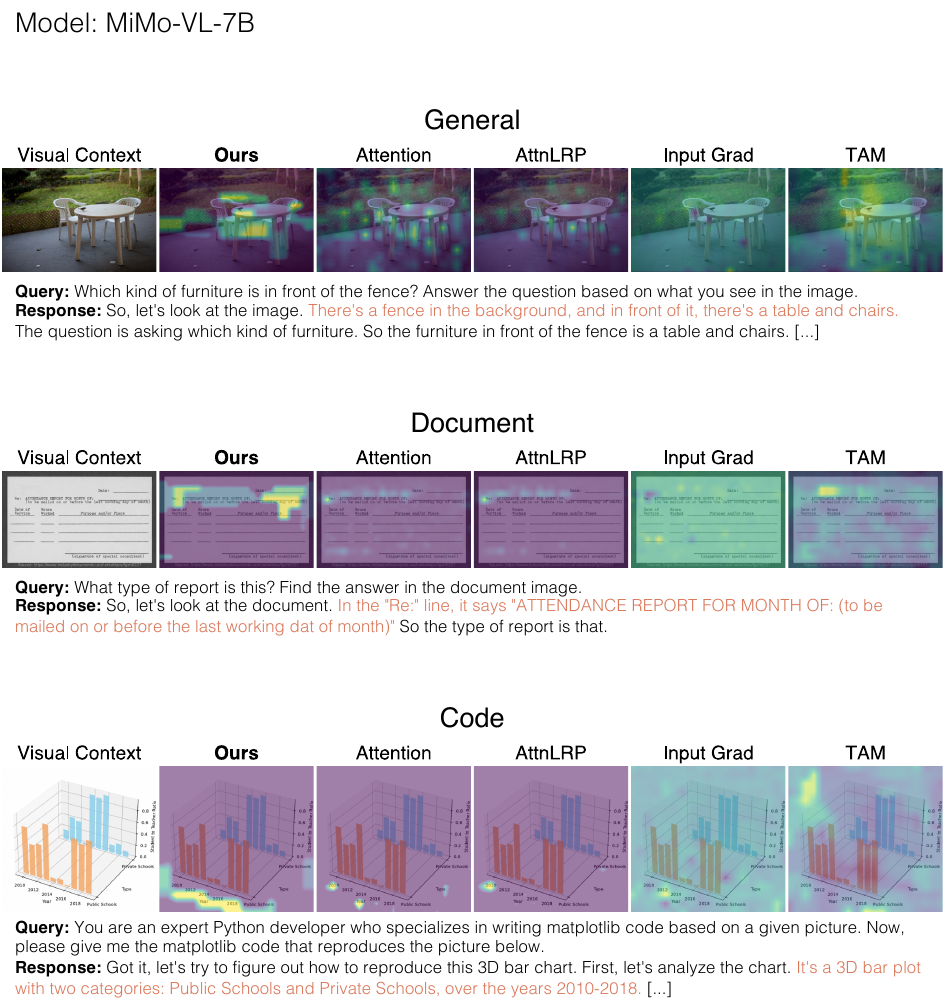}
    \caption{
        \textbf{Additional qualitative results: MiMo-VL on general/document/code reasoning.}
    }
    \label{fig:qual_mimo_1_gdc}
\end{figure*}

\begin{figure*}[t]
    \centering
    \includegraphics[width=\linewidth]{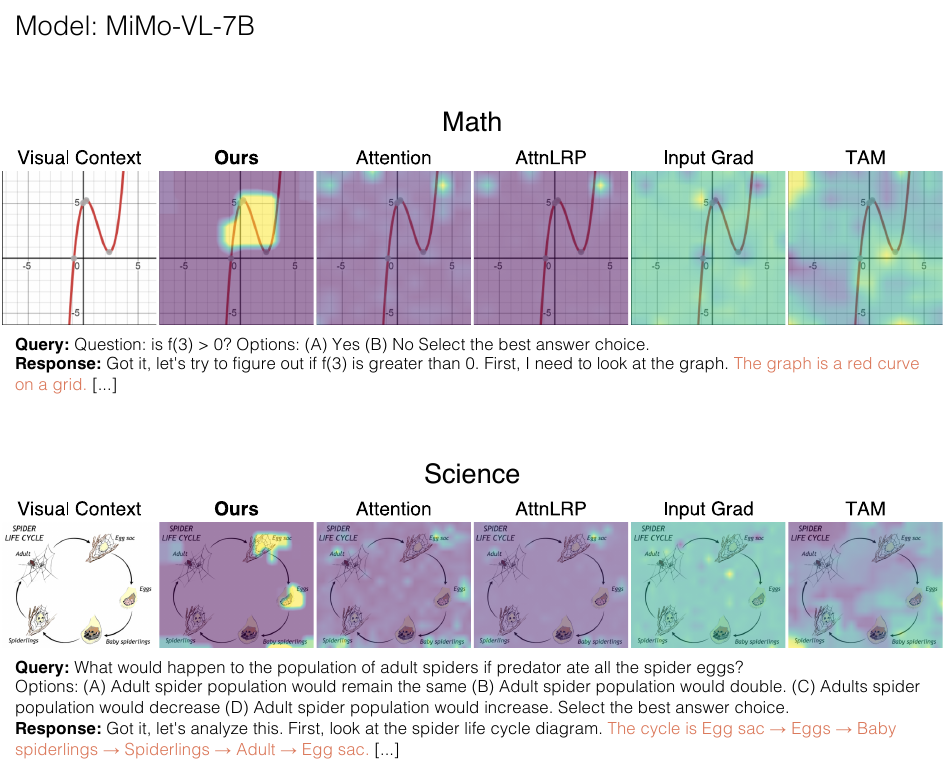}
    \caption{
        \textbf{Additional qualitative results: MiMo-VL on math/science reasoning.}
    }
    \label{fig:qual_mimo_1_ms}
\end{figure*}

\begin{figure*}[t]
    \centering
    \includegraphics[width=\linewidth]{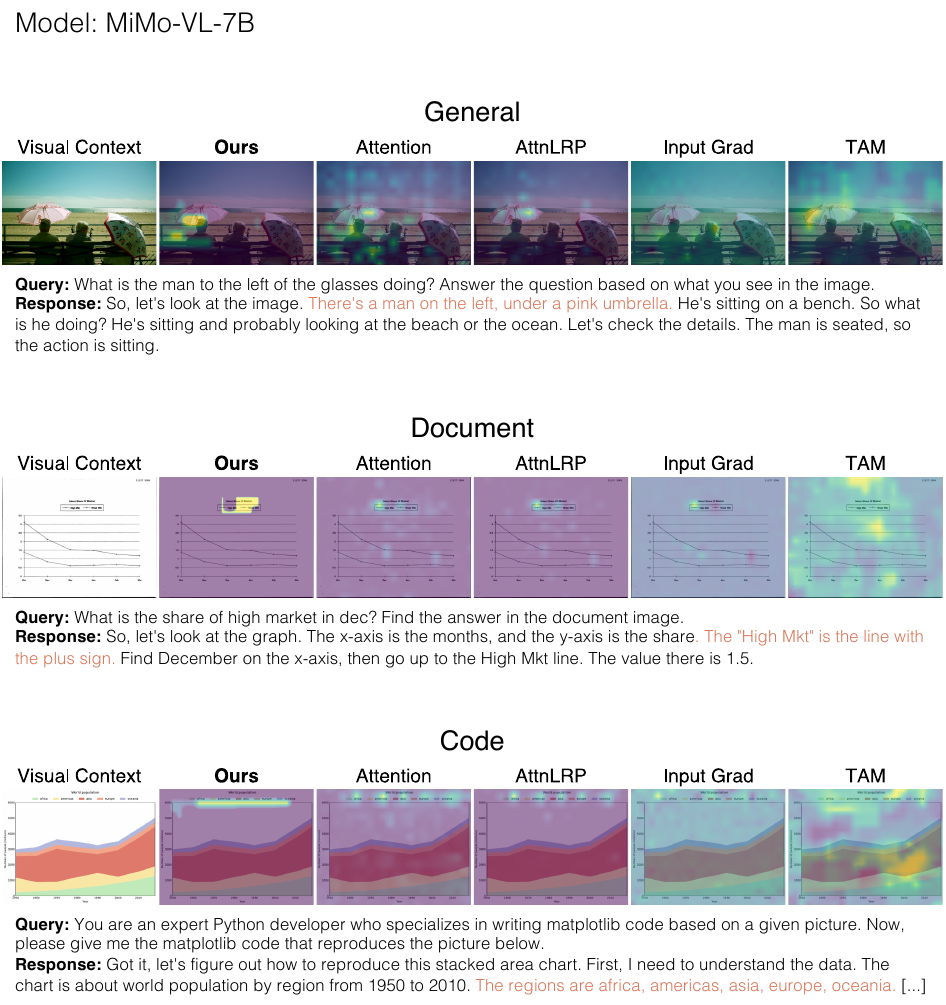}
    \caption{
        \textbf{Additional qualitative results: MiMo-VL on general/document/code reasoning.}
    }
    \label{fig:qual_mimo_2_gdc}
\end{figure*}

\begin{figure*}[t]
    \centering
    \includegraphics[width=\linewidth]{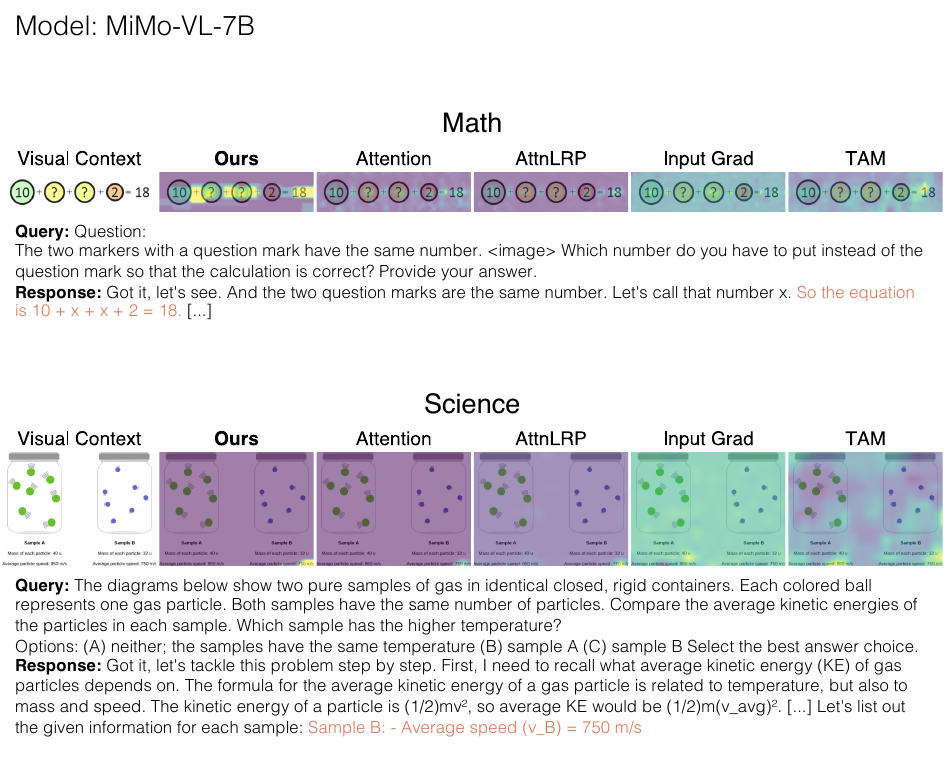}
    \caption{
        \textbf{Additional qualitative results: MiMo-VL on math/science reasoning.}
    }
    \label{fig:qual_mimo_2_ms}
\end{figure*}

\end{document}